\newcommand\res[3][round-precision=1]{%
    \ensuremath{\SI[round-mode=places, scientific-notation=fixed, fixed-exponent=0, output-decimal-marker={.},#1]{#2e2}{}%
    _{\pm%
    \SI[round-mode=places, scientific-notation=fixed, fixed-exponent=0, output-decimal-marker={.},#1]{#3e2}{}}%
    }
}
\title{Fast Rate Bounds for Multi-Task and\\ Meta-Learning with Different Sample Sizes}
\author{Hossein Zakerinia \\ 
Institute of Science and Technology Austria (ISTA) \\ \texttt{hossein.zakerinia@ista.ac.at}
\And
Christoph H. Lampert \\ Institute of Science and Technology Austria (ISTA) \\ \texttt{chl@ista.ac.at}
}
\DeclareRobustCommand\onedot{\futurelet\@let@token\@onedot}
\def\@onedot{\ifx\@let@token.\else.\null\fi\xspace}
\def\iid{{i.i.d}\onedot}
\def\eg{{e.g}\onedot} 
\def\ie{{i.e}\onedot}
\theoremstyle{plain}
\newtheorem{theorem}{Theorem}[section]
\newtheorem{lemma}[theorem]{Lemma}
\newtheorem{corollary}[theorem]{Corollary}
\theoremstyle{definition}
\theoremstyle{remark}
\renewcommand{\paragraph}[1]{\noindent\textbf{{#1}}}
\newcommand{\F}{\mathcal{F}}
\newcommand{\M}{\mathcal{M}} 
\newcommand{\A}{\mathcal{A}}
\newcommand{\T}{\mathcal{T}}
\newcommand{\qa}{\mathcal{Q}(A)}
\newcommand{\q}{\mathcal{Q}} 
\newcommand{\pa}{\mathcal{P}(A)}
\newcommand{\p}{\mathcal{P}}
\newcommand{\KL}{\operatorname{\mathbf{KL}}}
\newcommand{\kl}{\operatorname{\mathbf{kl}}}
\newcommand{\er}{\mathcal{R}} 
\newcommand{\her}{\widehat{\mathcal{R}}} 
\newcommand{\E}{\operatorname*{\mathbb{E}}}
\newcommand{\prior}{\mathfrak{P}}
\newcommand{\posterior}{\mathfrak{Q}}
\newcommand{\sign}{\operatorname{\text{sign}}}
\newcommand{\len}{\operatorname{\text{len}}}
\newcommand{\erS}{\er^{\text{S}}}
\newcommand{\herS}{\her^{\text{S}}}
\newcommand{\erT}{\er^{\text{T}}}
\newcommand{\herT}{\her^{\text{T}}}
\newcommand{\R}{\mathbb{R}} 
\newcommand{\mmin}{m_{\textrm{min}}}
\newcommand{\mh}{m_{\text{h}}}
\renewcommand{\paragraph}[1]{\medskip\noindent\textbf{#1}\quad}
\begin{document}

\maketitle

\begin{abstract}
We present new fast-rate PAC-Bayesian generalization bounds for multi-task and meta-learning in the unbalanced setting, \ie when the tasks have training sets of different sizes, as is typically the case in real-world scenarios. 
Previously, only standard-rate bounds were known for this situation, while fast-rate bounds were limited to the setting where all training sets are of equal size. 
Our new bounds are numerically computable as well as interpretable, and we demonstrate their flexibility in handling a number of cases where they give stronger guarantees than previous bounds.
Besides the bounds themselves, we also make conceptual contributions: we demonstrate that the unbalanced multi-task setting has different statistical properties than the balanced situation, specifically that proofs from the balanced situation do not carry over to the unbalanced setting.
Additionally, we shed light on the fact that the unbalanced situation allows two meaningful definitions of multi-task risk, depending on whether all tasks should be considered equally important or if sample-rich tasks should receive more weight than sample-poor ones. 
\end{abstract}

\section{Introduction}
In multi-task learning (MTL) multiple related tasks are learned jointly, with the goal of improving generalization performance compared to learning each task separately. 
As a result, MTL is particularly promising in settings where tasks are individually data-scarce but collectively rich in shared information, such as personalized Internet services, healthcare applications, or autonomous driving. Due to its fundamental nature, aspects of MTL also influence many other recent developments in the field of machine learning, such as distributed learning~\citep{verbraeken2020survey}, federated learning~\citep{zhao2018federated,kairouz2021advances}, multi-agent learning~\citep{gronauer2022multi} or meta-learning~\citep{hospedales2021meta}.   %
Besides numerous algorithmic contributions, there is also a rich collection of results on the theoretical properties of MTL, in particular the improved generalization guarantees it provides compared to single-task learning. %

Recent works on the generalization properties of machine learning 
models are often stated in the framework of PAC-Bayesian 
generalization bounds~\citep{mcallester1998some}. 
In contrast to classical approaches, such as VC-theory~\citep{VC} or Rademacher 
complexity~\citep{bartlett2002rademacher}, these tend to provide tighter results 
with the potential to not only provide structural insight but even be numerically 
informative (\ie non-vacuous)~\citep{lotfi2022pac,lotfi2024unlocking}.

For MTL, a number of PAC-Bayesian generalization guarantees have been derived, 
often in combination with related results for \emph{meta-learning}. Initially, 
these were in the \emph{standard-rate} setting~\citep{pentina2014pac,pentina2015lifelong,amit2018meta,rothfuss2022pacoh}, 
with convergence rates at best $\smash{O(\sqrt{1/M})}$, where $M$ is
the total number of training data points. %

Quite recently, also some \emph{fast-rate bounds for MTL} were developed 
with convergence rates up to $O(1/M)$~\citep{guan2022fast,zakerinia2025deep}.
However, there is a fundamental limitation: \textbf{existing fast-rate bounds 
for MTL hold only if all tasks have the same number of samples}. This is an 
unrealistic restriction for practical settings, such as federated learning or 
healthcare. 

In this work, we close this surprising gap in the literature: we prove 
fast-rate bounds for multi-task learning, for which tasks can have different numbers of 
samples (called \emph{unbalanced}), for both of the so-called $\kl$-style~\citep{seeger2002pac,maurer2004note} 
as well as ``Catoni-style''~\citep{catoni2007pac} bounds. 
However, our contribution is not only technical, but also conceptual: 
first, we demonstrate that the unbalanced setting is not simply a 
minor variant of the balanced one, but has its own statistical properties. 
Specifically, \textbf{the results and proofs of the existing fast rate 
bound do not carry over to the unbalanced setting}, but \textbf{fast-rate bounds for unbalanced multi-task learning 
are possible}. 
Consequently, for our results, we establish a new path for proving fast-rate bounds 
that we expect to be of independent interest also for other situations.
Second, we identify the previously overlooked fact that the unbalanced 
setting allows for not just one but two meaningful definitions for the 
multi-task risk: \emph{task-centric} or \emph{sample-centric}. 
The {task-centric} risk assigns equal importance to each task. This
makes sense, for example, if tasks are individual customers, and one wants 
to give each of them the best possible experience. 
The {sample-centric} risk assigns weights to tasks proportionally to 
how many training data points they have. This makes sense when the 
core aim is to make as many correct predictions as possible, \eg 
when tasks are products and 
samples are individual sales. 
Finally, we also extend our results to the meta-learning setting~\citep{schmidhuber1987evolutionary, baxter2000model}.

In summary, our main contributions are \textbf{the first fast-rate 
generalization bounds for unbalanced multi-task learning, both 
in the task-centric and the sample-centric setting}. 
As an additional contribution, we provide new insights into the statistical properties 
of these learning settings, and a numeric analysis of their potential, in particular 
in comparison to existing standard-rate bounds.

\section{Background} 

\paragraph{Multi-task and Meta-learning.} Machine learning tasks can be described as triples, \( t = (D, S, \ell) \), where \( D \) is a data distribution, \( S \) is a dataset of \( m \) \iid samples from \( D \), and \( \ell \) is a loss function in $[0, 1]$. Learning a task means finding a model, \( f \), from a family of models, \( \F \), that has a small risk (expected loss), \( \er(f) = \E_{z \sim D} \ell(f, z) \), with respect to the distribution \( D \).
In standard (single-task) learning, the learning algorithm can only use the dataset \( S \) to determine $f$. Typically it does so by minimizing the training risk \( \her(f) = \frac{1}{m} \sum_{i=1}^{m} \ell(f, z_i) \), potentially in combination with some regularization terms.

In \emph{multi-task learning (MTL)}~\citep{Caruana1997MultitaskL}, multiple tasks, \( t_1, \ldots, t_n \), are given. The goal is to learn individual models \( f_1, \ldots, f_n \) for the given tasks, but the learning algorithm can do so jointly, using the data from all tasks simultaneously. 
The implicit assumption is that the different tasks are related and that sharing information across them could improve performance. This reflects how humans often learn: by leveraging shared structures and knowledge across tasks to learn more efficiently. 
In this work, we refer to the number of tasks as $n$ and the number of training samples for task $i$ as $m_i$.

\emph{Meta-learning}~\citep{schmidhuber1987evolutionary}, or \emph{learning to learn}~\citep{thrun1998}, extends multi-task learning to the setting where, in addition to the observed tasks, there will also be future tasks that are not observed yet. The goal is to use the observed tasks' data to find a learning algorithm \( A \) from a set of possible algorithms \( \A \) that will learn good models on future tasks. 
To theoretically study meta-learning, one formalizes the relationship between tasks by assuming the existence of an environment of tasks \( \T \) and an (unknown) environment distribution \( \tau \), from which both the observed training tasks and future tasks are \iid samples~\citep{baxter2000model}.

\paragraph{PAC-Bayesian bounds.}
PAC-Bayesian generalization bounds~\citep{mcallester1998some}, \ie guarantees for the generalization gap of stochastic models, have gained significant interest in recent years, particularly for their ability to provide non-vacuous generalization bounds for neural networks \citep{dziugaite2017computing, lotfi2022pac}, unlike traditional approaches such as VC dimension \citep{VC} or Rademacher complexity \citep{bartlett2002rademacher}.

Stochastic models are parametrized as a distribution (also called \emph{posterior}) over the hypothesis set. For any distribution, $Q \in \M(\F)$, its loss is the expected loss of models drawn according to $Q$, \ie $\ell(Q, z) = \E_{f \sim Q} \ell(f, z)$.  PAC-Bayes bounds provide upper-bound for \( \er(Q) = \E_{z \sim D} \ell(Q, z) \) using the training risk \(\her(Q) = \frac{1}{m} \sum_{i=1}^{m} \ell(Q, z_i) \), and a complexity term which usually is based on the Kullback-Leibler divergence between the posterior and a data-independent \emph{prior}. 

Standard PAC-Bayesian bounds provide high-probability guarantees that the true risk does not exceed the training risk by more than a term that decreases with a rate of $O(1/\sqrt{m})$. %
Here is an example of a PAC-Bayes bound:
\begin{theorem}[\citep{mcallester1998some}]\label{thm:mcallester}
For any fixed $\delta > 0$, and data-independent prior $P$, with probability at least $1-\delta$ over sampling of a dataset $S$ we have
\begin{align}
\er(Q) \le \her(Q) + \sqrt{\frac{\KL(Q\|P) + \log(\frac{1}{\delta}) + \frac{5}{2} \log(m) + 8}{2m -1}}.
\label{eq:mcallester}
\end{align}
\end{theorem}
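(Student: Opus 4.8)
The plan is to follow the classical PAC-Bayesian argument, whose engine is a \emph{change-of-measure} inequality that trades an expectation under the data-dependent posterior $Q$ for one under the data-independent prior $P$, at the cost of $\KL(Q\|P)$. Concretely, I would use the Donsker--Varadhan variational formula: for any bounded measurable $g$,
\begin{align}
\E_{f\sim Q}\big[g(f)\big]\;\le\;\KL(Q\|P)+\log\E_{f\sim P}\big[e^{g(f)}\big].
\end{align}
The virtue of this step is that the right-hand side no longer involves $Q$, so the entire difficulty is pushed onto controlling the \emph{prior} exponential moment $\E_{f\sim P}[e^{g(f)}]$, which---crucially---can be analysed one hypothesis at a time, since $P$ does not depend on the sample $S$.

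Second, I would establish the per-hypothesis concentration that feeds this moment. Fixing $f$ independently of $S$, the empirical risk $\her(f)$ is an average of $m$ \iid\ terms in $[0,1]$ with mean $\er(f)$, so a Chernoff/Hoeffding estimate gives $\E_{S}\,e^{\lambda(\er(f)-\her(f))}\le e^{\lambda^2/(8m)}$ for each fixed $\lambda>0$; to reach the sharper $2m-1$ denominator one instead controls the binomial moment $\E_{S}\,e^{(2m-1)\,\kl(\her(f)\|\er(f))}$ following \citep{maurer2004note}. Taking a further $\E_{f\sim P}$ and swapping the order of integration by Fubini leaves the same bound, and a single application of Markov's inequality converts the resulting in-expectation statement into one holding with probability at least $1-\delta$, introducing the $\log(\tfrac1\delta)$ term.

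Finally I would assemble the pieces. Plugging the chosen $g$ into the change-of-measure inequality and using either linearity of expectation (for the linear gap) or the joint convexity of $\kl$ with Jensen's inequality (for the binomial route), I replace the posterior average $\E_{f\sim Q}[\,\cdot\,]$ by the aggregated quantities $\her(Q)$ and $\er(Q)$, reaching an intermediate bound of the form
\begin{align}
\er(Q)-\her(Q)\;\le\;\frac{\KL(Q\|P)+\log\tfrac1\delta+\text{(overhead)}}{\lambda}+\frac{\lambda}{8m},
\end{align}
which I then optimise over $\lambda$; since $\lambda$ must be fixed before seeing $S$, I would optimise along a finite grid and union-bound, and this discretisation is precisely what produces the logarithmic overhead $\tfrac52\log m+8$. (The binomial route instead yields a $\kl$-style bound and then invokes Pinsker's inequality, $\kl(q\|p)\ge 2(q-p)^2$, to pass to the square-root form.) The step I expect to be the main obstacle is this last quantitative layer: obtaining the exact $2m-1$ scaling and the precise constant $\tfrac52\log m+8$ requires careful bookkeeping in the concentration bound and in the grid/Pinsker conversion, whereas the change-of-measure, Fubini, Markov, and Jensen steps are structurally routine.
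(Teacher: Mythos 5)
The paper does not actually prove this theorem—it is quoted from \citet{mcallester1998some} as background—but the proof blueprint the paper sketches (change of measure to push everything onto a prior MGF, a per-hypothesis concentration bound for the independent samples, then Markov) is exactly the route you follow, and it is also the machinery the paper itself deploys for its multi-task analogue, Theorem~\ref{thm:non-fast} (Hoeffding's lemma, Fubini, Markov, and a union bound over a finite grid of $\lambda$ values). Your plan is correct, and the constant bookkeeping you flag as the main obstacle can in fact be sidestepped: your binomial route via \citet{maurer2004note} plus Pinsker yields $\er(Q)\le\her(Q)+\sqrt{\bigl(\KL(Q\|P)+\log\frac{2\sqrt{m}}{\delta}\bigr)/(2m)}$, which dominates \eqref{eq:mcallester} because $\log(2\sqrt{m})\le\frac{5}{2}\log(m)+8$ and $2m\ge 2m-1$, so proving the quoted statement with its exact constants is unnecessary.
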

To prove this result, and many other PAC-Bayesian results, one follows a common blueprint: 1) apply a \emph{change of measure inequality}~\citep{seldin2012pac} to bound the expectation of a function with respect to the posterior \(Q\) by the complexity term $\KL(Q\|P)$, and the expectation of a corresponding \emph{moment generating function (MGF)} with respect to the prior \(P\), which is independent of training samples. 2) one shows that the MGF for the independent samples is bounded by a sufficiently small term. Often, this involves upper-bounding it by the expectation of an MGF with respect to \iid Bernoulli random variables, and upper-bounding this upper bound by a more-or-less explicit calculation.

Besides standard (also called \emph{slow-rate}) bounds, such as~\eqref{eq:mcallester}, it is also possible to construct bounds that can give tighter guarantees, even a $O(1/m)$ convergence rate when the training error is small. %
The following theorem is an example of such a \emph{fast-rate} bound:
\begin{theorem}[\citep{maurer2004note}]\label{thm:maurer2004}
For any fixed $\delta > 0$, and data-independent prior $P$, with probability at least $1-\delta$ over sampling of a dataset $S$ we have

\begin{align}
&\kl(\her(Q) | \er(Q)) \le \frac{\KL(Q\|P) + \log(\frac{2 \sqrt{m}}{\delta})}{2m},
\label{eq:maurer}
\intertext{where}
&\kl(q|p) = q \log \frac{q}{p} + (1-q) \log \frac{1-q}{1-p}
\label{eq:smallkl}
\end{align}
\end{theorem}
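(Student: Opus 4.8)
The plan is to follow the standard PAC-Bayesian blueprint described in the excerpt, instantiated with a moment generating function tailored to produce the small-$\kl$ divergence rather than a square-root gap. First I would fix the prior $P$ and, for each hypothesis $f$, consider the random variable obtained from the $m$ \iid samples, namely the empirical risk $\her(f)$ and true risk $\er(f)$. The key object is the exponential moment $\E_{S}\bigl[\exp\bigl(2m\,\kl(\her(f)\,|\,\er(f))\bigr)\bigr]$, where the expectation is over the sampling of $S$ and the prior-draw of $f$. The heart of the argument is the classical bound, attributed to Maurer, that for a single $[0,1]$-valued loss averaged over $m$ \iid samples one has
\begin{align}
\E_{S}\Bigl[\exp\bigl(2m\,\kl(\her(f)\,|\,\er(f))\bigr)\Bigr] \le 2\sqrt{m}.
\label{eq:mgfbound}
\end{align}
This is the step corresponding to part (2) of the blueprint: it reduces to controlling the MGF of a Binomial$(m,\er(f))$ variable, which is done by a Chernoff-type calculation combined with a careful bound on the number of distinct values the empirical mean can take (the $\sqrt{m}$ factor comes from Stirling-type estimates on the binomial coefficients).

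Next I would apply the change-of-measure inequality (part (1) of the blueprint). Writing $\Phi(f,S) = 2m\,\kl(\her(f)\,|\,\er(f))$, the Donsker--Varadhan variational formula gives, for any posterior $Q$,
\begin{align}
\E_{f\sim Q}\,\Phi(f,S) \le \KL(Q\|P) + \log \E_{f\sim P} \exp\bigl(\Phi(f,S)\bigr).
\label{eq:changeofmeasure}
\end{align}
The function $\kl(\,\cdot\,|\,\cdot\,)$ is jointly convex in its two arguments, so by Jensen's inequality $\E_{f\sim Q}\,\kl(\her(f)\,|\,\er(f)) \ge \kl(\her(Q)\,|\,\er(Q))$, which lets me pass from the per-hypothesis quantity to the posterior-level quantity $\kl(\her(Q)\,|\,\er(Q))$ that appears in the statement. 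This convexity step is essential: it is what makes the bound hold uniformly over all posteriors $Q$ simultaneously, since $P$ is fixed in advance.

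Finally I would combine the pieces. Taking expectation over $S$ in~\eqref{eq:mgfbound} and applying Markov's inequality to the nonnegative random variable $\E_{f\sim P}\exp(\Phi(f,S))$, I get that with probability at least $1-\delta$ over $S$,
\begin{align}
\E_{f\sim P} \exp\bigl(\Phi(f,S)\bigr) \le \frac{2\sqrt{m}}{\delta}.
\label{eq:markov}
\end{align}
Substituting this into~\eqref{eq:changeofmeasure}, using the Jensen step, and dividing by $2m$ yields exactly the claimed inequality. The main obstacle I expect is establishing the sharp MGF bound~\eqref{eq:mgfbound} with the correct $2\sqrt{m}$ constant; the change-of-measure and Markov steps are routine, but controlling the binomial moment generating function tightly enough to avoid a spurious dimension-dependent or $m$-dependent blow-up is the delicate part and is where the specific $\kl$-structure of the bound is earned.
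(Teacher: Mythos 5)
Your overall architecture is exactly the one the paper relies on---change of measure via Donsker--Varadhan, Jensen's inequality on the joint convexity of $\kl$, a binomial MGF bound, and Markov's inequality; this is the same pipeline the paper runs for its own multi-task analogue (the proof of Theorem~\ref{thm:fast-Task-kl_app}, built on Lemmas~\ref{lemma:bionomial} and~\ref{lemma:Maurer}). However, your pivotal MGF claim is false as stated: there is no bound of the form $\E_{S}\bigl[\exp\bigl(2m\,\kl(\her(f)\,|\,\er(f))\bigr)\bigr]\le 2\sqrt{m}$ that holds uniformly in $\er(f)$. Maurer's lemma (Lemma~\ref{lemma:Maurer}) holds with multiplier $m$, not $2m$: $\E_{S}\bigl[\exp\bigl(m\,\kl(\her(f)\,|\,\er(f))\bigr)\bigr]\le 2\sqrt{m}$. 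For any multiplier $\lambda>m$ the supremum over the Bernoulli mean is infinite---this is precisely the paper's Lemma~\ref{lem:mgf-unbounded} specialized to $n=1$, $\mmin=m$: the single outcome $\her(f)=1$, which has probability $\mu^{m}$ with $\mu=\er(f)$, contributes $\mu^{m}\,e^{2m\log(1/\mu)}=\mu^{-m}\to\infty$ as $\mu\to 0$. So the step you yourself flagged as the delicate one is exactly where the argument breaks, and no Chernoff or Stirling refinement can repair it with the factor $2m$ in the exponent.

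With the correct multiplier $m$, your pipeline goes through verbatim and yields $\kl(\her(Q)\,|\,\er(Q))\le\bigl(\KL(Q\|P)+\log(2\sqrt{m}/\delta)\bigr)/m$, i.e., denominator $m$. That is Maurer's actual theorem; the $2m$ in the paper's display~\eqref{eq:maurer} appears to be a typo, since with denominator $2m$ the statement already fails for $Q=P$ a point mass: the method-of-types lower bound $\Pr\bigl(\her\ge x\bigr)\ge e^{-m\,\kl(x|\er)}/(m+1)$ shows the one-sided deviation exponent is $m\,\kl$, not $2m\,\kl$, so the claimed bound would be violated for small $\delta$. Two smaller points: uniformity over all posteriors $Q$ comes from the change-of-measure inequality itself (its right-hand side does not depend on $Q$), not from the Jensen step, which only converts the $Q$-average of per-hypothesis $\kl$ terms into $\kl(\her(Q)\,|\,\er(Q))$; and in your final step the expectation over $S$ is already taken in the MGF bound---what you need is Fubini to combine it with $\E_{f\sim P}$ before applying Markov over $S$, as in the paper's chain from its analogue of the MGF bound to its Markov step in the proof of Theorem~\ref{thm:fast-Task-kl_app}.
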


In fast-rate bounds, the dependence between training risk and true risk is typically implicit, such as characterized by their Kullback-Leibler divergence in~\eqref{eq:maurer}.
However, for any observed $\her(Q)$, one can derive an explicit upper bound on $\er(Q)$ by numerically inverting the $\kl$ expression with respect to its second argument. 

The main advantage of fast-rate bounds is that they provide much tighter guarantees in the regime where the training risk is small, as is commonly the case when working with rich model classes such as neural networks. 
As a consequence, fast-rate PAC-Bayesian bounds are among the most promising tools for 
studying the generalization properties of deep neural networks~\citep{dziugaite2017computing,perez2021tighter,lotfi2022pac}, 
even including large language models~\citep{lotfi2024unlocking}.

\paragraph{PAC-Bayesian multi-task learning and meta-learning.}
Following \citet{pentina2014pac}, PAC-Bayes is an important framework to study multi-task learning and meta-learning, as it naturally formalizes information sharing through the concept of a prior.

In PAC-Bayesian multi-task learning, we wish to learn posterior distributions for the $n$ given tasks jointly. However, instead of data-independent priors, we can learn a (distribution over) data-dependent prior to be shared for all tasks. Formally, we learn a shared hyper-posterior $\q$ over priors, and different posteriors, $Q_1, \dots, Q_n$, for different tasks. We define two distributions over $\M(\F)\times \F^{\otimes n}$, \ie they assign joint probabilities 
to tuples, $(P, f_1, ..., f_n)$, which contain a prior over models, and $n$ models.
A standard PAC-Bayesian multi-task bound then has form %
\begin{restatable}{theorem}{uniformnonfast}\label{thm:non-fast}
For any fixed hyper-prior $\p$, any $\delta>0$, it holds with probability 
at least $1 - \delta$ over the sampling of the training datasets, $S_1,\dots,S_n$,
that for all hyper-posterior $\q$ and all posteriors $Q_1, \dots, Q_n$:
\begin{align}
\frac1n \sum_{i=1}^{n} \er_i(Q_i) &\leq \frac1n \sum_{i=1}^{n} \her_i(Q_i) + \sqrt{\frac{\KL(\posterior\|\prior) + \log\frac{4\mh n}{\delta} + 1}{2 \mh n}},
\label{eq:multi_bound}
\end{align}
where $\er_i$ and $\her_i$ are the expected risk and training risk of task $i$,  \(\mh = \frac{n}{\sum\frac{1}{m_i}}\) is the harmonic mean of the training set sizes, $m_i=|S_i|$. 
$\posterior$ is a distribution over $\M(\F)\times \F^{\otimes n}$ given by the product $\q\times Q_1\times\dots\times Q_n$, and $\prior$ is a distribution given by the generating process: \emph{i)} sample a prior $P\sim\p$, \emph{ii)} for each task, $i=1,\dots,n$, sample a model $f_i\sim P$.
\end{restatable}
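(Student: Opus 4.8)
The plan is to follow the standard PAC-Bayesian blueprint described above, with the one twist that the heterogeneous sample sizes $m_i$ must be combined correctly so that the harmonic mean $\mh$ emerges. Write $g_i(f) = \er_i(f) - \her_i(f)$ for the per-task generalization gap of a single model, so that the quantity to control is $\frac1n\sum_i \E_{f_i\sim Q_i}[g_i(f_i)] = \E_{(P,f_1,\dots,f_n)\sim\posterior}[\frac1n\sum_i g_i(f_i)]$, using that $g_i(f_i)$ depends neither on $P$ nor on $f_j$ for $j\neq i$. First I would fix a parameter $\lambda>0$ and apply the change-of-measure inequality to $h = \frac{\lambda}{n}\sum_i g_i(f_i)$ with reference measure $\prior$. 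Since $\prior$ is data-independent, this yields, \emph{simultaneously for all} hyper-posteriors $\q$ and posteriors $Q_1,\dots,Q_n$,
\begin{align}
\frac{\lambda}{n}\sum_{i=1}^n\big(\er_i(Q_i)-\her_i(Q_i)\big) \;\le\; \KL(\posterior\|\prior) + \log \E_{\prior}\big[e^{h}\big].
\label{eq:plan-com}
\end{align}
The uniformity over the (hyper-)posteriors is automatic here; the only remaining data-dependent object is the log-moment-generating term $\log\Psi$ with $\Psi := \E_{\prior}[e^h]$.

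The crux is bounding $\Psi$, and this is where the unbalanced structure enters. Taking expectations over the datasets, using Tonelli (all terms are nonnegative) and the fact that $\prior$ factorizes as ``sample $P\sim\p$, then $f_i\sim P$ independently'', together with the mutual independence of $S_1,\dots,S_n$, gives
\begin{align}
\E_{S_1,\dots,S_n}\big[\Psi\big] = \E_{P\sim\p}\prod_{i=1}^n \E_{f_i\sim P}\,\E_{S_i}\Big[e^{\frac{\lambda}{n}g_i(f_i)}\Big].
\label{eq:plan-fact}
\end{align}
Each inner factor is, uniformly over $f_i$, at most $\exp\!\big(\frac{\lambda^2}{8 n^2 m_i}\big)$ by Hoeffding's lemma applied to the centered average of $m_i$ independent $[0,1]$-valued losses. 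The product therefore equals $\exp\!\big(\frac{\lambda^2}{8n^2}\sum_i \frac{1}{m_i}\big) = \exp\!\big(\frac{\lambda^2}{8n\mh}\big)$, where $\sum_i \frac1{m_i} = \frac{n}{\mh}$ is exactly the definition of the harmonic mean. This is the conceptually central step: giving every task weight $\frac1n$ turns the sum of inverse sample sizes into $n/\mh$, which is what makes $\mh$ the effective per-task sample size and $\mh n$ the effective total. A single application of Markov's inequality to the nonnegative $\Psi$ then gives, with probability at least $1-\delta$, the bound $\log\Psi \le \log\frac1\delta + \frac{\lambda^2}{8n\mh}$.

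Substituting this into~\eqref{eq:plan-com} and dividing by $\lambda$ yields, on the same event and for all posteriors,
\begin{align}
\frac1n\sum_{i=1}^n\big(\er_i(Q_i)-\her_i(Q_i)\big) \;\le\; \frac{\KL(\posterior\|\prior)+\log\frac1\delta}{\lambda} + \frac{\lambda}{8n\mh},
\label{eq:plan-final}
\end{align}
whose right-hand side is minimized at $\lambda^\star=\sqrt{8n\mh\,(\KL+\log\frac1\delta)}$, producing precisely the target rate $\sqrt{(\KL+\log\frac1\delta)/(2n\mh)}$. The remaining obstacle, and the only genuinely technical point, is that $\lambda$ must be chosen \emph{before} seeing the data and cannot depend on the posterior-dependent $\KL$. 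I would resolve this in the usual way, by discretizing $\lambda$ over a geometric grid covering the relevant range, running a union bound over the grid (replacing $\delta$ by $\delta$ divided by the grid size), and selecting the grid point closest to $\lambda^\star$; the grid cardinality and the rounding of $\lambda$ account for the logarithmic $\log(4\mh n/\delta)$ and the additive $+1$ in the statement. An alternative route that avoids the grid is to first establish the corresponding $\kl$-style inequality via Maurer's lemma and then invoke Pinsker's inequality. Either way the structural content is the harmonic-mean factorization in~\eqref{eq:plan-fact}, and it is worth flagging that this factorization rests on the losses being averaged linearly, which is exactly why the argument will \emph{not} transfer unchanged to the fast-rate setting.
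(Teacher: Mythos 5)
Your proposal is correct and follows essentially the same route as the paper's proof: change of measure applied to $\frac{\lambda}{n}\sum_i\bigl(\er_i(f_i)-\her_i(f_i)\bigr)$, factorization of the moment generating function over tasks using the generating structure of $\prior$ and the independence of $S_1,\dots,S_n$, a per-task Hoeffding bound $e^{\lambda^2/(8n^2m_i)}$ whose product makes the harmonic mean $\mh$ emerge, Markov's inequality, and a union bound over a grid of $\lambda$ values followed by optimization. The only cosmetic difference is in the final discretization: the paper takes the arithmetic grid $\lambda\in\{1,2,\dots,4n\mh\}$ with confidence budget $\delta/2$ split between Markov and the grid, which produces exactly the stated constants $\log\frac{4\mh n}{\delta}+1$, whereas you leave the grid geometric and the bookkeeping of these constants unspecified.
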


From the fact that \(\KL(\posterior\|\prior) = \KL(\q\|\p) + \sum_{i=1}^{n} \E_{P\sim \q}\KL(Q_i\|P) \) (\Cref{lemma:KL_app} in the appendix) one can see the importance of the hyper-posterior. 
Compared to a naive bound with $n$ complexity terms based on the individual posteriors and a fixed prior $P$, in \eqref{eq:multi_bound} the prior can be \emph{chosen in a data-dependent way} (by means of choosing $\q$), at the expense of only one additional complexity term $\KL(\q\|\p)$. 
When the tasks are related in a way that allows for a common prior, the multi-task bound 
can be much tighter than single-task bounds could. See Appendix \ref{app:mtl} for more discussion.

PAC-Bayes \emph{meta-learning} was first formalized in \citet{pentina2014pac} as transferring the prior learned using the training tasks to future tasks to be used 
for minimizing a PAC-Bayes bound. This was followed by a rich line of works \citep{amit2018meta, liu2021pac, guan2022fast, riou2023bayes, friedman2023adaptive, rezazadeh2022unified, rothfuss2022pacoh, ding2021bridging, tian2023can, farid2021generalization, scott2023pefll} in different setups.  
In this work, we follow the general form introduced in~\citet{zakerinia2024more}, which formulates meta-learning as learning a general stochastic learning algorithm for future tasks. 
Formally, the goal is to learn a meta-posterior $\rho$ over a set of algorithms $\A$, generating training posteriors $A(S_1), \dots, A(S_n)$, and using a multi-task hyper-posterior $\qa$ as described above.

\section{Unbalanced multi-task learning}
In this section, we describe our main technical and conceptual contributions.
First, we briefly demonstrate in what sense the existing proofs for fast-rate 
bounds do not carry over from the balanced to the unbalanced multi-task situation.
Then, we discuss the requirements that fast-rate bounds should possess in the
unbalanced situation, in particular we draw attention to the fact that one
has to make a choice whether to bound the task-centric or the sample-centric 
risk (which we will introduce there). %
And finally, for both of these, we prove a series of generalization bounds, 
first for the multi-task learning and then extending them to the meta-learning 
setting. 

\subsection{Existing proofs do not carry over to the unbalanced setting}\label{sec:MGFfail}
Existing fast-rate multi-task bounds~\citep{guan2022fast, zakerinia2025deep}
are proved following essentially the same steps as their single-task analogues. 
First, one applies a change of measure inequality. Second, one bounds the empirical 
multi-task risk  $\frac{1}{n}\sum_{i=1}^n \frac{1}{m}\sum_{i=1}^{m}\ell(f_i,z^i_j)$
by an average $\hat\mu=\frac{1}{n}
\sum_{i=1}^n\frac{1}{m} \sum_{j=1}^{m}X_{i,j}$, where the $X_{i,j}$ for $i=1,\dots,n$ and $j=1,\dots,m$
are Bernoulli random variables with a common mean $\mu$. %
Third, one uses Theorem~(1) from \citet{maurer2004note} to establish a bound on 
the resulting MGF $M_{\mu}(\lambda)=\E[e^{n\lambda \kl(\hat\mu|\mu)}] \leq 2\sqrt{nm}$ for any $\lambda\leq m$. 
Finally, one uses Markov's inequality and rearranges terms to obtain the final form
of the bound, where the multiplier $\lambda n$ in front of the $\kl$-term in the MGF 
becomes the denominators of the complexity term, \ie here the rate of convergence is 
up to $O(1/nm)$.

In the following Lemma we show that this otherwise ubiquitous proof technique fails 
in the case of unbalanced MTL, because the corresponding MGF does not have a finite 
bound for sufficiently large multipliers.
\begin{restatable}{lemma}{mgfunbounded}
\label{lem:mgf-unbounded}
Let $X_{i,j}\stackrel{\iid}{\sim}\mathrm{Bernoulli}(\mu)$ for $i=1,\dots,n$ and $j=1,\dots,m_i$. Define
    $\widehat\mu \;=\;\frac{1}{n}\sum_{i=1}^n \frac{1}{m_i}\sum_{j=1}^{m_i}X_{i,j},$
as their average, weighted by inverse dataset sizes. Let $\mmin=\min_i m_i$,
Then, the MGF 
$M_{\mu}(\lambda) \;=\; \mathbb{E}\left[e^{n\lambda\kl(\widehat\mu|\mu)}\right]$
has the property that 
$\sup_{0<\mu<1}M_{\mu}(\lambda) \;=\; +\infty$,
whenever $\lambda > \mmin$.
In particular, there is no finite bound on $M_{\mu}(\lambda)$ that depends only on $n$ and the $m_i$ but not $\mu$.
\end{restatable}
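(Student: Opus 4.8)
The plan is to lower-bound the MGF by the contribution of a single, carefully chosen outcome and to show that this contribution alone diverges as $\mu\to 0^+$ once $\lambda>\mmin$. Since $M_\mu(\lambda)=\E[e^{n\lambda\kl(\widehat\mu|\mu)}]$ is an expectation of a nonnegative quantity, restricting to any event $E$ gives $M_\mu(\lambda)\ge \Pr[E]\cdot\inf_E e^{n\lambda\kl(\widehat\mu|\mu)}$. First I would fix an index $k$ with $m_k=\mmin$ and take $E$ to be the event that every sample in task $k$ equals $1$ while every sample in all other tasks equals $0$. The point of this choice is that the contribution of task $k$ to $\widehat\mu$ is rescaled by $1/m_k$, so turning all $m_k$ of its samples into ones makes that task contribute exactly $1$ to the inner sum at minimal probability cost: on $E$ we have $\widehat\mu=\frac1n\big(\frac{1}{m_k}\sum_j 1+\sum_{i\ne k}0\big)=\frac1n$, while $\Pr[E]=\mu^{\mmin}(1-\mu)^{M-\mmin}$ with $M=\sum_i m_i$.

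Next I would make the two competing factors explicit. Writing out $\kl(\tfrac1n|\mu)$ from its definition and exponentiating gives $e^{n\lambda\kl(1/n|\mu)}=(n\mu)^{-\lambda}\big(\tfrac{(n-1)/n}{1-\mu}\big)^{\lambda(n-1)}$, a product of powers of $\mu$ and $1-\mu$ times a constant depending only on $n$ and $\lambda$. Multiplying by $\Pr[E]$ and collecting the powers of $\mu$ yields
\begin{align}
M_\mu(\lambda)\;\ge\; C\,\mu^{\mmin-\lambda}\,(1-\mu)^{\,M-\mmin-\lambda(n-1)},
\end{align}
where $C=n^{-\lambda}\big((n-1)/n\big)^{\lambda(n-1)}>0$ depends only on $n$ and $\lambda$. (For $n=1$ the second $\kl$-term vanishes and the same inequality holds with $\widehat\mu=1$ and the $(1-\mu)$-exponent equal to $M-\mmin$.)

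Finally I would let $\mu\to 0^+$. The factor $(1-\mu)^{M-\mmin-\lambda(n-1)}$ tends to the finite constant $1$, since its exponent is fixed, and $C>0$, so the behaviour is governed entirely by $\mu^{\mmin-\lambda}$. When $\lambda>\mmin$ the exponent $\mmin-\lambda$ is strictly negative, hence $\mu^{\mmin-\lambda}\to+\infty$ and therefore $M_\mu(\lambda)\to+\infty$. This forces $\sup_{0<\mu<1}M_\mu(\lambda)=+\infty$, and since the divergence is driven by $\mu\to0$ for every fixed $n$ and $(m_i)$, no bound depending only on $n$ and the $m_i$ can control $M_\mu(\lambda)$, which is exactly the claim.

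The genuinely delicate part is the \emph{choice of event} rather than any calculation: the scaling by $1/m_i$ means a rare all-ones outcome in the \emph{smallest} task inflates $\widehat\mu$ far more than its probability $\mu^{\mmin}$ suppresses it, and balancing these two powers of $\mu$ is precisely what pins the divergence threshold at $\lambda=\mmin$. Using a larger task, or only a partial all-ones pattern, merely changes both exponents in a way that at best matches this threshold, so concentrating all the mass in a minimal-size task is what makes the conclusion sharp and explains why the single-task bound of \citet{maurer2004note} (valid up to $\lambda\le m$) does not survive reweighting by inverse dataset sizes.
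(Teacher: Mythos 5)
Your proposal is correct and follows essentially the same route as the paper's proof: both lower-bound the MGF by the single outcome in which the minimal-size task is all ones and every other task is all zeros, observe that this forces $\widehat\mu=\tfrac1n$ with probability $\mu^{\mmin}(1-\mu)^{M-\mmin}$, and let $\mu\to 0^+$ so that the factor $\mu^{\mmin-\lambda}$ diverges whenever $\lambda>\mmin$. If anything, your version is slightly more explicit than the paper's (you compute the constant $C=n^{-\lambda}\bigl((n-1)/n\bigr)^{\lambda(n-1)}$ in closed form and handle $n=1$, where the paper leaves the analogous factor implicit), but the underlying idea is identical.
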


\paragraph{Discussion.} A consequence of \Cref{lem:mgf-unbounded} is that the best 
multi-task rate achievable by the proof technique above is $O(1/n\mmin)$. 
However, the resulting bound would not improve when the number of samples for some tasks 
increases while $\mmin$ remains the same. 
This is unintuitive and inconsistent with the standard-rate bound in Theorem \ref{thm:non-fast},
whose convergence depends on the harmonic average of dataset sizes, $\mh$, rather than 
their minimum.
Also, a known property of implicit bounds in the form of Theorem \ref{thm:maurer2004} is 
that one can derive corresponding explicit standard-rate bounds from them by means of Pinsker's 
inequality, $\kl(q|p) \ge 2(q-p)^2$.
Doing so in this situation, however, would result in a standard-rate bound with the suboptimal 
rate $O(\sqrt{1/n\mmin})$ instead of $O(\sqrt{1/n \mh})$. 
Overall, our analysis suggests that a different approach should be considered to achieve 
fast-rate bound for the unbalanced multi-task learning, and we do so in the following 
sections.\footnote{Note that none of the discussed problems occur for the balanced 
setting with $m_1=\dots,m_n=m$, because there $\mmin = m = \mh$.}

Note that in this paper, we focus on the fast-rate bounds for unbalanced multi-task learning in the PAC-Bayesian framework, however, a similar problem also exists in the PAC setting, \ie, the current fast-rate bounds in the PAC literature would also have dependency on the minimum number of samples \citep{Yousefi2018local}.

\subsection{Task-centric vs sample-centric guarantees for unbalanced multi-task learning}\label{sec:objective}
Unarguably, the goal of multi-task learning is to learn multiple models that 
perform well for their respective tasks. 
As such, one might argue that MTL is actually a \emph{multi-objective} learning problem~\citep{sener2018multi}.  %
In practice, however, one needs to define a scalar objective to quantify the 
success of learning in general, and generalization in particular~\citep{hu2023revisiting}. %
Most existing work in multi-task learning use a uniform average 
across tasks for this purpose, resulting in a multi-task risk , and its empirical counterpart
\begin{align}
    \erT(Q_1, \dots, Q_n) &= \frac{1}{n} \sum_{i=1}^{n} \er_i(Q_i) =  \frac{1}{n} \sum_{i=1}^{n} \E_{z \sim D_i} \ell_i(Q_i, z),\\
    \herT(Q_1, \dots, Q_n) &= \frac{1}{n} \sum_{i=1}^{n} \her_i(Q_i) = \frac{1}{n} \sum_{i=1}^{n} \frac{1}{m_i} \sum_{j=1}^{m_i} \ell_i(Q_i, z_{ij}).
\end{align}
We call this setting \emph{task-centric MTL}, because each task is 
treated as equally important, regardless of how much data it 
contributes to the learning. 
An alternative scalarization is \emph{sample-centric MTL}, in which
tasks get weighted proportionally to their amount of training data.
The corresponding risk and empirical risk are
\begin{align}
    \erS(Q_1, \dots, Q_n) &= \sum_{i=1}^{n} \frac{m_i}{M} \er_i(Q_i) = \sum_{i=1}^{n} \frac{m_i}{M}  \E_{z \sim D_i} \ell_i(Q_i, z), \label{obj:terS} \\
    \herS(Q_1, \dots, Q_n) &=  \sum_{i=1}^{n} \frac{m_i}{M} \her_i(Q_i) = \frac{1}{M} \sum_{i=1}^{n} \sum_{j=1}^{m_i} \ell_i(Q_i, z_{ij}), \label{obj:herS}
\end{align}
respectively, where $M=\sum_{i=1}^n m_i$ is the total number of data points.

In balanced MTL, both notions coincide, so no decision between them is necessary.
In unbalanced MTL, however, they are distinct, and we argue that the choice of 
preferable objective depends on the problem setting. 
For example, consider a common scenario, in which a service 
provider trains client-specific models.
If the goal is to maximize client satisfaction, the task-centric setting 
is appropriate. If, however, the goal is make as few mistakes as possible 
on future data, data-rich clients should get more attention, because 
they will likely contribute also a larger fraction of the future 
data points.
Consequently, the sample-centric setting is preferable.

Consequently, in this work we provide new results for both settings. All proofs are provided in the supplemental material.

\subsection{Fast-rate generalization bounds for task-centric multi-task learning}

Our main results are the following generalization bounds, which establish the 
first fast-rate generalization guarantees for unbalanced multi-task learning.

\begin{restatable}{theorem}{task-kl}\label{thm:fast-Task}
For any fixed hyper-prior  
$\p$, any $\delta>0$, it holds with 
probability at least $1 - \delta$ over the 
sampling of the training datasets that for all 
hyper-posterior functions $\q$ and all posteriors $Q_1, \dots, Q_n$ :
    \begin{align}
    \sum_{i=1}^{n} m_i \kl(\her_i(Q_i) | \er_i(Q_i)) \le \KL(\posterior\|\prior) + \log \frac{1}{\delta} + \sum_{i=1}^{n} \log(2 \sqrt{m_i})
    \label{eq:task-kl}
\intertext{For any fixed hyper-prior $\p$, any $\delta>0$, and any $\lambda_1,\dots,\lambda_n>0$, 
it holds with probability at least $1 - \delta$ over the sampling of the training datasets that 
for all hyper-posterior functions $\q$ and all posteriors $Q_1, \dots, Q_n$:}
    -\sum_{i=1}^{n} m_i \log (1 - \er_i(Q_i) + \er_i(Q_i) e^{\frac{-\lambda_i}{n m_i}}) \le \frac1n \sum_{i=1}^{n} \lambda_i \her_i(Q_i) + \KL(\posterior\|\prior) + \log(\frac{1}{\delta}) 
    \label{eq:task-catoni}
    \end{align}
\end{restatable}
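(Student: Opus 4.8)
The plan is to follow the PAC-Bayesian blueprint of a change-of-measure step followed by an MGF bound, but to structure the exponent as a \emph{sum} of per-task terms rather than collapsing the tasks into a single weighted empirical average. This is exactly what circumvents the obstruction of \Cref{lem:mgf-unbounded}: because under $\prior$ the models $f_1,\dots,f_n$ are conditionally independent given the sampled prior $P$, and the datasets $S_1,\dots,S_n$ are independent, an additive exponent makes the joint MGF \emph{factorize} into per-task MGFs, each of which can then be controlled by its single-task counterpart at its own sample size $m_i$.

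For the $\kl$-style bound \eqref{eq:task-kl}, first I would use the joint convexity of $\kl(\cdot|\cdot)$ with Jensen's inequality to pass from posterior risks to the sampled models, giving $\sum_i m_i \kl(\her_i(Q_i)|\er_i(Q_i)) \le \E_{\posterior}[\sum_i m_i \kl(\her_i(f_i)|\er_i(f_i))]$. Applying the Donsker--Varadhan change-of-measure inequality between $\posterior$ and $\prior$ with $h=\sum_i m_i \kl(\her_i(f_i)|\er_i(f_i))$ bounds the right-hand side by $\KL(\posterior\|\prior)+\log\E_{\prior}[e^{h}]$. Factorization then yields $\E_{\prior}[e^{h}] = \E_{P\sim\p}\prod_i \E_{f_i\sim P, S_i}[e^{m_i\kl(\her_i(f_i)|\er_i(f_i))}]$, and Maurer's single-task MGF bound (\Cref{thm:maurer2004}), namely $\E[e^{m_i\kl(\hat\mu|\mu)}]\le 2\sqrt{m_i}$ for each fixed $f_i$, controls each factor by $2\sqrt{m_i}$, so $\E_S\E_{\prior}[e^h]\le \prod_i 2\sqrt{m_i}$. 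A Markov argument applied to the data-only random variable $\E_{\prior}[e^h]$ then gives, with probability $1-\delta$ and simultaneously for all posteriors, $\log\E_{\prior}[e^h]\le \log\frac1\delta + \sum_i\log(2\sqrt{m_i})$, which is the claim.

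For the Catoni-style bound \eqref{eq:task-catoni}, I would design the exponent so that each per-task factor equals exactly $1$ rather than $2\sqrt{m_i}$. Writing $\gamma_i = \lambda_i/(n m_i)$, the chord inequality $e^{-\gamma_i \ell}\le 1-\ell+\ell e^{-\gamma_i}$ for $\ell\in[0,1]$ gives $\E_{S_i}[e^{-(\lambda_i/n)\her_i(f_i)}]\le (1-\er_i(f_i)+\er_i(f_i)e^{-\gamma_i})^{m_i}$. Choosing $h = \sum_i[-(\lambda_i/n)\her_i(f_i) - m_i\log(1-\er_i(f_i)+\er_i(f_i)e^{-\gamma_i})]$ therefore makes each factor of the factorized MGF at most $1$, so $\E_S\E_{\prior}[e^h]\le 1$. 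Change of measure plus Markov then gives $\E_{\posterior}[h]\le \KL(\posterior\|\prior)+\log\frac1\delta$. On the left I would use linearity for the empirical terms ($\E_{Q_i}\her_i(f_i)=\her_i(Q_i)$) and the convexity of $x\mapsto -\log(1-x+xe^{-\gamma_i})$ with Jensen to lower-bound $\E_{Q_i}[-m_i\log(1-\er_i(f_i)+\dots)]$ by $-m_i\log(1-\er_i(Q_i)+\er_i(Q_i)e^{-\gamma_i})$, and rearranging yields \eqref{eq:task-catoni}.

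I expect the main obstacle to be identifying the correct additive exponent in the first place. The negative result of \Cref{lem:mgf-unbounded} shows that the naive choice, a single $\kl$ of the aggregated inverse-size-weighted average, has no $\mu$-free bound once the multiplier exceeds $\mmin$; the crux is recognizing that \emph{separating} the tasks inside the exponent restores per-task MGFs that are individually boundable at scale $m_i$, so that each task contributes its own $m_i$ to the denominator of the resulting complexity term. Once this structural choice is made, the remaining steps, namely convexity/Jensen to move between individual models and posteriors, factorization via conditional independence, the single-task MGF bounds, and the final Markov step, are routine.
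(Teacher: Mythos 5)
Your proposal is correct and follows essentially the same route as the paper's proof: Jensen via convexity to pass from posteriors to sampled models, change of measure between $\posterior$ and $\prior$, factorization of the additive per-task exponent through conditional independence of the $f_i$ given $P$ and independence of the $S_i$, per-task MGF control at each task's own scale $m_i$ ($2\sqrt{m_i}$ for the $\kl$ case, a factor of $1$ for the Catoni case), and a final Markov step applied to the posterior-independent quantity $\E_{\prior}[e^h]$. The only cosmetic difference is in the Catoni bound, where you apply the chord inequality $e^{-\gamma_i \ell}\le 1-\ell+\ell e^{-\gamma_i}$ directly to the $[0,1]$-valued losses, whereas the paper first reduces to Bernoulli variables via binomial domination (\Cref{lemma:bionomial}) and then invokes its generalization of Catoni's Lemma~1.1.1 (\Cref{lemma:MGF_catoni}); the two computations are equivalent, and the same remark applies to your direct use of Maurer's MGF bound for $[0,1]$-valued losses in the $\kl$ case.
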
 

\paragraph{Discussion.} 
For proving these results (unlike the balanced setting), we consider independent terms capturing the derivations related to each task (which depends on its own sample size).
By jointly bounding the combination of these terms to gain the shared complexity term
based on hyper-posterior.

The $\kl$-type bound \eqref{eq:task-kl} relates the individual 
per-tasks risk with their empirical estimates. 
For the balanced setting with $m_1=\dots=m_n=m$, it recovers 
previous fast-rate results (up to log terms), because $nm\kl(\herT|\erT)\leq \sum_i m\kl(\her_i|\er_i)$ due to Jensen's inequality. 
The form we provide also handles the unbalanced case, for which---in light of our discussion in \Cref{sec:MGFfail}---we avoid having to bound $\kl(\herT|\erT)$ directly, instead 
finding the weighted linear combination of per-task $\kl$-terms to be a more 
suitable target. 

To better understand the behavior of~\eqref{eq:task-kl} in the unbalanced setting, 
assume a setting in which all tasks have fixed training set sizes, except for one, 
say task $k$, for which we consider larger and larger training set sizes, \ie $m_k\to\infty$. 
Then, for identical $(\q,Q_1,\dots,Q_n)$, and therefore constant $\KL(\posterior\|\prior)$, 
the left hand side of the bound grows with $m_k\kl(\her_k,\er_k)$, while the right hand 
side grows as $\log(\sqrt{m_k})$, which implies fast-rate convergence $\kl(\her_k,\er_k)\to 0$. 
In particular, tasks with few samples do not \emph{slow down} generalization of tasks with many 
samples. 

As a second indication that~\eqref{eq:task-kl} provides the right characterization, observe that 
it readily implies an explicit standard-rate bound of the correct rate $O(\sqrt{1/n\mh})$, using the relation
\begin{align}
  2\,n\,\mh\Big(
      \frac1n\sum_{i=1}^n \bigl(\er_i - \her_i\bigr)
  \Big)^{\!2}
  \;\;\le\;\;
  2\sum_{i=1}^n m_i\bigl(\her_i - \er_i\bigr)^{2}
  \;\;\le\;\;
  \sum_{i=1}^n m_i\,\kl \bigl(\her_i \,\|\, \er_i\bigr), \label{cor:pinsker}
\end{align}
where the left inequality follows from the Cauchy-Schwarz and the right one from Pinsker's inequality. 

The Catoni-type bound~\eqref{eq:task-catoni} offers a more explicit characterization of the 
generalization behavior, because the empirical risk appears explicitly on the right hand side 
of the inequality.
Like the $\kl$-bound, it guarantees fast-rate convergence for any task, even if the training 
set sizes of all other tasks remain fixed.

A natural question is which of the two bounds yields better guarantees.
As it turns out, this depends on the choice of $\lambda_1,\dots,\lambda_n$. 
It is known \citep[Proposition~2.1]{germain2009pac} that 
$\sup_{\lambda>0}\big[ -\log(1-p + p e^{-\lambda}) - \lambda q\big] = \kl(q|p)$, \ie,
an \emph{optimal} choice of the $\lambda_i$ would recover \eqref{eq:task-kl}, except without 
the $\log$ terms on the right hand side. 
Unfortunately, \eqref{eq:task-catoni} does not hold uniformly with respect to 
the $\lambda_i$-values, so one cannot simply optimize the expressions to obtain 
the best values. 
However, if one has a candidate set of potential values, the bound can be made
uniform for this set by a union bound, and select the tightest one. This is also 
the strategy we follow in our empirical evaluation, see Section~\ref{sec:experiments}.
Note, however, there is no guarantee that the resulting bound will improve 
over~\eqref{eq:task-kl}, so a promising strategy is to also include that 
one into the union bound.

\subsection{Numerical computation of the bounds} \label{sec:numeric}
Single-task $\kl$-bounds are straightforward to compute numerically: because 
only a single quantity is unknown (the true risk $\er$), one can make use 
of the fact that $\kl(q|p)$ is strictly monotonically increasing and therefore 
invertible in $p\in [q,1]$ to identify the largest values of $\er$ such 
that $\kl(\her|\er)$ fulfills the bound. 
Similarly, single-task Catoni-style bounds can readily be used to derive a numeric 
bound on the risk by observing that $-m\log (1 - \er + \er e^{\frac{-\lambda}{m}})$
is a strictly monotonically increasing function of $\er$. 

In the MTL setting, the bounds in \cref{thm:fast-Task} have 
$n$ unknowns, $\er_1,\dots,\er_n$, 
but the corresponding bounds provide only a 
single joint constraint. As such, the set of feasible solutions is much richer
than in the single-task setting. 
In the task-centric setting, 
we are interested in guarantees on the largest 
possible value for $\erT = \frac{1}{n}\sum_i\er_i$, \ie we have to 
numerically solve the optimization problem 
\begin{align}
    \erT{}^* \quad\leftarrow\quad \max\limits_{\er_1,\dots,\er_n}\Big[\frac1n\sum_i \er_i \Big] \quad\text{subject to generalization bound constraint}. %
\end{align}
We illustrate this process in Figure~\ref{fig:numeric_inverse}. 
Interestingly, and to our knowledge unique to the MTL setting, the potentially 
complex geometry of the constraint set allows obtaining tighter guarantees by 
combining multiple bounds. For example, we can instantiate both \eqref{eq:task-kl} 
and \eqref{eq:task-catoni} and combine them by a union bound.
Optimizing over the resulting constraint set can provide an even better 
guarantee on $\erT$ than the minimum of using each bound individually.
Figure~\ref{fig:numeric_inverse} also illustrates this effect, which 
is impossible in the single-task setting due to the one-dimensional 
nature of the problem there.

For our experiments, we use Sequential Least Squares Programming (SLSQP)~\citep{kraft1988software}, a gradient-based optimization algorithm that solves constrained nonlinear problems by iteratively approximating them with quadratic programming subproblems. Note that this procedure is computationally inexpensive compared to model training.

\subsection{Generalization bounds for sample-centric multi-task learning}
As discussed in Section~\ref{sec:objective}, sample-centric MTL has a different goal than task-centric MTL: its objectives are weighted by the training set sizes, to reflect that tasks with many samples can also be expected to occur more often in the future. 
In this section, we provide the fast-rate generalization bounds for the corresponding 
risk~\eqref{obj:terS}.

\begin{theorem} \label{thm:unbalancedMTL}

For any fixed hyper-prior $\p$, any $\delta>0$, it holds with probability at least $1 - \delta$ over the 
sampling of the training datasets that for all 
hyper-posterior functions $\q$ and all posteriors $Q_1, \dots, Q_n$:
\begin{align}
    \kl (\herS  | \erS ) \le \frac{\KL(\posterior\|\prior) + \log \frac{2 \sqrt{M}}{\delta}}{M}
    \qquad&\text{sample-centric $\kl$-bound} \label{thm:unbalancedMTL_kl}
\intertext{For any fixed hyper-prior  
$\p$, any $\delta>0$, and any $\lambda>0$, it holds with probability at least $1 - \delta$ over the 
sampling of the training datasets that for all 
hyper-posterior functions $\q$ and all posteriors $Q_1, \dots, Q_n$}
    \frac{-M}{\lambda}   \log (1 - \erS + \erS e^{\frac{-\lambda}{M}}) \le \herS + \frac{\KL(\posterior\|\prior) + \log(\frac{1}{\delta})}{\lambda} 
    \quad&\text{sample-centric Catoni-bound} \label{thm:unbalancedMTL_catoni}
            \end{align}
\end{theorem}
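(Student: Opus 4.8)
The plan is to follow the standard PAC-Bayesian blueprint recalled in the Background (change of measure, then a moment-generating-function bound, then Markov's inequality), exploiting the one structural feature that makes the sample-centric setting tractable: because every data point carries the uniform weight $1/M$, the empirical risk $\herS=\frac1M\sum_{i=1}^n\sum_{j=1}^{m_i}\ell_i(f_i,z_{ij})$ is a genuine average over all $M$ pooled points, with expectation $\erS$. Thus the pooled data behaves like a single dataset of $M$ independent—though \emph{not} identically distributed—$[0,1]$-valued samples, and this is exactly what sidesteps the pathology of \Cref{lem:mgf-unbounded}, where the inverse-size weights $1/m_i$ destroy this averaging structure.

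For both bounds I would first pass from posteriors to individual models. For a fixed tuple $(f_1,\dots,f_n)$ write $\herS(f)$ and $\erS(f)=\sum_i\frac{m_i}{M}\er_i(f_i)$. Joint convexity of $\kl(\cdot\,|\,\cdot)$ gives $\kl(\herS(Q)\,|\,\erS(Q))\le \E_\posterior[\kl(\herS(f)\,|\,\erS(f))]$; for the Catoni bound, concavity of $p\mapsto\log(1-p+pe^{-\lambda/M})$ gives $\E_\posterior[\log(1-\erS(f)+\erS(f)e^{-\lambda/M})]\le\log(1-\erS(Q)+\erS(Q)e^{-\lambda/M})$, which (entering with a minus sign) lower-bounds the posterior average by the target left-hand side. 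After these Jensen steps I apply the change-of-measure inequality to introduce $\KL(\posterior\|\prior)$ and replace the data-dependent posterior by the data-independent prior, then close with Markov's inequality over the sampling of $S$, exactly as in the proof of \Cref{thm:maurer2004}.

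The two bounds differ only in the MGF estimate. For the Catoni bound \eqref{thm:unbalancedMTL_catoni} the estimate is clean: with $\phi(f,S)=-\lambda\herS(f,S)-M\log(1-\erS(f)+\erS(f)e^{-\lambda/M})$, independence of the $M$ points and convexity of $x\mapsto e^{-\lambda x/M}$ on $[0,1]$ give $\E_S[e^{-\lambda\herS}]=\prod_{i,j}\E[e^{-\frac{\lambda}{M}\ell_i(f_i,z_{ij})}]\le\prod_i(1-\er_i(f_i)+\er_i(f_i)e^{-\lambda/M})^{m_i}$; applying the concavity pooling once more to the product (weights $m_i/M$) shows this is at most $(1-\erS(f)+\erS(f)e^{-\lambda/M})^{M}$, so $\E_S[e^{\phi}]\le 1$, which is all that is needed.

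The $\kl$-bound \eqref{thm:unbalancedMTL_kl} is the harder case, and I expect the main obstacle to be the Maurer-type estimate $\E_S[e^{M\kl(\herS(f)\,|\,\erS(f))}]\le 2\sqrt M$ for the pooled, non-identically distributed samples, since Theorem~1 of \citet{maurer2004note} as stated covers only \iid variables. I would prove it by an extremality argument: (i) reduce to Bernoulli summands, using that $\bar X\mapsto e^{M\kl(\bar X\,|\,\erS)}$ is convex, so its expectation over each $[0,1]$ variable of fixed mean is maximized at the endpoints; (ii) observe that, with $S=M\herS$ the sum of the losses, $S\mapsto e^{M\kl(S/M\,|\,\erS)}$ is convex; (iii) invoke the convex-order (Hoeffding/majorization) fact that among sums of independent Bernoullis with a fixed total mean the balanced \iid binomial maximizes the expectation of any convex function; and (iv) apply the original \iid bound of \citet{maurer2004note} to obtain the constant $2\sqrt M$. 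The delicate point is step (iii)—that the balanced binomial dominates every Poisson-binomial of the same mean in the convex order—and here the weighting by $m_i/M$ is essential: it is precisely what makes $\erS$ play the role of the common mean $\frac1M\sum_k\E X_k$, rather than the inverse-size average whose MGF diverged in \Cref{lem:mgf-unbounded}.
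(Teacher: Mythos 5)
Your proposal is correct and takes essentially the same route as the paper: Jensen over the posterior, change of measure, comparison of the pooled $M$ independent but non-identically distributed losses with a binomial $B(M,\erS)$, then Maurer's MGF bound and Markov's inequality. The convex-order step you flag as delicate is precisely the lemma the paper invokes (\Cref{lemma:bionomial}, Berend--Kontorovich, Proposition~3.2), and your direct product-plus-concavity computation for the Catoni MGF is the same calculation as the paper's \Cref{lemma:MGF_catoni}, so both halves match the paper's proof.
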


\paragraph{Discussion.} The provided bounds are more similar to balanced multi-task learning, and have the following properties: 1) The focus is on the sample level, and separating terms based on the tasks is not necessary. 2) All samples contribute the same amount to the training risk, and an issue such as in \cref{lem:mgf-unbounded} does not happen. 3) The sample complexity is based on the total number of samples, and the minimum sample size or harmonic mean is not a bottleneck. 

\begin{corollary}\label{cor:weighted_basic} Applying Pinsker's inequality to the $\kl$-bound of \Cref{thm:unbalancedMTL} results in the following explicit standard-rate bound for sample-centric multi-task learning:
    \begin{align}
        \erS \le \herS + \sqrt{\frac{\KL(\posterior\|\prior)  + \log{\frac{2\sqrt{M}}{\delta}}}{2M}}
        \label{eq:weighted_basic} 
    \end{align}
\end{corollary}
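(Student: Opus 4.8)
The plan is to derive \eqref{eq:weighted_basic} as a purely deterministic consequence of the sample-centric $\kl$-bound \eqref{thm:unbalancedMTL_kl}. Since that bound already holds with probability at least $1-\delta$ and uniformly over all hyper-posteriors $\q$ and all posteriors $Q_1,\dots,Q_n$, every algebraic manipulation I perform on the same high-probability event preserves both the confidence level and the uniformity, so no new probabilistic argument is required.

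On that event, \eqref{thm:unbalancedMTL_kl} reads $\kl(\herS | \erS) \le \frac{\KL(\posterior\|\prior) + \log\frac{2\sqrt{M}}{\delta}}{M}$. Both $\herS$ and $\erS$ lie in $[0,1]$, being convex combinations (with weights $m_i/M$) of per-task risks that themselves take values in $[0,1]$, so Pinsker's inequality applies in the form $\kl(q|p)\ge 2(q-p)^2$. Taking $q=\herS$ and $p=\erS$ gives $2(\erS-\herS)^2 \le \kl(\herS|\erS)$.

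Chaining these two inequalities and dividing by $2$ yields
\begin{align}
(\erS-\herS)^2 \;\le\; \frac{\KL(\posterior\|\prior)+\log\frac{2\sqrt{M}}{\delta}}{2M}.
\end{align}
The final step is to take square roots and add $\herS$ to both sides. Using $\erS-\herS \le \sqrt{(\erS-\herS)^2}$, which is valid irrespective of the sign of the gap, produces exactly the claimed bound \eqref{eq:weighted_basic}.

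There is no genuine obstacle here; the only point demanding minor care is the square-root step. When $\erS \ge \herS$ the displayed inequality controls the signed gap directly, and when $\erS < \herS$ the target bound holds trivially because its right-hand side is nonnegative while $\erS \le \herS$. Either way the stated inequality follows, so the corollary is immediate once \Cref{thm:unbalancedMTL} is in hand.
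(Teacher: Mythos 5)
Your proof is correct and follows exactly the route the paper intends: on the high-probability event of \Cref{thm:unbalancedMTL}, apply Pinsker's inequality $\kl(q|p)\ge 2(q-p)^2$ to the sample-centric $\kl$-bound, rearrange, and take square roots (the sign caveat you note is the right minor point of care). This matches the paper's own (implicitly stated) derivation, so nothing further is needed.
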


\newpage
\section{Meta-learning with unbalanced tasks}
In this section, we extend our results to meta-learning. Following the framework of \citet{zakerinia2024more}, the goal is to learn a \emph{learning algorithm}, $A$, from a candidate set, $\A$, to be used for future tasks. 
At meta-training time, a set of training tasks are available that were sampled \iid from a task environment that allows for tasks to have different training set sizes in $[1, m_\text{max}]$.
As in multi-task learning, previous works studied this problem only in a task-centric view. 
However, it is also possible and relevant to study sample-centric meta-learning for the setting in which tasks with more training samples should get more weight in the analysis. Consequently, we define two meta-learning risks:
\begin{align}
   & \erT_M(\rho)\! =\! \E_{A \sim \rho} \E_{(D, m, S)\sim\tau}\E_{z\sim D}\ell(z,A(S)),
\quad \erS_M(\rho)\! =\! \E_{A \sim \rho} \E_{(D, m, S)\sim\tau}\E_{z\sim D} \frac{m}{m_\text{max}}\ell(z,A(S)), 
\end{align}
where $\rho$ is a meta-posterior over the set of algorithms $\A$. 
Each algorithm in $A \in \A$ is a function that given a dataset would generate a posterior distribution $A(S)$. Additionally, each algorithm can learn a hyper-posterior $\qa$ similar to multi-task learning. 
Therefore, for each algorithm we would use the two distributions. 1) $\posterior(A)$: a hyper-posterior $\qa$ specific to algorithm $A$ (data-dependent) and task posteriors $A(S_i)$, and 2) $\prior(A)$: a hyper-prior $\pa$ specific to algorithm $A$ (data-independent) and priors $P \sim \pa$.

In this section, we provide fast-rate bounds for unbalanced meta-learning. Here, we state the $\kl$-style bounds, while the analogous Catoni-style bounds and the proofs for both types are provided in Appendix~\ref{app:meta}. 
To provide our results, we first introduce the following notation, which is the upper-bound given numerical optimization of the kl-bound explained in Section \ref{sec:numeric}. 
\begin{align}
    \kl^{-1}_{m_1, \dots, m_n}\Big(q_1, \dots, q_n \Big| b\Big) = \sup \Big\{\frac1n \sum_{i=1}^{n} p_i \Big| \sum_{i=1}^{n} m_i \kl(q_i | p_i) \le b, p_i \in [0, 1] \Big\}
\end{align}
With this notation we provide our main meta-learning bounds.

\begin{theorem} \label{thm:unbalanced-meta-kl}
For any fixed meta-prior $\pi$, and fixed set of $\prior(A)$, any $\delta>0$, it holds with probability at least $1 - \delta$ over the 
sampling of the training datasets that for all meta-posteriors, 
and hyper-posteriors $\qa$:
\begin{align}
\intertext{Task-centric meta-learning: with $c_1 = \sum_{i=1}^n\log(2\sqrt{m_i})$}
    \er_M^T(\rho) &\le \kl^{-1}_{n}\left(\kl^{-1}_{m_1, \dots, m_n}\Big(\her_1(\rho), \dots, \her_n(\rho) \Big| C(\rho) + c_1\Big) \Bigg| \KL(\rho\|\pi)+\log \frac{4\sqrt{n}}{\delta}\right),
    \label{thm:meta_T_kl}
\intertext{Sample-centric meta-learning: with $c_2 = \log(2\sqrt{M})$}
    \er_M^S(\rho) &\le \kl^{-1}_{n}\left(\frac{M}{n m_{max}} \kl^{-1}_{M}\Big(\herS(\rho) \Big| C(\rho) + c_2 \Big) \Bigg| \KL(\rho\|\pi)+\log \frac{4\sqrt{n}}{\delta}\right) \label{thm:meta_S_kl}
\end{align}
where $C(\rho) = \KL(\rho\|\pi)+\E\limits_{A\sim\rho} [\KL(\posterior(A)\|\prior(A))] + \log \frac{2}{\delta}$.
\end{theorem}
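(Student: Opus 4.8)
The plan is to establish both $\kl$-style meta-bounds by a two-level PAC-Bayesian argument, allocating probability $\delta/2$ to each level and chaining the two resulting inequalities through the monotonicity of the $\kl$-inversion. The two levels correspond to the two generalization gaps inherent to meta-learning: the \emph{within-task} gap between the empirical and the true per-task risk, which the inner inversion $\kl^{-1}_{m_1,\dots,m_n}$ (resp.\ $\kl^{-1}_M$) will control, and the \emph{across-task} gap between the average true risk on the $n$ observed tasks and the expected risk on a future task drawn from $\tau$, which the outer inversion $\kl^{-1}_n$ will control.

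For the across-task level I treat each observed task $t_i=(D_i,m_i,S_i)$ as an \iid draw from $\tau$ and, for a fixed algorithm $A$, view $\tfrac{}{}\er_i(A(S_i))\in[0,1]$ (task-centric) or $\tfrac{m_i}{m_\text{max}}\er_i(A(S_i))\in[0,1]$ (sample-centric) as its loss; these lie in $[0,1]$ since $m_i\le m_\text{max}$ and $\ell\in[0,1]$, and they are \iid across $i$. Applying the single-task $\kl$-bound of \Cref{thm:maurer2004} at this level, with meta-prior $\pi$, meta-posterior $\rho$, and $n$ samples, gives with probability $1-\delta/2$ that $\er_M^T(\rho)\le\kl^{-1}_n\!\big(\E_{A\sim\rho}\tfrac1n\sum_i\er_i(A(S_i))\,\big|\,\KL(\rho\|\pi)+\log\tfrac{4\sqrt n}{\delta}\big)$, and the analogous statement with $\tfrac1n\sum_i\tfrac{m_i}{m_\text{max}}\er_i(A(S_i))$ for $\er_M^S(\rho)$.

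For the within-task level the task is to upper bound the empirical meta-loss sitting inside the outer inversion using only observable empirical risks. I cannot invoke \Cref{thm:fast-Task} or \Cref{thm:unbalancedMTL} as black boxes, because the prior structure $\prior(A)$ depends on the random algorithm $A$; instead I re-use the exponential-moment bound underlying them, namely $\E_S\E_{\prior(A)}[\exp(\sum_i m_i\kl(\her_i|\er_i))]\le e^{c_1}$ for each fixed $A$ (and $\E_S\E_{\prior(A)}[\exp(M\kl(\herS|\erS))]\le e^{c_2}$ in the sample-centric case). Taking a further expectation over $A\sim\pi$ preserves the bound, and a single change of measure from $\pi\times\prior(A)$ to $\rho\times\posterior(A)$ then introduces the combined complexity $\KL(\rho\|\pi)+\E_{A\sim\rho}\KL(\posterior(A)\|\prior(A))$. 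After Markov's inequality and rearrangement this yields, with probability $1-\delta/2$, that $\E_{A\sim\rho}\sum_i m_i\kl(\her_i(A(S_i))|\er_i(A(S_i)))\le C(\rho)+c_1$. Joint convexity of $\kl(\cdot|\cdot)$ together with Jensen's inequality push $\E_{A\sim\rho}$ into both arguments, so the averaged risks $\bar p_i=\E_{A\sim\rho}\er_i(A(S_i))$ obey $\sum_i m_i\kl(\her_i(\rho)|\bar p_i)\le C(\rho)+c_1$; by the definition of the inversion, $\E_{A\sim\rho}\tfrac1n\sum_i\er_i(A(S_i))=\tfrac1n\sum_i\bar p_i\le\kl^{-1}_{m_1,\dots,m_n}(\her_1(\rho),\dots,\her_n(\rho)\,|\,C(\rho)+c_1)$. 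The sample-centric case is identical but uses a single scalar $\kl$, giving $\E_{A\sim\rho}\erS(A)\le\kl^{-1}_M(\herS(\rho)\,|\,C(\rho)+c_2)$; the prefactor $\tfrac{M}{n\,m_\text{max}}$ in \eqref{thm:meta_S_kl} is exactly the rescaling between the multi-task weighting $m_i/M$ and the meta-empirical weighting $m_i/(n\,m_\text{max})$, since $\tfrac1n\sum_i\tfrac{m_i}{m_\text{max}}\er_i=\tfrac{M}{n\,m_\text{max}}\erS$.

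Finally I chain the levels: because $\kl^{-1}_n(\cdot\,|\,b)$ is monotonically increasing in its first argument, substituting the within-task upper bound for the empirical meta-loss into the across-task inequality produces exactly \eqref{thm:meta_T_kl} and \eqref{thm:meta_S_kl} on the intersection of the two high-probability events, which by a union bound has probability at least $1-\delta$. I expect the within-task step to be the main obstacle: one must avoid treating the multi-task theorems as black boxes and instead carry their MGF bound through the extra $\E_{A\sim\pi}$ and the joint $\pi\times\prior(A)\to\rho\times\posterior(A)$ change of measure, which is precisely where $\KL(\rho\|\pi)$ and the expected multi-task complexity fuse into the single term $C(\rho)$.
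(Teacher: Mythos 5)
Your proposal is correct and follows essentially the same route as the paper's proof: a within-task step that reruns the multi-task $\kl$-argument under the joint change of measure from $(A,P,f_1,\dots,f_n)\sim\pi\times\prior(A)$ to $\rho\times\posterior(A)$ (whose KL chain rule fuses $\KL(\rho\|\pi)$ and $\E_{A\sim\rho}\KL(\posterior(A)\|\prior(A))$ into $C(\rho)$, exactly the paper's two generating-process distributions), followed by an environment-level application of the single-task $\kl$-bound treating tasks as samples, with the two $\delta/2$-events chained through monotonicity of $\kl^{-1}_n$ and a union bound. Your explicit handling of the MGF bound per fixed $A$ and the $\tfrac{M}{n\,m_\text{max}}$ rescaling in the sample-centric case match the paper's argument point for point.
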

\paragraph{Discussion.}
The meta-learning generalization bounds are based on two parts: 1) a multi-task bound which upper-bounds the generalization error  within the training tasks 2) a generalization bound at the environment level, to upper-bound the expected performance for the environment based on the true risk of the training tasks.

Numerically computing the bounds consists of first numerically computing the upper-bound for the true risk of the training risks, and then numerically computing the final bound. Similar to multi-task learning, the bounds have a better sample complexity over the sample size of training task when the risks are small, and additionally, a better complexity over the number of tasks, for example for task-centric bounds, when the risks are small, the rate of the bounds would be $O(1/m + 1 / n)$ instead of $O(\sqrt{1/m} + \sqrt{1 / n})$.
Additionally, applying Pinsker's inequality to the task-centric bound, would result in the standard-rate meta-learning bounds of \citet{zakerinia2024more}. Similarly, by using Pinsker's inequality, we get standard-rate bounds for sample-centric meta-learning.

\section{Experiments}\label{sec:experiments}
While our contribution in this work is theoretical, we also provide some 
results from numeric experiments to illustrate the numeric behavior of 
the bounds we establish.
Specifically, we report on experiments in two prototypical multi-task settings: 
linear classification with learned regularizer, and neural network learning 
in a random subspace representation. 
We also provide a visual illustration of the bounds' geometry for $n=2$ in Appendix \ref{app:experiments}.

\begin{figure}[t]\centering
    \includegraphics[width=.48\textwidth]{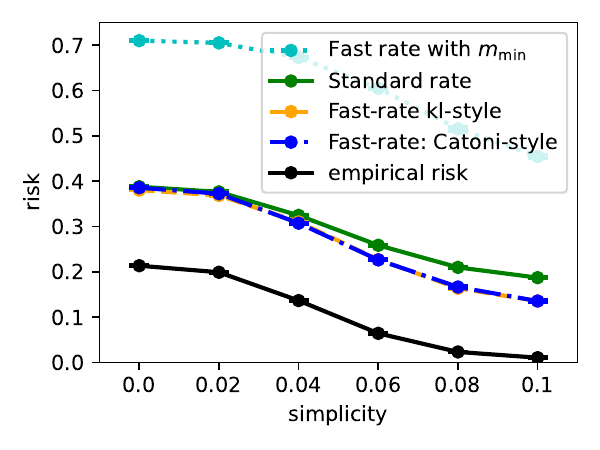}
\quad
    \includegraphics[width=.48\textwidth]{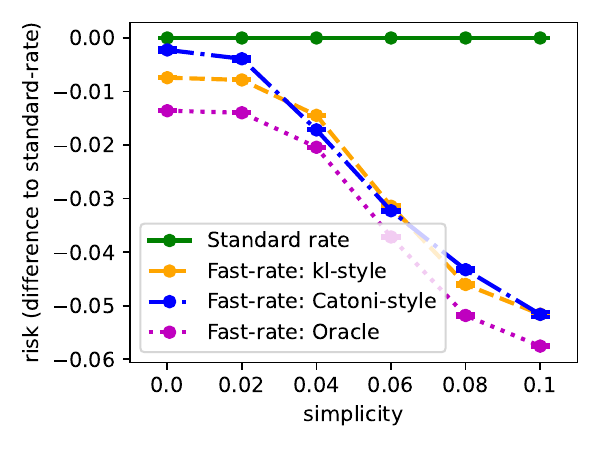}
\caption{Task-centric MTL: graphical results for linear models on the MDPR dataset}
\label{fig:MDPRsub-task} 
\end{figure}

\begin{figure}[t]\centering
    \includegraphics[width=.48\textwidth]{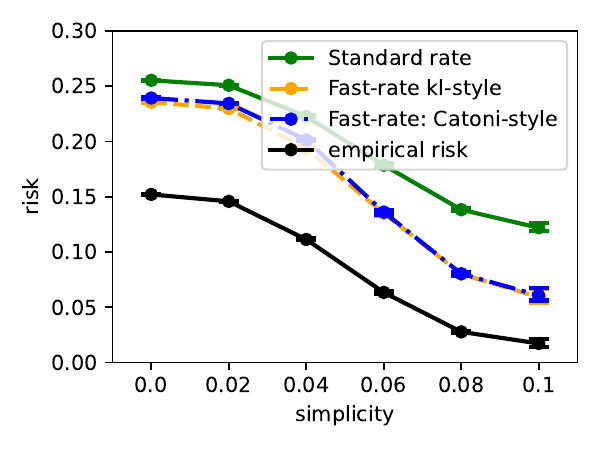}
\quad
    \includegraphics[width=.48\textwidth]{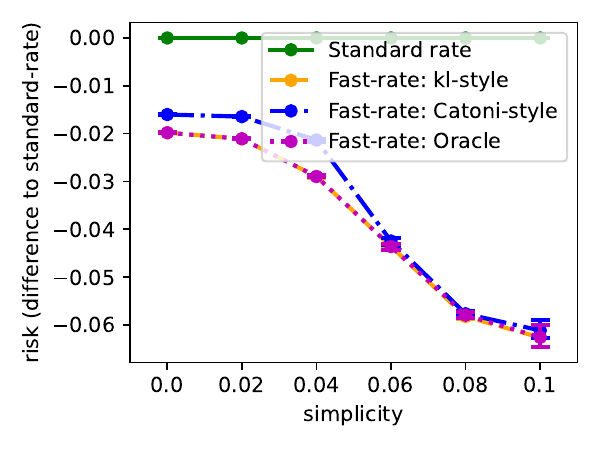}
\caption{Sample-centric MTL: graphical results for linear models on the MDPR dataset}\label{fig:MDPRsub-sample} 
\end{figure}

\paragraph{MTL bound for linear models.}
In our first experiments, we provide generalization guarantees in the setup 
of linear multi-task learning with biased regularization~\citep{kienzle2006personalized}: 
for each task, we learn a (stochastic) linear classifier, parametrized 
by a (Gaussian) distribution over each task's classifier weights. Additionally,
we learn a regularization term, which is also parametrized by a Gaussian and 
shared among all tasks. 
As dataset, we use the MDPR dataset~\citep{pentina2017unlabeled}, which 
consists of 953 tasks. Their training set sizes range between 102 and 22530 
samples, making the setting clearly unbalanced. 
Besides the original data, we also create variants of higher \emph{simplicity}, 
by adding a $\eta$-multiple of the label value to each input features for $\eta\in [0,0.1]$.
For details on the setup, see Appendix~\ref{app:experiments}.

Figures~\ref{fig:MDPRsub-task} and~\ref{fig:MDPRsub-sample} show the results (more details and numeric values can be found
in the appendix). 
In the task-centric setting, fast-rate and standard bounds provide similar guarantees
for $\eta=0$, but with larger values of $\eta$, the fast-rate bounds benefit more from
the decrease of the empirical risk and provide tighter guarantees than the standard-rate 
bound. This only holds for the new unbalanced bounds we present in this work, though.
The naive fast-rate bound with convergence rate determined by $m_\text{min}$ 
is always far looser than even the standard-rate bound.
In the sample-centric setting, the benefit of our fast-rate bounds is apparent already 
at $\eta=0$, and the gap to the standard-rate bound increases with growing $\eta$.

\begin{table}[t]
    \centering

\captionof{table}{Generalization bounds for low-rank parametrized deep networks on split-CIFAR.}
\label{tab:MTL}
\centering
\begin{tabular}{ccc}
\toprule
\multicolumn{3}{c}{\textbf{Task-centric}} \\
\midrule 
Dataset                             & CIFAR10 & CIFAR100 \\
\midrule 
Standard rate           & \np{0.3068328072912082} & \np{0.5947962787187728} \\
Fast-rate with $m_{\text{min}}$      & \np{0.3467360063173952} & \np{0.6161623959107059} \\
Fast-rate: kl-style           & \np{0.27165681352782367} & \np{0.5912170334604309} \\ 
Fast-rate: Catoni-style        & \np{0.2709879892608734} & \np{0.591102036611135} \\
Joint constraint               & \np{0.27047403890096644} & \np{0.590689756198559} \\
\midrule
\multicolumn{3}{c}{\textbf{Sample-centric}} \\
\midrule 
Dataset                             & CIFAR10 & CIFAR100 \\
\midrule 
Standard rate       & \np{0.30019207851214524} & \np{0.5947181124725951} \\
Fast-rate: kl-style    & \np{0.2648635057681427} & \np{0.5928530544615933} \\
Fast-rate: Catoni-style  & \np{0.2621240871614677} & \np{0.5897834383676699} \\
\bottomrule
\end{tabular}
\end{table}

\paragraph{MTL bound for neural networks.} %
Our second experiment provides generalization guarantees for unbalanced multi-task 
learning of low-rank parametrized neural networks~\citep{zakerinia2025deep}.
As dataset we use \emph{split-CIFAR10} and \emph{split-CIFAR100}~\citep{cifar}, 
in which the popular CIFAR datasets are split in an unbalanced way into tasks,
each of which possesses only 3 of 10 (for CIFAR10) or 10 out of 100 (for CIFAR100) 
classes.
As model, we use a Vision Transformer model with approximately $5.5$ million 
parameters, pretrained on ImageNet~\citep{Vit}. Further details are provided in Appendix~\ref{app:experiments}.

Table \ref{tab:MTL} shows the results. For the task-centric as well as the sample-centric views 
using our fast-rate bounds instead of the standard-rates ones leads to noticeable improvements,
while the bound with dependence on $m_{min}$ would be much looser. 
In line with the theory, the improvements are larger for split-CIFAR10, where the empirical 
error is small, than for split-CIFAR100, where the empirical error is too large for the 
fast-rate property to be effective.

\section{Conclusion}
The literature on generalization behavior of multi-task learning has been heavily focused on the setting where all tasks have the same number of training samples, despite the fact that this balanced setting is not realistic for real-world tasks. 
In this work, we explicitly study the unbalanced multi-task learning, showing that it has different statistical properties than the balanced setting, and that the results and proof techniques from the balanced setting do not simply carry over to the unbalanced setting. 
We argued that for evaluating multi-task methods, there are two different views, which matter in different applications: task-centric and sample-centric. 
As our main contribution, we provided fast-rate generalization bounds in the PAC-Bayes framework for both settings, and we demonstrated through empirical evaluations that these bounds are not only theoretically superior but also can provide tighter guarantees than standard-rate bounds.

A limitation of our work is that the resulting bounds are less interpretable than explicit standard-rate bounds, and that numeric optimization is required to obtain numerical guarantees.
We believe it will be interesting future work to explore if more explicit fast-rate bounds are possible for the unbalanced setup at all.

\section*{Acknowledgements}
This research was supported by the Scientific Service Units (SSU) of ISTA through resources provided by Scientific Computing (SciComp).

\bibliography{ms.bib}
\bibliographystyle{icml2025}

\newpage

\appendix

\section{PAC-Bayesian multi-task learning} \label{app:mtl}
 A very simple solution to achieve bounds for several tasks with different sample sizes is to compute single-task bounds for all tasks separately, and combine them by averaging and a union bound. However, this simple solution does not achieve the goal of multi-task learning, \ie, benefiting from shared information between tasks and task similarity. Importantly, this method does not share complexity terms and does not benefit from an increasing number of tasks, resulting in a sub-optimal bound, see \eg \citet{zakerinia2025deep} for numeric experiments.

PAC-Bayes multi-task learning shares information by learning a (distribution over) data-dependent prior to be shared for all tasks.
Formally, we define two distributions over $\M(\F)\times \F^{\otimes n}$, \ie they assign joint probabilities 
to tuples, $(P, f_1, ..., f_n)$, which contain a prior over models, and $n$ models: 1) $\posterior$ is a distribution given by the product $\q\times Q_1\times\dots\times Q_n$, and $\prior$ is a distribution given by the generating process: \emph{i)} sample a prior $P\sim\p$, \emph{ii)} for each task, $i=1,\dots,n$, sample a model $f_i\sim P$.

Additionally, we can also define the distribution $\posterior$ slightly differently by the following generating process:  \emph{i)} sample a prior $P\sim\q$, \emph{ii)} for each task, $i=1,\dots,n$, sample a model $f_i\sim Q_i(P)$, which is a data-dependent function that adapts the posterior for each prior $P$. This approach might help reducing the complexity terms, \ie for each $P$, the term $\KL(Q_i \| P)$  would become $\KL(Q_i(P) \| P)$. However, it causes an additional complexity in adapting the posterior to each prior. The results stated in this paper are mentioned in the form of the first approach, however, all results would hold by simply inserting the different $\posterior$ in the results and proofs \citep{pentina2014pac, zakerinia2024more}.

\section{Proofs}

\subsection{Auxiliary Lemmas}

\begin{lemma}[\citep{Berend2010Efficient} Proposition 3.2]\label{lemma:bionomial}
    Let $X_i, \, 1 \leq i \leq t$, be a sequence of independent random variables for which $P(0 \leq X_i \leq 1) = 1$, $X = \sum_{i=1}^{t} X_i$, and $\mu = E(X)$.
     Let $Y$ be the binomial random variable with distribution $Y \sim B\left(t, \frac{\mu}{t} \right)$. Then for any convex function \(f\) we have:
    \begin{align}
            \mathbf{E} f (X) \leq \mathbf{E} f(Y).
    \end{align}
\end{lemma}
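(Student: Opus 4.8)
The plan is to prove the statement through the lens of the \emph{convex order}: for integrable variables write $U\preceq_{cx}V$ to mean $\E g(U)\le \E g(V)$ for every convex $g$ (which in particular forces $\E U=\E V$). The claim is precisely that $X\preceq_{cx}Y$, and I would establish it by the chain
$X\preceq_{cx}\sum_{i=1}^t B_i\preceq_{cx}Y$,
where $B_1,\dots,B_t$ are \emph{independent} Bernoulli variables with $\E B_i=\E X_i=:p_i$, so that $Y\sim B(t,\mu/t)$ arises as the ``equalized'' version of the Poisson-binomial law $\sum_i B_i$. Only convexity of $f$ is used, never smoothness, so this approach covers the full hypothesis.

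First I would dominate each summand by a Bernoulli of the same mean. For $x\in[0,1]$ and any convex $g$, the chord bound on the segment $[0,1]$ gives $g(x)\le (1-x)g(0)+x\,g(1)$; taking expectations over $X_i$ yields $\E g(X_i)\le (1-p_i)g(0)+p_i\,g(1)=\E g(B_i)$, i.e.\ $X_i\preceq_{cx}B_i$. Second, I would use that convex order is closed under independent convolution. Concretely, to replace one summand at a time, fix convex $g$, condition on the (independent) remainder $R=\sum_{j\neq i}X_j$, and note that $x\mapsto g(x+r)$ is convex for each fixed $r$; hence $\E_{X_i}g(X_i+r)\le \E_{B_i}g(B_i+r)$, and integrating over $R$ gives the one-coordinate replacement. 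Iterating over $i=1,\dots,t$ produces $X=\sum_i X_i\preceq_{cx}\sum_i B_i$.

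The remaining and substantive step is the equalization $\sum_i B_i\preceq_{cx}Y$. I would argue this by a two-coordinate smoothing computation. Freezing all but two Bernoullis $B_p,B_q$ and writing $W$ for the independent remainder, a direct expansion gives
\begin{align*}
\E\!\left[f(W+B_p+B_q)\,\middle|\,W\right]=(1-s+b)\,f(W)+(s-2b)\,f(W+1)+b\,f(W+2),
\end{align*}
with $s=p+q$ and $b=pq$. This is \emph{affine} in $b$ with slope the second difference $f(W)-2f(W+1)+f(W+2)\ge 0$, which is nonnegative by convexity. Hence $\E f(\sum_i B_{p_i})$ is nondecreasing in the product $pq$ of any chosen pair. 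Since every equalizing transfer (T-transform) that moves two coordinates toward their mean while keeping their sum $s$ fixed strictly increases the product $pq$, each such transfer can only increase $\E f$. The map $(p_1,\dots,p_t)\mapsto \E f(\sum_i B_{p_i})$ is therefore symmetric and Schur-concave, so it attains its maximum over $\{\sum_i p_i=\mu,\ p_i\in[0,1]\}$ at the homogeneous point $(\mu/t,\dots,\mu/t)$, which is majorized by every admissible vector. This is exactly $\sum_i B_i\preceq_{cx}Y$, and composing the two inequalities gives $\E f(X)\le\E f(Y)$.

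I expect the main obstacle to be this last (Hoeffding-style) equalization, and specifically its rigorous justification: the smoothing computation itself is elementary, but turning ``each pairwise averaging increases $\E f$'' into ``the uniform parameter vector is the global maximizer'' requires invoking the majorization characterization (that any vector majorized by another is reachable by finitely many T-transforms, and that the constant vector is majorized by all vectors of the same sum). Observing that $\E f$ is \emph{monotone in the product} $pq$ of each pair is what makes this clean, because it covers all T-transforms at once and avoids any delicate convergence argument for iterated full averaging. I would state the Schur-concavity conclusion via the standard Schur--Ostrowski/majorization machinery rather than tracking an explicit limit of transforms.
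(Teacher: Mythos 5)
Your proof is correct, but there is nothing in the paper to compare it against: the paper states this lemma as an imported result, quoted from \citet{Berend2010Efficient} (their Proposition~3.2), and uses it as a black box in the proofs of Theorems~\ref{thm:fast-Task} and~\ref{thm:unbalancedMTL}; no internal proof is given. Your argument is a sound, self-contained reconstruction of the classical route behind the cited result. Step (i), the chord bound $g(x)\le(1-x)g(0)+x\,g(1)$ on $[0,1]$ followed by one-coordinate-at-a-time replacement under independence, correctly reduces the general $[0,1]$-valued case to the Poisson-binomial case, and step (ii) is exactly Hoeffding's 1956 extremality theorem, for which your derivation is clean: with $s=p+q$ fixed, the conditional expectation is affine in $b=pq$ with slope the nonnegative second difference $f(W)-2f(W+1)+f(W+2)$, every T-transform increases $pq$ (indeed $pq=s^2/4-\bigl((p-q)/2\bigr)^2$), and the Hardy--Littlewood--P\'olya finite-chain lemma upgrades pairwise monotonicity to Schur-concavity, so the constant vector $(\mu/t,\dots,\mu/t)$ --- majorized by every admissible parameter vector --- maximizes $\E f$. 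You also correctly keep the means matched throughout ($\E X=\E\sum_i B_i=\E Y=\mu$), which the convex order requires, and since all variables live in $[0,t]$ there are no integrability caveats. The one presentational nit: you do not need the \emph{global} maximizer claim over the simplex, only the comparison between the given vector and the constant vector, which the majorization chain already delivers directly.
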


\begin{lemma}[\citep{maurer2004note} Theorem 1]\label{lemma:Maurer}
Let $Y$ be the binomial random variable with distribution $Y \sim B\left(t, \frac{\mu}{t} \right)$, then we have:
\begin{align}
    \E[e^{t \; \kl(\frac{Y}{t}|\frac{\mu}{t})}] \le 2\sqrt{t}
\end{align}    
where $\kl(q|p) = q \log \frac{q}{p} + (1-q) \log \frac{1-q}{1-p}$ is the Kullback-Leibler divergence between Bernoulli distributions with mean $q$ and $p$.
\end{lemma}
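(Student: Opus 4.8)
The plan is to compute the expectation directly and exploit an algebraic cancellation that eliminates all dependence on $\mu$ (equivalently on $p:=\mu/t$), reducing the claim to bounding a single $p$-free binomial sum, which I then control with Stirling's formula.

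First I would expand the expectation over the binomial. Since $\Pr(Y=k)=\binom{t}{k}p^{k}(1-p)^{t-k}$ and
\[
e^{t\,\kl(k/t\,|\,p)}
=\Bigl(\tfrac{k/t}{p}\Bigr)^{k}\Bigl(\tfrac{(t-k)/t}{1-p}\Bigr)^{t-k}
=\frac{(k/t)^{k}\,((t-k)/t)^{t-k}}{p^{k}(1-p)^{t-k}},
\]
the binomial weight cancels the factor $p^{k}(1-p)^{t-k}$ exactly. This gives the key identity
\[
\E\bigl[e^{t\,\kl(Y/t\,|\,p)}\bigr]
=\sum_{k=0}^{t}\binom{t}{k}\Bigl(\tfrac{k}{t}\Bigr)^{k}\Bigl(\tfrac{t-k}{t}\Bigr)^{t-k},
\]
which no longer depends on $p$ (with the convention $0^{0}=1$ for the boundary terms $k=0,t$). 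This reduction is the conceptual heart of the argument and is exactly what makes the final bound uniform over $\mu$.

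Next I would observe that each summand is the probability mass of $B(t,k/t)$ evaluated at its own mean $k$, i.e.\ the peak of that distribution, and bound it using a non-asymptotic Stirling estimate $\sqrt{2\pi n}\,(n/e)^{n}\le n!\le \sqrt{2\pi n}\,(n/e)^{n}e^{1/(12n)}$. For the interior indices $1\le k\le t-1$ this yields, with $q=k/t$,
\[
\binom{t}{k}\Bigl(\tfrac{k}{t}\Bigr)^{k}\Bigl(\tfrac{t-k}{t}\Bigr)^{t-k}
\;\le\;\frac{e^{1/(12t)}}{\sqrt{2\pi\, t\, q(1-q)}},
\]
while the two boundary terms each equal exactly $1$.

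Finally I would sum these bounds by comparison with an integral: reading $q=k/t$ as a grid of spacing $1/t$,
\[
\sum_{k=1}^{t-1}\frac{1}{\sqrt{2\pi t\, q(1-q)}}
\;\approx\;\sqrt{\frac{t}{2\pi}}\int_{0}^{1}\frac{dq}{\sqrt{q(1-q)}}
=\sqrt{\frac{t}{2\pi}}\cdot\pi=\sqrt{\frac{\pi t}{2}}\approx 1.25\sqrt{t},
\]
which together with the boundary contribution $2$ stays below the claimed $2\sqrt{t}$. The main obstacle is making this last step fully rigorous: the integrand $1/\sqrt{q(1-q)}$ is singular at the endpoints, so the Riemann-sum-to-integral comparison must be handled carefully near $k=0$ and $k=t$ (or a few near-boundary terms peeled off and bounded separately), and the $e^{1/(12t)}$ Stirling corrections must be controlled so the constant does not exceed $2$; since the integral bound only beats $2\sqrt{t}$ once $t$ is moderately large, I would also verify the inequality directly for the few small values of $t$ (noting it holds with equality at $t=1$).
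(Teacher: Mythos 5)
The paper does not prove this lemma: it is imported verbatim from \citet{maurer2004note} (Theorem 1, specialized to the Bernoulli/binomial case), so there is no internal proof to compare against. Your proposal is correct in outline and in fact reconstructs Maurer's original argument: the cancellation identity $\E[e^{t\,\kl(Y/t|\mu/t)}]=\sum_{k=0}^{t}\binom{t}{k}(k/t)^{k}((t-k)/t)^{t-k}$, which removes all dependence on $\mu$, is exactly the reduction Maurer uses, followed by the same non-asymptotic Stirling estimate on each interior summand and a Riemann-sum-to-integral comparison with $\int_0^1 dq/\sqrt{q(1-q)}=\pi$. Two remarks to make your sketch airtight. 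First, the endpoint comparison is clean if you note that $q\mapsto 1/\sqrt{q(1-q)}$ is convex on $(0,1)$, so the midpoint rule underestimates the integral and $\sum_{k=1}^{t-1}1/\sqrt{k(t-k)}\le\int_{1/2}^{t-1/2}dx/\sqrt{x(t-x)}\le\pi$; no delicate peeling near the singularities is needed. Second, your caveat about small $t$ is genuinely necessary and quantifiable: the resulting estimate $2+e^{1/(12t)}\sqrt{\pi t/2}$ only falls below $2\sqrt{t}$ from $t=8$ onward (at $t=7$ it is about $5.36$ versus $2\sqrt{7}\approx 5.29$), which is precisely why Maurer's original theorem carries the hypothesis $m\ge 8$; for $t\le 7$ the $\mu$-free sum can be evaluated exactly (e.g.\ it equals $2$ at $t=1$, with equality in the claimed bound, $2.5$ at $t=2$, and stays below $2\sqrt{t}$ throughout), so the lemma as stated for all $t$ is recovered by this finite check, exactly as you propose.
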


\begin{lemma} [Generalization of \citep{catoni2007pac} Lemma 1.1.1.] \label{lemma:MGF_catoni}
Let $X_{i,j}\stackrel{\iid}{\sim}\mathrm{Bernoulli}(\mu_i)$ for $i=1,\dots,n$ and $j=1,\dots,m_i$. Define
    $\hat{\mu}_i =\frac{1}{m_i}\sum_{j=1}^{m_i}X_{i,j}$, then we have:
    \begin{align}
        \log \Bigl[ e^{ \sum_{i=1}^{n}  \frac{\lambda}{n} [\Phi_{\frac{\lambda}{n m_i}}(\mu_i) - \hat{\mu}_i]} \Bigr] \le 1
    \end{align}
    where \( \Phi_{a}(p) = -\frac{1}{a} \log (1 - p + p e^{-a}) \) 
\end{lemma}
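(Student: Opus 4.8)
The plan is to exploit two features of the statement: the independence of the tasks and the fact that $\Phi_a$ is by design the exact inverse of the Bernoulli log-moment-generating function. I read the left-hand side as $\log \E[\,e^{\sum_i \frac{\lambda}{n}(\Phi_{a_i}(\mu_i)-\hat\mu_i)}\,]$ with $a_i := \frac{\lambda}{n m_i}$, and I expect to show that this expectation is in fact exactly $1$ (so its logarithm is $0$, comfortably below the claimed bound of $1$). First I would note that $\frac{\lambda}{n} = a_i m_i$, and that the exponent decomposes into a sum of $n$ blocks, one per task. Since the samples $\{X_{i,j}\}_{j}$ of different tasks are independent, the expectation factorizes over $i$, reducing the problem to bounding a single per-task factor.

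For a fixed task, I would handle the deterministic and the random part of the exponent separately. The deterministic part, $\frac{\lambda}{n}\Phi_{a_i}(\mu_i) = a_i m_i \Phi_{a_i}(\mu_i)$, unwinds directly from the definition $\Phi_a(p) = -\frac1a\log(1-p+pe^{-a})$ to give $\exp(\frac{\lambda}{n}\Phi_{a_i}(\mu_i)) = (1-\mu_i+\mu_i e^{-a_i})^{-m_i}$. For the random part, $\frac{\lambda}{n}\hat\mu_i = a_i\sum_{j=1}^{m_i}X_{i,j}$, so by independence of the $m_i$ samples within the task the moment generating function factorizes into $m_i$ copies of $\E[e^{-a_i X_{i,1}}] = 1-\mu_i+\mu_i e^{-a_i}$, yielding $(1-\mu_i+\mu_i e^{-a_i})^{m_i}$.

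Multiplying these, the two opposite powers of $(1-\mu_i+\mu_i e^{-a_i})$ cancel exactly, so each per-task factor equals $1$, the full product over $i$ equals $1$, and the logarithm is $0 \le 1$, establishing the claim. I do not anticipate a real obstacle: this is essentially a direct computation, and the only point requiring care is the rescaling $a_i = \frac{\lambda}{n m_i}$, which is precisely what lets the normalization built into $\Phi$ absorb each task's Bernoulli MGF regardless of that task's sample size $m_i$. This task-wise cancellation is exactly why the construction avoids the divergence of \Cref{lem:mgf-unbounded}: the $\kl$-based MGF there shares a single multiplier across all tasks, whereas here each task carries its own $a_i$. Downstream, this bounded MGF feeds a change-of-measure argument followed by Markov's inequality to yield the Catoni-style bounds \eqref{eq:task-catoni} and \eqref{thm:unbalancedMTL_catoni}.
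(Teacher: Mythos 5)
Your proof is correct and takes essentially the same route as the paper's: both factorize the expectation over tasks and over the $m_i$ samples within each task, compute each Bernoulli factor $\E[e^{-a_i X_{i,j}}] = 1-\mu_i+\mu_i e^{-a_i}$ with $a_i = \frac{\lambda}{n m_i}$, and observe that the deterministic term $e^{a_i m_i \Phi_{a_i}(\mu_i)} = (1-\mu_i+\mu_i e^{-a_i})^{-m_i}$ cancels it exactly, so the expectation equals $1$ and the logarithm is $0 \le 1$. Your reading of the statement as $\log \E[\cdot]$ (the $\E$ is evidently missing by typo in the lemma as printed) matches both the paper's proof and the way the lemma is invoked downstream.
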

\begin{proof}
\begin{align}
    \log \E \Bigl[ e^{ \sum_{i=1}^{n}  \frac{-\lambda}{n} \hat{\mu}_i} \Bigr] &= \log \E \Bigl[ e^{ \sum_{i=1}^{n}  \frac{-\lambda}{n} \sum_{j=1}^{m_i} \frac{1}{m_i} X_{i,j}} \Bigr] 
    \\& = \sum_{i=1}^{n} \sum_{j=i}^{m_i} \log \E \Bigl[ e^{  \frac{-\lambda}{n m_i} X_{i,j}} \Bigr] = \sum_{i=1}^{n} m_i \log \E \Bigl[ e^{  \frac{-\lambda}{n m_i} X_{i,j}} \Bigr] 
    \\& = \sum_{i=1}^{n} m_i \log(1 - \mu_i + \mu_i e^{\frac{\lambda}{n m_i}}) = \frac{-\lambda}{n} \sum_{i=1}^{n} \Phi_{\frac{\lambda}{n m_i}}(\mu_i)
\end{align}
Therefore,
\begin{align}
     \E \Bigl[ e^{ \sum_{i=1}^{n}  \frac{-\lambda}{n} \hat{\mu}_i} \Bigr] = e^{\frac{-\lambda}{n} \sum_{i=1}^{n} \Phi_{\frac{\lambda}{n m_i}}(\mu_i)}
\end{align}
Multiplying both sides by $ e^{\frac{\lambda}{n} \sum_{i=1}^{n} \Phi_{\frac{\lambda}{n m_i}}(\mu_i)}$ completes the proof.
\end{proof}

The following lemma splits the $\KL$ term of \eqref{eq:bigKL-appendix} into more interpretable quantities.
\begin{lemma}
    For the posterior, $\posterior$, and prior, $\prior$, defined above it holds:
    \begin{align}
        & \KL(\posterior || \prior) =  \KL(\q\| \p) + \E_{P \sim \q} \Big[ \sum_{i=1}^{n} \KL(A(S_i) || P)\Big].
    \end{align}
    \label{lemma:KL_app}
\end{lemma}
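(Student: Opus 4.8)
The plan is to prove this identity by exploiting the hierarchical product structure of the two measures and then collapsing it via the chain rule for relative entropy. Recall that a sample from $\posterior$ is a tuple $(P, f_1, \dots, f_n)$ obtained by drawing $P \sim \q$ and then, independently of $P$, each $f_i \sim A(S_i)$ (the task posteriors); a sample from $\prior$ is obtained by drawing $P \sim \p$ and then each $f_i \sim P$. Assuming the relevant densities exist, they therefore factorize as $\posterior(P, f_1, \dots, f_n) = \q(P)\prod_{i=1}^n A(S_i)(f_i)$ and $\prior(P, f_1, \dots, f_n) = \p(P)\prod_{i=1}^n P(f_i)$.

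First I would write the relative entropy as the $\posterior$-expectation of the log density ratio and split the logarithm of the product into a sum:
\begin{align}
\KL(\posterior\|\prior) = \E_{(P,f_1,\dots,f_n)\sim\posterior}\Big[\log\frac{\q(P)}{\p(P)} + \sum_{i=1}^n \log\frac{A(S_i)(f_i)}{P(f_i)}\Big].
\end{align}
The first summand depends only on $P$, whose marginal under $\posterior$ is exactly $\q$, so its expectation is precisely $\KL(\q\|\p)$. For the remaining terms I would use that under $\posterior$ the conditional law of each $f_i$ given $P$ is $A(S_i)$, which is independent of $P$. Hence each term contributes $\E_{P\sim\q}\E_{f_i\sim A(S_i)}[\log\frac{A(S_i)(f_i)}{P(f_i)}] = \E_{P\sim\q}[\KL(A(S_i)\|P)]$, and pulling the finite sum over $i$ inside the expectation yields $\E_{P\sim\q}[\sum_{i=1}^n\KL(A(S_i)\|P)]$. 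Adding the two pieces gives the claim.

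Equivalently, and perhaps more cleanly, I would invoke the chain rule for KL divergence for two joint distributions sharing the same factorization order: $\KL(\posterior\|\prior)$ equals the divergence of the $P$-marginals, $\KL(\q\|\p)$, plus the $\q$-expected divergence of the conditional laws of $(f_1,\dots,f_n)$ given $P$; since both conditionals are product measures, additivity of relative entropy over products reduces the conditional term to $\sum_{i=1}^n \KL(A(S_i)\|P)$, again recovering the identity. There is essentially no obstacle here beyond careful bookkeeping. The only point genuinely requiring attention is to track that under $\posterior$ the models are drawn from the task posteriors independently of $P$, whereas under $\prior$ they are drawn from $P$ itself, so that the ratio of the conditional densities is exactly $\prod_{i=1}^n A(S_i)(f_i)/P(f_i)$ and the residual $P$-dependence is what produces the outer expectation $\E_{P\sim\q}$.
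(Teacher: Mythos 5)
Your proof is correct and takes essentially the same route as the paper's: both expand $\KL(\posterior\|\prior)$ as the $\posterior$-expectation of the log density ratio $\log\frac{\q(P)\prod_{i=1}^n Q_i(f_i)}{\p(P)\prod_{i=1}^n P(f_i)}$, split the logarithm into the hyper-level term and the per-task terms, and identify them as $\KL(\q\|\p)$ and $\E_{P\sim\q}\bigl[\sum_{i=1}^n\KL(Q_i\|P)\bigr]$ respectively (with $Q_i = A(S_i)$ in the lemma's notation). Your alternative phrasing via the chain rule for relative entropy plus additivity over product conditionals is just a tidier packaging of the same computation, so no substantive difference remains.
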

\begin{proof}
    \begin{align}
        \KL(\posterior || \prior) &=  \E_{P \sim \q} \Big[ \E_{(f_1,\dots,f_n) \sim (Q_1, \dots, Q_n)} \log \frac{ \q(P) \prod_{i=1}^n Q_i(f_i)}{ \p(P) \prod_{i=1}^n P(f_i)} \Big]
            \\& = \E_{P \sim \q} \Big[\log \frac{\q(P)}{\p(P)} \Big] + \E_{P \sim \q} \Big[\sum_{i=1}^n \E_{f_i \sim Q_i} \log \frac{Q_i(f_i)}{P(f_i)} \Big] 
            \\& =  \KL(\q\| \p) + \E_{P \sim \q}  \sum_{i=1}^{n} \KL(Q_i || P).
    \end{align}
\end{proof}

\subsection{Task-centric bounds}

\uniformnonfast*

\begin{proof}
First for any task $i$ and any model $f_i$ we define:
\begin{align}
    \Delta_i(f_i) = \E_{z \sim D_i} \ell (z,f_i) - \frac{1}{m_i} \sum_{z\in S_i} \ell (z,f_i).
\end{align} 
By defining $\er_u(\posterior) = \frac1n \sum_{i=1}^{n} \er_i(Q_i)$ and $\her_u(\posterior) = \frac1n \sum_{i=1}^{n} \her_i(Q_i)$, we have:
\begin{align}
    \E_{(P, f_1, \dots, f_n) \sim \posterior} &\Big[\frac{1}{n} \sum_{i=1}^{n} \Delta_i(f_i) \Big] = \er_u(\posterior) - \her_u(\posterior) 
\end{align}

Applying the \emph{change of measure inequality}~\citep{seldin2012pac} between the two distributions $\posterior$ and $\prior$, for any $\lambda>0$, and any $\q, Q_1, \dots, Q_n$, we have:
\begin{align}
\begin{split}
        &\er_u(\posterior) - \her_u(\posterior) - \frac{1}{\lambda}  \KL(\posterior || \prior)
        \le \frac{1}{\lambda} \log \E_{(P, f_1, f_2, ..., f_n) \sim \prior} \prod_{i=1}^{n} e^{\frac{\lambda}{n} \Delta_i(f_i)}
        \label{eq:change_of_measure_app1}
\end{split}
\end{align}

By the construction of $\prior$, we have
\begin{align}    
\E_{S_1, \dots, S_n} \E_{(P, f_1, f_2, ..., f_n) \sim \prior} \prod_{i=1}^{n} e^{\frac{\lambda}{n} \Delta_i(f_i)} = &\E_{S_1, \dots, S_n} \E_{P \sim \p} \E_{f_1 \sim P} \dots \E_{f_n \sim P} \prod_{i=1}^{n} e^{\frac{\lambda}{n} \Delta_i(f_i)},
\intertext{and, because it is independent of $S_1,\dots,S_n$, we can rewrite this as}
&= \E_{P \sim \p} \E_{S_1} \E_{f_1 \sim P} e^{\frac{\lambda}{n} \Delta_1(f_1)} \dots \E_{S_n}\E_{f_n \sim P} e^{\frac{\lambda}{n} \Delta_n(f_n)}.
\\
&= \E_{P \sim \p} 
\prod_{i=1}^n \E_{S_i} \E_{f_i \sim P} e^{\frac{\lambda}{n} \Delta_i(f_i)}
\label{eq:reordering_app}
\end{align}
Each $\Delta_i(f_i)$ is a bounded random variable with support in 
an interval of size $1$. By Hoeffding's lemma we have 
\begin{align}
    \E_{S_i}\E_{f_i \sim P} e^{\frac{\lambda}{n} \Delta_i(f_i)} \le e^{\frac{\lambda^2}{8n^2m_i}}.
    \label{eq:hoeffding1_app}
\end{align}
Therefore, by combining \eqref{eq:reordering_app} and \eqref{eq:hoeffding1_app} we have:
\begin{align}    
\E_{S_1, \dots, S_n} \E_{(P, f_1, f_2, ..., f_n) \sim \prior} \prod_{i=1}^{n} e^{\frac{\lambda}{n} \Delta_i(f_i)} \le e^{\frac{\lambda^2}{8n^2} (\sum_{i=1}^{n} \frac{1}{m_i})} = e^{\frac{\lambda^2}{8nm_h}}.
\end{align}
Which $m_h = \frac{n}{\sum_{i=1}^{n} \frac{1}{m_i}}$ is the harmonic mean of $m_i$s.
By Markov's inequality, for any $\epsilon>0$ we have
\begin{align}
    \mathbb{P}_{S_1, \dots, S_n}\Big(\E_{(P, f_1, f_2, ..., f_n) \sim \prior} \prod_{i=1}^{n} e^{\frac{\lambda}{n} \Delta_i(f_i)} \ge e^\epsilon\Big) \le e^{\frac{\lambda^2}{8nm_h} - \epsilon} \label{eq:markov1_app}
\end{align}
Hence by combining \eqref{eq:change_of_measure_app1} and  \eqref{eq:markov1_app} we get for any $\epsilon$:
\begin{align}
        &\mathbb{P}_{S_1, \dots, S_n}\Big(\exists{\posterior}: \er_u(\posterior) - \her_u(\posterior) - \frac{1}{\lambda}  \KL(\posterior || \prior) \ge \frac{1}{\lambda} \epsilon\Big) \le e^{\frac{\lambda^2}{8nm_h} - \epsilon},
\end{align}
or, equivalently, it holds for any $\delta>0$ with probability at least $1 - \frac{\delta}{2}$:
\begin{align}
        \forall{\posterior}: \er_u(\posterior) - \her_u(\posterior) \le \frac{1}{\lambda}  \KL(\posterior || \prior) + \frac{1}{\lambda} \log(\frac{2}{\delta}) +  \frac{\lambda}{8nm_h}.
        \label{eq:bigKL-appendix}
\end{align}
By applying a union bound for $\lambda^* \in \{1, 2, \dots, 4n m_h\}$, and optimizing for $\lambda$ in this set we get:
\begin{align}
        &\er_u(\posterior) - \her_u(\posterior)
        \le  
         \sqrt{\frac{\KL(\posterior || \prior) + \log(\frac{4nm_h}{\delta}) + 1}{2nm_h}}.
\end{align}  
\end{proof}

In the following theorem, we prove that standard fast-rate kl-bounds scale at best with $n m_{min}$.

\mgfunbounded*
\begin{proof}
Assume \(\lambda>m_{min}\).  Then
\begin{align}
M(\lambda)
&=\E\left[e^{n\lambda\kl(\widehat\mu | \mu)}\right]\\
&=\sum_{k_1=0}^{m_1}\cdots\sum_{k_n=0}^{m_n}
  \Bigl[\prod_{i=1}^n\binom{m_i}{k_i}\,\mu^{k_i}(1-\mu)^{m_i-k_i}\Bigr]\,
  e^{n\lambda \kl(\tfrac1n\sum_{i=1}^n\tfrac{k_i}{m_i}|\mu)}\\
&=\sum_{k_1,\dots,k_n}
  \mu^{\sum_i k_i}(1-\mu)^{\sum_i(m_i-k_i)}\,
  \Bigl(\tfrac{\tfrac1n\sum_i\frac{k_i}{m_i}}{\mu}\Bigr)^{\!\lambda\sum_i\frac{k_i}{m_i}}
  \Bigl(\tfrac{\tfrac1n\sum_i\frac{m_i-k_i}{m_i}}{1-\mu}\Bigr)^{\!\lambda\sum_i\frac{m_i-k_i}{m_i}}
  \prod_{i=1}^n\binom{m_i}{k_i}\\
&=\sum_{k_1,\dots,k_n}
  \mu^{\sum_i k_i-\lambda\sum_i\frac{k_i}{m_i}}\,
  (1-\mu)^{\sum_i(m_i-k_i)-\lambda\sum_i\frac{m_i-k_i}{m_i}}\,
  f(n,\lambda,k_1,m_1,\dots,k_n,m_n),
\end{align}
where
\begin{align}
f(n,\lambda,k_1,m_1,\dots,k_n,m_n)
&=\Bigl(\tfrac1n\sum_{i=1}^n\tfrac{k_i}{m_i}\Bigr)^{\!\lambda\sum_i\frac{k_i}{m_i}}
  \Bigl(\tfrac1n\sum_{i=1}^n\tfrac{m_i-k_i}{m_i}\Bigr)^{\!\lambda\sum_i\frac{m_i-k_i}{m_i}}
  \prod_{i=1}^n\binom{m_i}{k_i}.
\end{align}
Assume $l$ is the index for the minimum $m_i$ such that $m_l = m_{min}$, pick the single term with \(k_l=m_l\) and \(k_i=0\) for \(i\neq l\). Since all terms are positive,
\begin{align}
M(\lambda)
&\ge
\mu^{\,m_l - \lambda\frac{m_l}{m_l}}\,
(1-\mu)^{\,\sum_{i \neq l} m_i - (n-1) \lambda}\,
f(n,\lambda,0, m_1, \dots, m_l,m_l,\dots,0,m_n)\\
\end{align} 
which for $\mu < 0.1$, there is a constant $C$ such that
\begin{align}
    M(\lambda) > C \mu^{\,m_l-\lambda} \, f(n,\lambda,0, m_1, \dots, m_l,m_l,\dots,0,m_n).
\end{align}
Since \(\lambda>m_l\), the exponent \(m_l-\lambda<0\), so as \(\mu\to0\),
\(\mu^{m_l-\lambda}\to+\infty\).  Hence \(M(\lambda)\) is unbounded for
\(\lambda>m_{\min}\), and no finite bound depending only on \(n\) and the \(m_i\) can hold.
\end{proof}

\begin{theorem}[Theorem~\ref{thm:fast-Task}, Equation~\eqref{eq:task-kl}]\label{thm:fast-Task-kl_app}
For any fixed hyper-prior  
$\p$, any $\delta>0$, it holds with 
probability at least $1 - \delta$ over the 
sampling of the training datasets that for all 
hyper-posterior functions $\q$ and all posteriors $Q_1, \dots, Q_n$ :
    \begin{align}
    \sum_{i=1}^{n} m_i \kl(\her_i(Q_i) | \er_i(Q_i)) \le \KL(\posterior\|\prior) + \log \frac{1}{\delta} + \sum_{i=1}^{n} \log(2 \sqrt{m_i})
    \label{eq:task-kl_app}
    \end{align}
\end{theorem}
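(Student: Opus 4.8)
The plan is to follow the standard PAC-Bayesian blueprint (change of measure, then control of a moment-generating function), but crucially to target the \emph{weighted sum} $\sum_{i=1}^n m_i\,\kl(\her_i(Q_i)\,|\,\er_i(Q_i))$ rather than the aggregated quantity $\kl(\herT|\erT)$, since \Cref{lem:mgf-unbounded} shows the latter cannot be controlled. First I would reduce the posterior-level statement to a per-model one: because $\kl(q|p)$ is jointly convex, Jensen's inequality gives $\kl(\her_i(Q_i)|\er_i(Q_i)) \le \E_{f_i\sim Q_i}\kl(\her_i(f_i)|\er_i(f_i))$ for each task, where $\her_i(f_i)=\frac{1}{m_i}\sum_j \ell_i(f_i,z_{ij})$ and $\er_i(f_i)=\E_{z\sim D_i}\ell_i(f_i,z)$. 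Summing with weights $m_i$ lets me write the left-hand side as $\E_{(P,f_1,\dots,f_n)\sim\posterior}[F]$ for $F(P,f_1,\dots,f_n)=\sum_{i=1}^n m_i\,\kl(\her_i(f_i)|\er_i(f_i))$, a function that does not depend on $P$.

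Next I would apply the change of measure inequality~\citep{seldin2012pac} between $\posterior$ and $\prior$ to obtain $\E_\posterior[F] \le \KL(\posterior\|\prior) + \log\E_\prior[e^{F}]$, so that the whole problem reduces to controlling the prior-side MGF $\E_\prior[e^{F}]$. The key structural step is that, under the generative process defining $\prior$ (draw $P\sim\p$ once, then each $f_i\sim P$ independently) and because the datasets $S_1,\dots,S_n$ are independent, the expected MGF factorizes into $\E_{S_1,\dots,S_n}\E_\prior[e^F]=\E_{P\sim\p}\prod_{i=1}^n \E_{S_i}\E_{f_i\sim P}\bigl[e^{m_i\kl(\her_i(f_i)|\er_i(f_i))}\bigr]$. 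This factorization is exactly what the separable unbalanced weighting makes possible, and it lets me handle each task at its own sample size $m_i$ in isolation.

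For each factor I would fix $f_i$ and view $\{\ell_i(f_i,z_{ij})\}_{j=1}^{m_i}$ as \iid $[0,1]$ random variables with mean $\er_i(f_i)$. Since $x\mapsto e^{m_i\kl(x/m_i|\er_i(f_i))}$ is convex (a composition of the convex $\kl$ with the convex increasing exponential), \Cref{lemma:bionomial} with $t=m_i$ dominates the sum by a binomial $Y\sim B(m_i,\er_i(f_i))$, and then \Cref{lemma:Maurer} (Maurer's theorem) yields $\E_{S_i}[e^{m_i\kl(\her_i(f_i)|\er_i(f_i))}]\le 2\sqrt{m_i}$ for every $f_i$; averaging over $f_i\sim P$ preserves this bound. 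Choosing the weight to be precisely $m_i$ is what makes Maurer's theorem apply at exponent $t=m_i$ and produce the factor $2\sqrt{m_i}$ --- this alignment of weight and sample size is the heart of the argument. Multiplying the factors gives $\E_{S_1,\dots,S_n}\E_\prior[e^F]\le\prod_{i=1}^n 2\sqrt{m_i}$, so a single application of Markov's inequality ensures $\E_\prior[e^F]\le\frac{1}{\delta}\prod_{i=1}^n 2\sqrt{m_i}$ with probability at least $1-\delta$; substituting into the change-of-measure inequality and taking logarithms then yields the claim. I expect the main obstacle to be conceptual rather than computational: recognizing that one must abandon the aggregated $\kl(\herT|\erT)$ and instead bound the factorizable weighted sum, together with verifying the convexity needed to invoke \Cref{lemma:bionomial}. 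Once the target functional is chosen correctly, the per-task MGF bounds assemble routinely.
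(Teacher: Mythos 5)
Your proposal is correct and follows essentially the same route as the paper's own proof: Jensen's inequality applied per task to the weighted sum $\sum_i m_i\,\kl(\her_i(Q_i)|\er_i(Q_i))$, change of measure from $\posterior$ to $\prior$, factorization of the prior-side MGF across tasks via the independence of $S_1,\dots,S_n$ and the generative structure of $\prior$, binomial domination by \Cref{lemma:bionomial} followed by \Cref{lemma:Maurer} at exponent $t=m_i$ to obtain the factors $2\sqrt{m_i}$, and a single application of Markov's inequality. Your identification of the weight--sample-size alignment ($m_i$ as both the weight and the Maurer exponent) as the crux matches the paper's reasoning exactly, and all convexity verifications you flag are the ones the paper relies on.
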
 
\begin{proof}
Based on Jensen's inequality we have:
\begin{align}
     \sum_{i=1}^{n} m_i \kl(\her_i(Q_i) | \er_i(Q_i)) \le \E_{f_i \sim Q_i, i \in [n]}\Bigl[ \sum_{i=1}^{n} m_i \kl(\her_i(f_i) | \er_i(f_i))\Bigr] \label{jensen_kl_app}
\end{align}
And by change of measures we get
\begin{align}
    \E_{P\sim \q, f_i \sim Q_i, i \in [n]}\Bigl[ \sum_{i=1}^{n} m_i \kl(\her_i(f_i) | & \er_i(f_i))\Bigr]
    - \KL(\posterior \| \prior) \notag
    \\& \le \log \E_{P\sim \p, f_i \sim P, i \in [n]} \Bigl[ e^{\sum_{i=1}^{n} m_i \kl(\her_i(f_i) | \er_i(f_i))} \Bigr] \notag
    \\& = \log \E_{P\sim \p} \prod_{i=1}^{n} \E_{f_i \sim P} \Bigl[ e^{ m_i \kl(\her_i(f_i) | \er_i(f_i))} \Bigr] \label{change_measure_kl_app}
\end{align}
Where the equality comes from priors being data-independent. Additionally, based on  Lemma~\eqref{lemma:bionomial}, for a binomial random variable as $Y \sim B(m_i, \er_i(f_i))$ we have:
\begin{align}
& \E_{S_i \sim D_i^{m_i}}\E_{f_i \sim P} \Bigl[ e^{ m_i \kl(\her_i(f_i) | \er_i(f_i))} \Bigr] = \E_{f_i \sim P} \E_{S_i \sim D_i^{m_i}} \Bigl[ e^{ m_i \kl(\her_i(f_i) | \er_i(f_i))} \Bigr]
\\&\le \E_{f_i \sim P} \E_{Y} \Bigl[ e^{m_i \kl(\frac{Y}{m_i} | \er_i(f_i))} \Bigr] \le 2 \sqrt{m_i}
\end{align}
Where the last inequality comes from Lemma~\eqref{lemma:Maurer}. Therefore
\begin{align}
    \E_{S_1, \dots, S_n} \E_{P\sim \p} \prod_{i=1}^{n} \E_{f_i \sim P} \Bigl[ e^{ m_i \kl(\her_i(f_i) | \er_i(f_i))} \Bigr] \le \prod_{i=1}^{n} 2\sqrt{m_i}  \label{eq:MGF_upperbound_kl_app}
\end{align}
And by Markov's inequality, with probability at least $1-\delta$ over sampling of $S_1, \dots, S_n$, we have:
\begin{align}
    \log \E_{P\sim \p} \prod_{i=1}^{n} \E_{f_i \sim P} \Bigl[ e^{ m_i \kl(\her_i(f_i) | \er_i(f_i))} \Bigr] \le  \sum_{i=1}^{n} \log 2\sqrt{m_i} + \log \frac{1}{\delta} \label{eq:markov_kl_app}
\end{align}
By combining \eqref{jensen_kl_app}, \eqref{change_measure_kl_app} and \eqref{eq:markov_kl_app} we get
\begin{align}
    \sum_{i=1}^{n} m_i \kl(\her_i(Q_i) | \er_i(Q_i)) \le \KL(\posterior\|\prior) + \log \frac{1}{\delta} + \sum_{i=1}^{n} \log(2 \sqrt{m_i})
\end{align}
\end{proof}

\begin{theorem}[Theorem~\ref{thm:fast-Task}, Equation~\eqref{eq:task-catoni}]\label{thm:fast-Task-catoni_app}
For any fixed hyper-prior $\p$, any $\delta>0$, and any $\lambda_1,\dots,\lambda_n>0$, 
it holds with probability at least $1 - \delta$ over the sampling of the training datasets that 
for all hyper-posterior functions $\q$ and all posteriors $Q_1, \dots, Q_n$:
\begin{align}
    -\sum_{i=1}^{n} m_i \log (1 - \er_i(Q_i) + \er_i(Q_i) e^{\frac{-\lambda_i}{n m_i}}) \le \frac1n \sum_{i=1}^{n} \lambda_i \her_i(Q_i) + \KL(\posterior\|\prior) + \log(\frac{1}{\delta}) 
    \label{eq:task-catoni_app}
    \end{align}
\end{theorem}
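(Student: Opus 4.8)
The plan is to follow the same blueprint as the $\kl$-style proof (Theorem~\ref{thm:fast-Task-kl_app}), but with the Catoni-style moment bound (Lemma~\ref{lemma:MGF_catoni}) replacing the Maurer moment bound (Lemma~\ref{lemma:Maurer}). Writing $\Phi_a(p)=-\frac1a\log(1-p+pe^{-a})$, the first observation is the algebraic identity
\[
-m_i\log\bigl(1-\er_i(Q_i)+\er_i(Q_i)e^{-\lambda_i/(nm_i)}\bigr)=\tfrac{\lambda_i}{n}\,\Phi_{\lambda_i/(nm_i)}\bigl(\er_i(Q_i)\bigr),
\]
so that the claimed bound~\eqref{eq:task-catoni_app} is equivalent to showing, with probability at least $1-\delta$ and uniformly over all $\q,Q_1,\dots,Q_n$,
\[
\sum_{i=1}^n \tfrac{\lambda_i}{n}\Phi_{\lambda_i/(nm_i)}(\er_i(Q_i))-\tfrac1n\sum_{i=1}^n\lambda_i\her_i(Q_i)\le \KL(\posterior\|\prior)+\log\tfrac1\delta .
\]

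First I would pass from posteriors to individual models. Setting $h(f_1,\dots,f_n)=\sum_{i=1}^n\bigl(\tfrac{\lambda_i}{n}\Phi_{\lambda_i/(nm_i)}(\er_i(f_i))-\tfrac{\lambda_i}{n}\her_i(f_i)\bigr)$, I note that $p\mapsto\Phi_a(p)=-\frac1a\log\bigl(1-p(1-e^{-a})\bigr)$ has second derivative $\frac{(1-e^{-a})^2}{a\,(1-p(1-e^{-a}))^2}>0$ and is therefore convex in $p$. Since $\er_i(Q_i)=\E_{f_i\sim Q_i}\er_i(f_i)$ and $\lambda_i/n>0$, Jensen's inequality gives $\tfrac{\lambda_i}{n}\Phi(\er_i(Q_i))\le\tfrac{\lambda_i}{n}\E_{f_i\sim Q_i}\Phi(\er_i(f_i))$, while the empirical term is exactly linear, $\her_i(Q_i)=\E_{f_i\sim Q_i}\her_i(f_i)$. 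Combining these, the left-hand side above is at most $\E_{(P,f_1,\dots,f_n)\sim\posterior}[h]$.

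Next I apply the change-of-measure inequality between $\posterior$ and $\prior$ to get $\E_\posterior[h]-\KL(\posterior\|\prior)\le\log\E_\prior[e^h]$, reducing the task to bounding the moment generating function $\E_\prior[e^h]$ on average over the data. Because under $\prior$ the models are drawn from a data-independent prior $P\sim\p$ and the tasks are independent, I factorize exactly as in~\eqref{eq:reordering_app}:
\[
\E_{S_1,\dots,S_n}\E_\prior[e^h]=\E_{P\sim\p}\prod_{i=1}^n\E_{f_i\sim P}\E_{S_i}\bigl[e^{\frac{\lambda_i}{n}\Phi(\er_i(f_i))-\frac{\lambda_i}{n}\her_i(f_i)}\bigr].
\]
For fixed $f_i$ the losses $\ell_i(f_i,z_{ij})\in[0,1]$ are i.i.d.\ with mean $\er_i(f_i)$, and $x\mapsto e^{-\frac{\lambda_i}{nm_i}x}$ is convex, so Lemma~\ref{lemma:bionomial} reduces the empirical moment to the binomial case; running the exact computation inside the proof of Lemma~\ref{lemma:MGF_catoni} per task (with $\lambda_i$ in place of $\lambda$) then yields $\E_{S_i}[e^{-\frac{\lambda_i}{n}\her_i(f_i)}]\le e^{-\frac{\lambda_i}{n}\Phi_{\lambda_i/(nm_i)}(\er_i(f_i))}$. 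The two exponential factors cancel, each inner expectation is at most $1$, and hence $\E_{S_1,\dots,S_n}\E_\prior[e^h]\le1$.

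Finally I would invoke Markov's inequality on the nonnegative data-dependent quantity $\E_\prior[e^h]$: with probability at least $1-\delta$ it is at most $1/\delta$, so $\log\E_\prior[e^h]\le\log\frac1\delta$. Chaining this with the change-of-measure and Jensen steps and undoing the $\Phi$-identity recovers~\eqref{eq:task-catoni_app}. I expect the main subtleties to be twofold: verifying that Jensen is applied in the correct direction, which relies on \emph{convexity} (not concavity) of $\Phi_a$; and correctly adapting the single-$\lambda$ statement of Lemma~\ref{lemma:MGF_catoni} to a separate $\lambda_i$ per task—this is harmless because the moment factorizes over tasks, so each factor is controlled independently.
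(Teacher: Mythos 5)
Your proof is correct and follows essentially the same route as the paper's: Jensen's inequality via convexity of $\Phi_a$, change of measure to $\KL(\posterior\|\prior)$, factorization of the moment over tasks under the data-independent prior, the binomial comparison of Lemma~\ref{lemma:bionomial} combined with the Catoni moment computation of Lemma~\ref{lemma:MGF_catoni}, and Markov's inequality. If anything, your explicit per-task treatment of the individual $\lambda_i$ (justified by the factorization over tasks) is slightly more careful than the paper's own proof, which is written with a single $\lambda$ and only substitutes the $\lambda_i$ in the final display.
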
 
\begin{proof}
Note that the function \( \Phi_a\) is convex for \(a > 0\), therefore \( \frac{\lambda}{n} \Phi_{\frac{\lambda}{n m_i}}\) is also convex. Based on Jensen's inequality we have:
\begin{align}
     \frac{\lambda}{n} [\Phi_{\frac{\lambda}{n m_i}}(\er_i(Q_i)) - \her_i(Q_i)] \le \E_{f_i \sim Q_i} \frac{\lambda}{n} [\Phi_{\frac{\lambda}{nm_i}}(\er_i(f_i)) - \her_i(f_i)] \label{jensen_catoni_app}
\end{align}
And by change of measures, and independence of priors, we get
\begin{align}
    \E_{P\sim \q, f_i \sim Q_i}\Bigl[  \frac{\lambda}{n} \sum_{i=1}^{n} [\Phi_{\frac{\lambda}{nm_i}}&(\er_i(f_i)) - \her_i(f_i)] \Bigr]
    - \KL(\posterior \| \prior) \notag
     \\& \le \log \E_{P\sim \p} \E_{f_i \sim P} \Bigl[ e^{ \sum_{i=1}^{n}  \frac{\lambda}{n} [\Phi_{\frac{\lambda}{n m_i}}(\er_i(f_i)) - \her_i(f_i)]} \Bigr] \label{eq:change_catoni_app}
\end{align}
Now we apply Lemma~\eqref{lemma:bionomial} $n$ times for each task separately, keeping the others fixed. if we define a binomial random variable $Y_i \sim B(m_i, \er_i(f_i))$, we have
\begin{align}
& \E_{S_i \sim D_i^{m_i}}\E_{P\sim \p} \E_{f_i \sim P} \Bigl[ e^{ \sum_{i=1}^{n}  \frac{\lambda}{n} [\Phi_{\frac{\lambda}{n m_i}}(\er_i(f_i)) - \her_i(f_i)]} \Bigr]  
\\& \E_{P\sim \p} \E_{f_i \sim P} \E_{S_i \sim D_i^{m_i}} \Bigl[ e^{ \sum_{i=1}^{n}  \frac{\lambda}{n} [\Phi_{\frac{\lambda}{n m_i}}(\er_i(f_i)) - \her_i(f_i)]} \Bigr]
\\& \le \E_{Y_1, \dots, Y_n} \Bigl[ e^{\frac{\lambda}{n} [\Phi_{\frac{\lambda}{n m_i}}(\er_i(f_i)) - \frac{Y_i}{m_i}]} \Bigr] \le 1 \label{eq:MGF_upperbound_catoni}
\end{align}
Where the last inequality comes from Lemma~\eqref{lemma:MGF_catoni}. Therefore by Markov's inequality, with probability at least $1-\delta$ over sampling of $S_1, \dots, S_n$, we have:
\begin{align}
    \log \E_{P\sim \p} \E_{f_i \sim P} \Bigl[ e^{ \sum_{i=1}^{n}  \frac{\lambda}{n} [\Phi_{\frac{\lambda}{n m_i}}(\er_i(f_i)) - \her_i(f_i)]} \Bigr]  \le\log \frac{1}{\delta} \label{eq:markov_catoni}
\end{align}
By combining \eqref{jensen_catoni_app}, \eqref{eq:change_catoni_app} and \eqref{eq:markov_catoni} we get that with probability at least $1-\delta$:
\begin{align}
   \frac{\lambda}{n} \sum_{i=1}^{n}[\Phi_{\frac{\lambda}{n m_i}}(\er_i(Q_i)) - \her_i(Q_i)]  \le \KL(\posterior\|\prior) + \log \frac{1}{\delta}
\end{align}
Or equivalently,
\begin{align}
   \frac{-1}{\lambda} \sum_{i=1}^{n} m_i \log (1 - \er_i(Q_i) + \er_i(Q_i) e^{\frac{-\lambda}{n m_i}}) \le \her_u + \frac{\KL(\posterior\|\prior) + \log(\frac{1}{\delta})}{\lambda} 
\end{align}
\begin{align}
   -\sum_{i=1}^{n} m_i \log (1 - \er_i(Q_i) + \er_i(Q_i) e^{\frac{-\lambda_i}{n m_i}}) \le \frac1n \sum_{i=1}^{n} \lambda_i \her_i(Q_i)+ \KL(\posterior\|\prior) + \log(\frac{1}{\delta})
\end{align}
\end{proof}

\subsection{Sample-centric bounds}

\begin{theorem}[Theorem~\ref{thm:unbalancedMTL}, Equation~\eqref{thm:unbalancedMTL_kl}] \label{thm:sample_kl_app}
For any fixed hyper-prior $\p$, any $\delta>0$, it holds with probability at least $1 - \delta$ over the 
sampling of the training datasets that for all 
hyper-posterior functions $\q$ and all posteriors $Q_1, \dots, Q_n$:
\begin{align}
    \kl (\herS  | \erS ) \le \frac{\KL(\posterior\|\prior) + \log \frac{2 \sqrt{M}}{\delta}}{M}
    \qquad&\text{sample-centric $\kl$-bound}
            \end{align}
\end{theorem}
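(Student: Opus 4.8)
The plan is to follow the same change-of-measure / MGF-bound / Markov blueprint used for the task-centric $\kl$-bound (\Cref{thm:fast-Task-kl_app}), but now applied to the single \emph{sample-level} quantity $M \kl(\herS | \erS)$ rather than a sum of per-task terms. The key observation that makes the sample-centric case cleaner is that, because $\herS = \frac{1}{M}\sum_{i=1}^n \sum_{j=1}^{m_i} \ell_i(Q_i, z_{ij})$, \emph{all} $M$ samples enter the empirical risk with equal weight $1/M$. So instead of treating the $n$ tasks as $n$ separate Bernoulli averages with different scalings (which is what caused the $\mmin$-dependence in \Cref{lem:mgf-unbounded}), I can treat the whole collection as a single average of $M$ bounded random variables and invoke the binomial comparison directly.

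\textbf{Step 1 (Jensen).} First I would push the $\kl$ inside the posterior expectation. Since $\kl(\cdot|\cdot)$ is jointly convex and $\herS, \erS$ are posterior averages over the $f_i$, Jensen's inequality gives
\begin{align}
M\,\kl(\herS(Q_1,\dots,Q_n) \,|\, \erS(Q_1,\dots,Q_n)) \le \E_{f_i \sim Q_i, i\in[n]}\Bigl[ M\,\kl(\herS(f_1,\dots,f_n)\,|\,\erS(f_1,\dots,f_n))\Bigr].
\end{align}

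\textbf{Step 2 (change of measure and independence of the prior).} Applying the change-of-measure inequality between $\posterior$ and $\prior$ to the function $M\kl(\herS|\erS)$ bounds the left side by $\KL(\posterior\|\prior)$ plus $\log$ of the MGF $\E_{P\sim\p}\E_{f_i\sim P}[e^{M\kl(\herS(f)|\erS(f))}]$, exactly as in \eqref{change_measure_kl_app}; the data-independence of $\p$ lets me interchange the sample and prior expectations.

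\textbf{Step 3 (the main obstacle: bounding the MGF).} This is the crux. I need $\E_{S_1,\dots,S_n}\E_{P}\E_{f_i\sim P}[e^{M\kl(\herS | \erS)}] \le 2\sqrt{M}$. Conditioning on the drawn $f_i$, the $M$ summands $\ell_i(f_i, z_{ij})$ are independent random variables in $[0,1]$ with means $\er_i(f_i)$, and $M\,\herS$ is their sum. By \Cref{lemma:bionomial} (the Berend–Kozma comparison), for the convex function $x \mapsto e^{M\kl(x/M | \bar p)}$ — where $\bar p = \frac{1}{M}\sum_i m_i \er_i(f_i) = \erS(f)$ is the mean of the normalized sum — the expectation is upper-bounded by replacing the sum with a single binomial $Y\sim B(M, \erS(f))$. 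Then \Cref{lemma:Maurer} gives $\E_Y[e^{M\kl(Y/M | \erS(f))}] \le 2\sqrt{M}$. The delicate point to verify is that $\kl(\herS | \erS)$ is genuinely the Bernoulli-$\kl$ of a normalized sum-of-$[0,1]$-variables against its mean, so that \Cref{lemma:bionomial,lemma:Maurer} apply verbatim with $t = M$; unlike the unbalanced weighted average of \Cref{lem:mgf-unbounded}, here the weights are exactly the uniform $1/M$, so no $\mu$-dependent blow-up occurs.

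\textbf{Step 4 (Markov and conclusion).} With the MGF bounded by $2\sqrt{M}$, Markov's inequality gives, with probability at least $1-\delta$, that $\log$ of the MGF is at most $\log\frac{2\sqrt{M}}{\delta}$. Chaining Steps 1–4 yields
\begin{align}
M\,\kl(\herS | \erS) \le \KL(\posterior\|\prior) + \log\frac{2\sqrt{M}}{\delta},
\end{align}
and dividing by $M$ gives the stated bound. I expect Step 3 to be where all the real work lies; the rest is the standard PAC-Bayes machinery already exercised in the task-centric proof.
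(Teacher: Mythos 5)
Your proposal is correct and follows essentially the same route as the paper's own proof, matching it step for step: Jensen's inequality via joint convexity of $\kl$, change of measure between $\posterior$ and $\prior$ with the expectation swap justified by data-independence of the hyper-prior, the Berend--Kozma binomial comparison (\Cref{lemma:bionomial}) applied to the $M$ independent $[0,1]$-valued losses followed by Maurer's MGF bound (\Cref{lemma:Maurer}) with $t=M$, and finally Markov's inequality. Your remark in Step~3---that the uniform $1/M$ weighting of all samples is exactly what allows the single-binomial comparison and avoids the $\mu$-dependent blow-up of \Cref{lem:mgf-unbounded}---is precisely the observation the paper's proof rests on.
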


\begin{proof}
Based on Jensen's inequality we have:
\begin{align}
     M \kl \left(\sum_{i=1}^{n} \frac{m_i \her_i(Q_i)}{M} \Bigg | \sum_{i=1}^n \frac{m_i \er_i(Q_i)}{M} \right) \le \E_{f_i \sim Q_i, i \in[n]} \left[ M \kl \Bigl(\sum_{i=1}^{n} \frac{m_i \her_i(f_i)}{M} \Bigg | \sum_{i=1}^n \frac{m_i \er_i(f_i)}{M} \Bigr) \right] \label{jensen_kl_w_app}
\end{align}
And by definition of $\er_i$ and $\her_i$ we get
\begin{align}
    \E_{f_i \sim Q_i} \left[ M \kl \Bigl(\sum_{i=1}^{n} \frac{m_i \her_i(f_i)}{M} \Bigg | \sum_{i=1}^n \frac{m_i \er_i(f_i)}{M} \Bigr) \right] = \E_{f_i \sim Q_i} \left[ M \kl \Bigl(\frac{1}{M} \sum_{i=1}^{n} \sum_{j=1}^{m_i} \ell(f_i, z_{i, j}) \Bigg | \er_w \Bigr) \right]
\end{align}
And by change of measures we get
\begin{align}
    \E_{P\sim \q, f_i \sim Q_i} \Bigg[ M \kl \Bigg(\frac{1}{M} \sum_{i=1}^{n} \sum_{j=1}^{m_i} \ell(f_i, z_{i, j}) \Bigg |& \er_w \Bigg) \Bigg]
     - \KL(\posterior \| \prior) \notag
    \\& \le \log \E_{P\sim \p, f_i \sim P} \Bigl[ e^{M \kl (\frac{1}{M} \sum_{i=1}^{n} \sum_{j=1}^{m_i} \ell(f_i, z_{i, j}) \Bigg | \er_w )} \Bigr] \notag \label{jensen_kl_app_w}
\end{align}
Additionally, we have
\begin{align}
    \E_{z_{i, j}} \Big[\frac{1}{M} \sum_{i=1}^{n} \sum_{j=1}^{m_i} \ell(f_i, z_{i, j}) \Big] = \er_w
\end{align}
Therefore, Based on  Lemma~\eqref{lemma:bionomial} if we define a binomial random variable $Y \sim B(M, \er_w)$, we can replace the loss of samples of all tasks as in
\begin{align}
& \E_{S_i}\E_{f_i \sim P} \Bigl[ e^{M \kl (\frac{1}{M} \sum_{i=1}^{n} \sum_{j=1}^{m_i} \ell(f_i, z_{i, j}) | \er_w )} \Bigr] \le \E_{Y} \Bigl[ e^{M \kl(\frac{Y}{M} | \er_w)} \Bigr]
\end{align}
And by Lemma~\eqref{lemma:Maurer} we have
\begin{align}
     \E_{Y} \Bigl[ e^{M \kl(\frac{Y}{M} | \er_w)} \Bigr]  \le 2 \sqrt{M}
\end{align}
Therefore,
\begin{align}
    & \E_{S_1, \dots, S_n}\E_{P\sim \p} \E_{f_i \sim P} \Bigl[ e^{M \kl (\frac{1}{M} \sum_{i=1}^{n} \sum_{j=1}^{m_i} \ell(f_i, z_{i, j}) | \er_w )} \Bigr]  \le 2 \sqrt{M}  \label{eq:MGF_upperbound_kl_app_w}
\end{align}
And by Markov's inequality, with probability at least $1-\delta$ over sampling of $S_1, \dots, S_n$, we have:
\begin{align}
    \log \E_{P\sim \p} \E_{f_i \sim P} \Bigl[ e^{M \kl (\frac{1}{M} \sum_{i=1}^{n} \sum_{j=1}^{m_i} \ell(f_i, z_{i, j}) | \er_w )} \Bigr] \le \log \frac{2 \sqrt{M}}{\delta}\label{eq:markov_kl_app_w}
\end{align}
By combining \eqref{jensen_kl_w_app}, \eqref{jensen_kl_app_w} and \eqref{eq:markov_kl_app_w} we get
\begin{align}
    M \kl \left(\her_w \Bigg | \er_w \right) \le \KL(\posterior\|\prior) + \log \frac{2 \sqrt{M}}{\delta}
\end{align}
\end{proof}

\begin{theorem}[Theorem~\ref{thm:unbalancedMTL}, Equation~\eqref{thm:unbalancedMTL_catoni}]  \label{thm:sample_catoni_app}
For any fixed hyper-prior  
$\p$, any $\delta>0$, and any $\lambda>0$, it holds with probability at least $1 - \delta$ over the 
sampling of the training datasets that for all 
hyper-posterior functions $\q$ and all posteriors $Q_1, \dots, Q_n$
\begin{align}
    \frac{-M}{\lambda}   \log (1 - \erS + \erS e^{\frac{-\lambda}{M}}) \le \herS + \frac{\KL(\posterior\|\prior) + \log(\frac{1}{\delta})}{\lambda} 
    \quad&\text{sample-centric Catoni-bound}
            \end{align}
\end{theorem}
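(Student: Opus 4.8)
The plan is to prove the sample-centric Catoni-bound \eqref{thm:unbalancedMTL_catoni} following the same blueprint as the sample-centric $\kl$-bound in Theorem~\ref{thm:sample_kl_app}, but replacing the $\kl$-based moment generating function with a Catoni-style (linear-in-$\lambda$) one. The key observation is that in the sample-centric setting, all $M$ samples are pooled and treated uniformly, so the problem essentially reduces to a single-task Catoni bound with $M$ effective samples. First, I would introduce the quantity $\frac{1}{M}\sum_{i=1}^n\sum_{j=1}^{m_i}\ell(f_i,z_{i,j})$, whose expectation over the samples is exactly $\erS(f_1,\dots,f_n) = \sum_i \frac{m_i}{M}\er_i(f_i)$. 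Using the notation $\Phi_a(p) = -\frac{1}{a}\log(1-p+pe^{-a})$ from Lemma~\ref{lemma:MGF_catoni}, the target inequality is equivalent to $\frac{\lambda}{M}\Phi_{\lambda/M}(\erS) \le \herS \cdot \frac{\lambda}{M} \cdot \text{(appropriately scaled)} + \dots$, so I would phrase everything in terms of $\Phi$.

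Concretely, the steps in order are: (1) Apply Jensen's inequality to pull the expectation over $f_i\sim Q_i$ outside, using convexity of $\Phi_{\lambda/M}$ for $\lambda>0$, to bound $\Phi_{\lambda/M}(\erS(Q_1,\dots,Q_n)) - \herS(Q_1,\dots,Q_n)$ by the corresponding expression averaged over sampled models. (2) Apply the change-of-measure inequality between $\posterior$ and $\prior$, exploiting that the prior is data-independent, to obtain an upper bound of the form $\frac{1}{\lambda}\KL(\posterior\|\prior) + \frac{1}{\lambda}\log\E_{P\sim\p}\E_{f_i\sim P}[e^{\lambda[\Phi_{\lambda/M}(\erS)-\herS]}]$. (3) Bound the resulting MGF. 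Here I would use a single-parameter specialization of Lemma~\ref{lemma:MGF_catoni}: treating all $M$ pooled Bernoulli-type losses as a single block with parameter $a=\lambda/M$ and effective count $M$, the calculation $\log\E[e^{-\lambda\herS}] = M\log(1-\erS+\erS e^{-\lambda/M}) = -\lambda\Phi_{\lambda/M}(\erS)$ shows the MGF equals $1$ after multiplying through by $e^{\lambda\Phi_{\lambda/M}(\erS)}$. (4) Apply Markov's inequality to get that with probability at least $1-\delta$ the log-MGF is at most $\log\frac{1}{\delta}$, then rearrange.

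The MGF bound in step (3) is the crux, and it is actually cleaner than the task-centric case because there is no inverse-sample-size weighting: every sample contributes with the same exponent $\lambda/M$. I would verify the pooled computation directly rather than invoking Lemma~\ref{lemma:MGF_catoni} with $n$ distinct means, since all samples share the common denominator $M$ and the per-sample exponential factorizes. Specifically, $\E[e^{-\frac{\lambda}{M}\sum_{i,j}\ell(f_i,z_{i,j})}] = \prod_{i=1}^n\prod_{j=1}^{m_i}\E[e^{-\frac{\lambda}{M}\ell(f_i,z_{i,j})}] = \prod_i(1-\er_i+\er_i e^{-\lambda/M})^{m_i}$, whose logarithm is $\sum_i m_i\log(1-\er_i+\er_i e^{-\lambda/M})$; the main obstacle is relating this sum back to a single $\Phi_{\lambda/M}(\erS)$ term, which is the point where the convexity/Jensen step must be invoked carefully, since in general $\sum_i m_i\log(1-\er_i+\er_i e^{-a}) \neq M\log(1-\erS+\erS e^{-a})$.

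I expect the main difficulty to lie precisely in this last point. The naive MGF factorization yields a per-task sum, but the desired bound is stated in terms of the single pooled risk $\erS$. The resolution is that one should \emph{not} factorize per task at the level of $\Phi$; instead, I would apply Lemma~\ref{lemma:bionomial} to collapse the $M$ pooled losses into a single binomial $Y\sim B(M,\erS)$ — exactly as in the $\kl$-bound proof — and then use the single-block version of the Catoni MGF computation (the $n=1$ case of Lemma~\ref{lemma:MGF_catoni} with count $M$ and mean $\erS$) to conclude that the MGF is bounded by $1$. This mirrors the sample-centric $\kl$-proof's use of Lemma~\ref{lemma:bionomial} with $Y\sim B(M,\erS)$, and it is what makes the sample-centric setting behave like a balanced single-task problem with $M$ samples, thereby avoiding the $m_{\min}$ pathology of Lemma~\ref{lem:mgf-unbounded}.
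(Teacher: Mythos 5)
Your proposal is correct and takes essentially the same route as the paper's proof of Theorem~\ref{thm:sample_catoni_app}: Jensen's inequality via convexity of $\Phi_{\lambda/M}$, change of measure between $\posterior$ and $\prior$, binomial domination via Lemma~\ref{lemma:bionomial} collapsing the $M$ pooled losses into a single $Y\sim B(M,\erS)$, the single-block ($n=1$, count $M$) Catoni MGF computation bounding the moment generating function by $1$, and finally Markov's inequality with rearrangement. Your diagnosis of the key subtlety---that the naive per-task factorization yields $\sum_i m_i\log(1-\er_i+\er_i e^{-\lambda/M})$ rather than $M\log(1-\erS+\erS e^{-\lambda/M})$, so one must dominate at the pooled-sample level before invoking the Catoni computation---is exactly the step the paper's proof carries out.
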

\begin{proof}
Based on Jensen's inequality we have:
\begin{align}
     \lambda [\Phi_{\frac{\lambda}{M}}( \sum_{i=1}^{n} \frac{m_i \er_i(Q_i)}{M}) -  \sum_{i=1}^{n} \frac{m_i \her_i(Q_i)}{M}] \le \E_{f_i \sim Q_i} \lambda [\Phi_{\frac{\lambda}{M}}( \sum_{i=1}^{n} \frac{m_i \er_i(f_i)}{M}) -  \sum_{i=1}^{n} \frac{m_i \her_i(f_i)}{M}] \label{jensen_catoni_app_w}
\end{align}
And by change of measures, and independence of priors, we get
\begin{align}
    \E_{P\sim \q, f_i \sim Q_i}\Bigl[  \lambda [\Phi_{\frac{\lambda}{M}}( \sum_{i=1}^{n} \frac{m_i \er_i(f_i)}{M}) - & \sum_{i=1}^{n} \frac{m_i \her_i(f_i)}{M}] \Bigr]
    - \KL(\posterior \| \prior) \notag
     \\& \le \log \E_{P\sim \p} \E_{f_i \sim P} \Bigl[ e^{\lambda [\Phi_{\frac{\lambda}{M}}( \sum_{i=1}^{n} \frac{m_i \er_i(f_i)}{M}) - \sum_{i=1}^{n} \frac{m_i \her_i(f_i)}{M}]} \Bigr] \label{eq:change_catoni_app_w}
\end{align}
Therefore, Based on  Lemma~\eqref{lemma:bionomial} if we define a binomial random variable $Y \sim B(M, \er_w)$, we can replace the loss of samples of all tasks as in
\begin{align}
& \E_{S_i \sim D_i^{m_i}}\E_{P\sim \p} \E_{f_i \sim P} \Bigl[ e^{\lambda [\Phi_{\frac{\lambda}{M}}( \sum_{i=1}^{n} \frac{m_i \er_i(f_i)}{M}) - \sum_{i=1}^{n} \frac{m_i \her_i(f_i)}{M}]} \Bigr] 
\\& \le \E_{Y} \Bigl[ e^{\E_{P\sim \p} \E_{f_i \sim P} \Bigl[ e^{\lambda (\Phi_{\frac{\lambda}{M}}( \er_w) - \frac{Y}{M})} \Bigr]} \Bigr] \le 1 \label{eq:MGF_upperbound_catoni_w}
\end{align}
Where the last inequality comes from applying Lemma~\eqref{lemma:MGF_catoni}. Therefore by Markov's inequality, with probability at least $1-\delta$ over sampling of $S_1, \dots, S_n$, we have:
\begin{align}
    \log \E_{P\sim \p} \E_{f_i \sim P} \Bigl[ e^{\lambda [\Phi_{\frac{\lambda}{M}}( \sum_{i=1}^{n} \frac{m_i \er_i(f_i)}{M}) - \sum_{i=1}^{n} \frac{m_i \her_i(f_i)}{M}]} \Bigr]   \le\log \frac{1}{\delta} \label{eq:markov_catoni_w_app}
\end{align}
By combining \eqref{jensen_catoni_app_w}, \eqref{eq:change_catoni_app_w} and \eqref{eq:markov_catoni_w_app} we get that with probability at least $1-\delta$:
\begin{align}
   \lambda [\Phi_{\frac{\lambda}{M}}(\er_w) -  \her_w]  \le \KL(\posterior\|\prior) + \log \frac{1}{\delta}
\end{align}
Or equivalently,
\begin{align}
   \frac{-M}{\lambda} \log (1 - \er_w + \er_w e^{\frac{-\lambda}{M}}) \le \her_w + \frac{\KL(\posterior\|\prior) + \log(\frac{1}{\delta})}{\lambda} 
\end{align}
\end{proof}

\subsection{Meta-Learning} \label{app:meta}
In this section, we extend our results to meta-learning, \ie learning an algorithm to use for future tasks based on unbalanced training tasks where task environments have different sample sizes in $[1, m_{max}]$, using the framework introduced in \citep{zakerinia2024more}. Similar to multi-task learning, the previous works only focused to a task-centric view, however, it is also possible to study sample-centric meta-learning: when we care more about the performance of the tasks with more sample sizes, and can define the two following meta-learning objectives:
\begin{align}
    &\er_M^T(\rho) = \E_{A \sim \rho} \E_{(D, m, S)\sim\tau}\E_{z\sim D}\ell(z,A(S))
    \\&\er_M^S(\rho) = \E_{A \sim \rho} \E_{(D, m, S)\sim\tau}\E_{z\sim D} \frac{m}{m_{max}}\ell(z,A(S))  
\end{align}

where $\rho$ is a meta-posterior over the set of algorithms $\A$. Each algorithm in $A \in \A$ is a function that given a dataset would generate a posterior distribution $A(S)$. Additionally, each algorithm can learn a hyper-posterior $\qa$ similar to multi-task learning. Therefore, for each algorithm we would use the two distributions. 1) $\posterior(A)$: hyper-posterior $\qa$ specific to algorithm $A$ (data-dependent) and task posteriors $A(S_i)$, and 2) $\prior(A)$: hyper-prior $\pa$ specific to algorithm $A$ (data-independent) and priors $P \sim \pa$.

For the proofs, we define the following intermediate objectives, which are multi-task risks for $A(S_1), \dots, A(S_n)$:
\begin{align}
    &\er_T(\rho) = \E_{A \sim \rho} \frac{1}{n} \sum_{i=1}^{n} \E_{z_i\sim D_i}\ell(z_i,A(S_i))
    \\&\er_S(\rho) = \E_{A \sim \rho}  \sum_{i=1}^{n} \frac{m_i}{M}  \E_{z_i\sim D_i} \ell(z_i,A(S_i))  
\end{align}

We restate the notation introduced in the main body of the paper:
\begin{align}
    \kl^{-1}_{m_1, \dots, m_n}\Big(q_1, \dots, q_n \Big| b\Big) = \sup \Big\{\frac1n \sum_{i=1}^{n} p_i \Big| \sum_{i=1}^{n} m_i \kl(q_i | p_i) \le b, p_i \in [0, 1] \Big\}
\end{align}
and the Catoni counterpart
\begin{align}
    \Phi^{-1}_{m_1, \dots, m_n}&\Big(q_1, \dots, q_n \Big| b, \lambda_1, \dots, \lambda_n\Big) \\& = \sup \Big\{\frac1n \sum_{i=1}^{n} p_i \Big| -\sum_{i=1}^{n} m_i \log (1 - p_i + p_i e^{\frac{-\lambda_i}{n m_i}}) \le \frac1n \sum_{i=1}^{n} \lambda_i q_i + b, p_i \in [0, 1] \Big\}
\end{align}

\subsubsection{Task-centric meta-learning}

\begin{theorem}[Theorem~\ref{thm:unbalanced-meta-kl}, Equation~\eqref{thm:meta_T_kl}]\label{thm:meta-task-kl-app}
For any fixed meta-prior $\pi$, and fixed set of $\prior(A)$, any $\delta>0$, and any $\lambda_i > 0$, it holds with probability at least $1 - \delta$ over the 
sampling of the training datasets that for all meta-posteriors, 
and hyper-posteriors $\qa$:
\begin{align}
    \er_M^T(\rho) &\le \kl^{-1}_{n}\left(\kl^{-1}_{m_1, \dots, m_n}\Big(\her_1(\rho), \dots, \her_n(\rho) \Big| C(\rho) + c_1\Big) \Bigg| \KL(\rho\|\pi)+\log \frac{4\sqrt{n}}{\delta}\right)
\end{align}
where $C(\rho) = \KL(\rho\|\pi) + \E_{A\sim\rho} [\KL(\posterior(A)\|\prior(A))] + \log \frac{2}{\delta}, \quad c_1 = \sum_{i=1}^{n} \log(2\sqrt{m_i})$
\end{theorem}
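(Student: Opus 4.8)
The plan is to prove the bound in two stages, mirroring the structure described in the discussion: first a \emph{multi-task-level} bound that controls the true per-task risks $\er_i(\rho)$ in terms of the observed empirical risks $\her_i(\rho)$, and then an \emph{environment-level} bound that controls the meta-risk $\er_M^T(\rho)$ in terms of the average true risk over the training tasks. The natural bridging quantity is the intermediate objective $\er_T(\rho)=\E_{A\sim\rho}\frac1n\sum_{i=1}^n\E_{z_i\sim D_i}\ell(z_i,A(S_i))$, which appears implicitly on both sides but is itself unobservable. I would allocate $\delta/2$ of the failure probability to each stage and combine them by a union bound at the end.

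For the first stage I would re-run the argument of \Cref{thm:fast-Task-kl_app}, lifted through the full meta-hierarchy. Concretely, by joint convexity of $\kl$ and Jensen's inequality I pass from $\sum_i m_i\kl(\her_i(\rho)\,|\,\er_i(\rho))$ to $\E_{A\sim\rho}\E_{f_i\sim A(S_i)}\big[\sum_i m_i\kl(\her_i(f_i)\,|\,\er_i(f_i))\big]$. I then apply the change-of-measure inequality over the meta-hierarchy, moving from $\rho$ (with the per-algorithm posteriors $\posterior(A)$) to the meta-prior $\pi$ (with the priors $\prior(A)$); by the chain rule for $\KL$ this produces exactly the complexity term $\KL(\rho\|\pi)+\E_{A\sim\rho}[\KL(\posterior(A)\|\prior(A))]$. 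The resulting moment-generating function factorizes over tasks because the priors are data-independent, and each factor is bounded by $2\sqrt{m_i}$ using \Cref{lemma:bionomial} followed by \Cref{lemma:Maurer}, yielding the term $c_1=\sum_i\log(2\sqrt{m_i})$. Markov's inequality with budget $\delta/2$ then gives $\sum_i m_i\kl(\her_i(\rho)\,|\,\er_i(\rho))\le C(\rho)+c_1$, which by the definition of $\kl^{-1}_{m_1,\dots,m_n}$ implies $\er_T(\rho)\le \kl^{-1}_{m_1,\dots,m_n}(\her_1(\rho),\dots,\her_n(\rho)\mid C(\rho)+c_1)=:U$.

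For the second stage I would treat the $n$ training tasks as \iid draws from $\tau$ and apply a $\kl$-style PAC-Bayes bound at the level of algorithms, with meta-prior $\pi$ and meta-posterior $\rho$. Here the ``loss'' of an algorithm $A$ on task $i$ is the bounded quantity $\er_i(A(S_i))\in[0,1]$, whose expectation over $\tau$ equals the meta-risk of $A$. By joint convexity of $\kl$ and Jensen I again move the expectation over $\rho$ outward, and the change-of-measure step now produces only $\KL(\rho\|\pi)$. The MGF $\E[e^{n\kl(\cdot\,|\,\cdot)}]$ is bounded by $2\sqrt n$ by invoking \Cref{lemma:bionomial} to replace the sum of bounded task-losses by a binomial $Y\sim B(n,\cdot)$ and then \Cref{lemma:Maurer}; Markov's inequality with the remaining budget $\delta/2$ gives $n\,\kl(\er_T(\rho)\,|\,\er_M^T(\rho))\le \KL(\rho\|\pi)+\log\frac{4\sqrt n}{\delta}$, i.e.\ $\er_M^T(\rho)\le \kl^{-1}_n\big(\er_T(\rho)\mid \KL(\rho\|\pi)+\log\tfrac{4\sqrt n}{\delta}\big)$.

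Finally I would combine the two events by a union bound so both hold simultaneously with probability at least $1-\delta$, and chain the inequalities using the monotonicity of $\kl^{-1}_n(q\mid b)$ in its first argument $q$, which holds because for fixed $b$ the upper inverse $p^\ast(q)$ solving $n\kl(q\,|\,p^\ast)=b$ is increasing in $q$. Since $\er_T(\rho)\le U$ from the first stage, monotonicity gives $\er_M^T(\rho)\le \kl^{-1}_n(\er_T(\rho)\mid\cdot)\le \kl^{-1}_n(U\mid\cdot)$, which is precisely the claimed bound. The main obstacle I anticipate is the bookkeeping of the hierarchical change-of-measure in the first stage—ensuring the meta-level and multi-task-level divergences separate cleanly into $C(\rho)$ via the $\KL$ chain rule—together with the chaining step: because $\er_T(\rho)$ is unobservable, one must verify that it can simultaneously serve as the quantity bounded above by $U$ and as the monotone argument fed into the environment-level $\kl^{-1}_n$.
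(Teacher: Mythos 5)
Your proposal is correct and takes essentially the same route as the paper's own proof: a two-stage argument through the intermediate risk $\er_T(\rho)$, where the within-task stage re-runs the multi-task $\kl$-bound of \Cref{thm:fast-Task-kl_app} with the hierarchical pair ($A\sim\rho$ with $\posterior(A)$ versus $A\sim\pi$ with $\prior(A)$), the environment stage applies a Maurer-style single-task bound treating the $n$ tasks as \iid draws from $\tau$, and the two $\delta/2$-events are combined by a union bound and chained through $\kl^{-1}_n$. The only difference is that you spell out details the paper leaves implicit (the MGF factorization yielding $c_1$, and the monotonicity of $\kl^{-1}_n$ in its first argument needed to substitute the unobservable $\er_T(\rho)$ by its bound $U$), which is a faithful elaboration rather than a different approach.
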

\begin{proof}
    Proof of this theorem consists of two steps. 1) upper-bounding $\erT(\rho)$, $\her_1(\rho), \dots, \her_n(\rho)$, and upper-bounding $\erT_M(\rho)$ based on $\erT(\rho)$.
    For the first step, define the following distributions:

    \begin{itemize}
        \item Sample an algorithm $A \sim \rho$, sample $(P, f_1, \dots, f_n)$ from $\posterior(A)$.
        \item Sample an algorithm $A \sim \pi$, sample $(P, f_1, \dots, f_n)$ from $\prior(A)$.
    \end{itemize}
    
    By the same proof steps as \Cref{thm:fast-Task-kl_app} by replacing $\posterior$ and $\prior$ with these two distributions, we get 
    \begin{align}
    \E_{A \sim \rho}\Big[\sum_{i=1}^{n} m_i \kl(\her_i(A(S_i)) | \er_i(A(S_i)))\Big] \le C(\rho) + \log \frac{1}{\delta} + \sum_{i=1}^{n} \log(2 \sqrt{m_i})
    \end{align}
    Therefore, by Jensen's inequality
    \begin{align}
    \sum_{i=1}^{n} m_i \kl(\her_i(\rho) | \er_i(\rho)) \le C(\rho) + \log \frac{1}{\delta} + \sum_{i=1}^{n} \log(2 \sqrt{m_i})
    \end{align}    
    Hence, with probability at least $1 - \frac{\delta}{2}$:
    \begin{align}
    \erT(\rho) &\le \kl^{-1}_{m_1, \dots, m_n}\Big(\her_1(\rho), \dots, \her_n(\rho) \Big| C(\rho) + c_1\Big) \label{eq:meta-kl-task-ter-her_app}
    \end{align}
    
    For the second part, note that by an application of the single-task learning bound, where each sample is a task from $\tau$, we get that with probability at least $1 - \frac{\delta}{2}$:
    \begin{align}
    \er_M^T(\rho) &\le \kl^{-1}_{n}\left( \erT(\rho) \Big| \KL(\rho\|\pi)+\log \frac{4\sqrt{n}}{\delta}\right) \label{eq:meta-kl-task-er-ter_app}
\end{align}
By combining Equations \ref{eq:meta-kl-task-ter-her_app} and \ref{eq:meta-kl-task-er-ter_app} with a union bound, we complete the proof.
\end{proof}

By the same reasoning, applied to Catoni-style task-centric multi-task and single-task bounds we get the following theorem:
\begin{theorem}[task-centric meta-learning, Catoni-style bound] \label{thm:meta-task-catoni-app}
For any fixed meta-prior $\pi$, and fixed set of $\prior(A)$, any $\delta>0$, any $\lambda_M > 0$, and any $\lambda_i > 0, i \in [n]$ it holds with probability at least $1 - \delta$ over the 
sampling of the training datasets that for all meta-posteriors, 
and hyper-posteriors $\qa$:
\begin{align}
    \er_M^T(\rho) &\le \Phi^{-1}_{n}\left(\Phi^{-1}_{m_1, \dots, m_n}\Big(\her_1(\rho), \dots, \her_n(\rho) \Big| C(\rho), [\lambda_i] \Big) \Bigg| \KL(\rho\|\pi)+ \log \frac{2}{\delta}, \lambda_M \right)
\end{align}
where $C(\rho) = \KL(\rho\|\pi) + \E_{A\sim\rho} [\KL(\posterior(A)\|\prior(A))] + \log \frac{2}{\delta}$
\end{theorem}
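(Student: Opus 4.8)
The plan is to mirror, step for step, the two-part argument used to prove \Cref{thm:meta-task-kl-app}, replacing every $\kl$-inversion by its Catoni counterpart $\Phi^{-1}$ and the Maurer-type MGF control by \Cref{lemma:MGF_catoni}. Concretely, I would first bound the intermediate multi-task true risk $\erT(\rho)$ (the expected per-task risk of the posteriors $A(S_1),\dots,A(S_n)$) in terms of the empirical risks $\her_1(\rho),\dots,\her_n(\rho)$, then bound the meta-risk $\erT_M(\rho)$ in terms of $\erT(\rho)$ via an environment-level single-task bound, and finally compose the two inversions to obtain the nested expression in the statement.

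For the first (inner) step, I would reuse the proof of \Cref{thm:fast-Task-catoni_app} essentially verbatim, but with the two meta-level distributions defined as in the proof of \Cref{thm:meta-task-kl-app}: sample $A\sim\rho$ (resp.\ $A\sim\pi$) and then $(P,f_1,\dots,f_n)\sim\posterior(A)$ (resp.\ $\prior(A)$). The change-of-measure step then produces the combined divergence $\KL(\rho\|\pi)+\E_{A\sim\rho}[\KL(\posterior(A)\|\prior(A))]$ by the chain rule (the meta-level analogue of \Cref{lemma:KL_app}), while the inner MGF is still controlled task-by-task by \Cref{lemma:MGF_catoni} because each $\prior(A)$ is data-independent. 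After Jensen's inequality (using convexity of $\tfrac{\lambda_i}{n}\Phi_{\lambda_i/(nm_i)}$) this yields, with probability at least $1-\tfrac{\delta}{2}$,
\[
  -\sum_{i=1}^n m_i\log\!\bigl(1-\er_i(\rho)+\er_i(\rho)\,e^{-\lambda_i/(nm_i)}\bigr)\ \le\ \tfrac1n\sum_{i=1}^n\lambda_i\,\her_i(\rho)+C(\rho),
\]
which is exactly the constraint defining $\Phi^{-1}_{m_1,\dots,m_n}$, so that $\erT(\rho)\le\Phi^{-1}_{m_1,\dots,m_n}\bigl(\her_1(\rho),\dots,\her_n(\rho)\mid C(\rho),[\lambda_i]\bigr)$.

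For the second (outer) step, I would view the $n$ observed tasks as $n$ \iid samples from the environment $\tau$, with per-task ``loss'' equal to the bounded quantity $\er_i(A(S_i))\in[0,1]$. Applying a single-task Catoni PAC-Bayes bound at this level (the $n$-sample specialization of \Cref{thm:sample_catoni_app}, with meta-prior $\pi$ and meta-posterior $\rho$) gives, with probability at least $1-\tfrac{\delta}{2}$, a Catoni constraint relating $\erT_M(\rho)$ to $\erT(\rho)$ with complexity $\KL(\rho\|\pi)+\log\frac{2}{\delta}$, \ie $\erT_M(\rho)\le\Phi^{-1}_n\bigl(\erT(\rho)\mid \KL(\rho\|\pi)+\log\frac{2}{\delta},\lambda_M\bigr)$. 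A union bound makes both events hold simultaneously with probability at least $1-\delta$.

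The last step, and the main point requiring care, is the composition: I would substitute the inner upper bound on $\erT(\rho)$ into the outer one, which is legitimate only because $\Phi^{-1}_n(\,\cdot\mid b,\lambda_M)$ is nondecreasing in its first argument. This monotonicity holds since increasing the first argument $q$ only enlarges the feasible set $\{p:-n\log(1-p+p\,e^{-\lambda_M/n})\le\lambda_M q+b\}$ over which the supremum is taken, so the supremum cannot decrease. I expect the bookkeeping of constants to be the only other delicate part: one must track that $C(\rho)$ already absorbs one copy of $\KL(\rho\|\pi)$ and one $\log\frac{2}{\delta}$ from the inner step, while the outer step contributes a second $\KL(\rho\|\pi)+\log\frac{2}{\delta}$, and that (unlike the $\kl$-version, which carries $\log\sqrt{\cdot}$ terms from \Cref{lemma:Maurer}) the Catoni bounds hold only for the fixed multipliers $\lambda_M$ and $\lambda_1,\dots,\lambda_n$, with no in-probability optimization over them.
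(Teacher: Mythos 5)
Your proposal is correct and is exactly the argument the paper intends: it states this theorem with only the remark ``by the same reasoning'' as \Cref{thm:meta-task-kl-app}, and you supply precisely that reasoning---the meta-level change of measure yielding $C(\rho)$ via the chain rule, the inner MGF control by \Cref{lemma:MGF_catoni}, the environment-level Catoni bound with $n$ \iid tasks, and the union bound---with correct bookkeeping of the $\log\frac{2}{\delta}$ terms and the fixed multipliers $\lambda_1,\dots,\lambda_n,\lambda_M$. Your explicit verification that $\Phi^{-1}_n(\,\cdot\mid b,\lambda_M)$ is nondecreasing in its first argument, which justifies composing the two inversions, is a point the paper leaves implicit and is a welcome addition.
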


\subsubsection{Sample-centric meta-learning}

\begin{theorem}[Theorem~\ref{thm:unbalanced-meta-kl}, Equation~\eqref{thm:meta_S_kl}] \label{thm:meta-sample-kl-app}
For any fixed meta-prior $\pi$, and fixed set of $\prior(A)$, any $\delta>0$, it holds with probability at least $1 - \delta$ over the 
sampling of the training datasets that for all meta-posteriors, 
and hyper-posteriors $\qa$:
\begin{align}
     \er_M^S(\rho) &\le \kl^{-1}_{n}\left(\frac{M}{n m_{max}} \kl^{-1}_{M}\Big(\herS(\rho) \Big| C(\rho) + c_2 \Big) \Bigg| \KL(\rho\|\pi)+\log \frac{4\sqrt{n}}{\delta}\right) 
\end{align}
where $C(\rho) = \KL(\rho\|\pi) + \E_{A\sim\rho} [\KL(\posterior(A)\|\prior(A))] + \log \frac{2}{\delta}$, and $c_2 = \log(2\sqrt{M})$.
\end{theorem}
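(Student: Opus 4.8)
The plan is to mirror the two-step structure of the task-centric proof in \Cref{thm:meta-task-kl-app}, replacing each task-centric quantity by its sample-centric counterpart. The first step controls the multi-task generalization gap \emph{among} the training tasks, and the second step lifts that guarantee to the environment level by treating each training task as a single data point drawn from $\tau$.

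First I would establish the within-training-tasks bound by rerunning the proof of \Cref{thm:sample_kl_app} essentially verbatim, but with the two coupled distributions replaced by their meta-level versions: (i) sample $A\sim\rho$ and then $(P,f_1,\dots,f_n)\sim\posterior(A)$, and (ii) sample $A\sim\pi$ and then $(P,f_1,\dots,f_n)\sim\prior(A)$. The change-of-measure step now produces the KL divergence between these two distributions, which by the decomposition of \Cref{lemma:KL_app} equals $\KL(\rho\|\pi)+\E_{A\sim\rho}[\KL(\posterior(A)\|\prior(A))]$. Since the inner moment-generating function is still bounded by $2\sqrt{M}$ (via \Cref{lemma:bionomial} and \Cref{lemma:Maurer} with $Y\sim B(M,\erS)$), a Markov step at confidence $1-\tfrac{\delta}{2}$ followed by Jensen's inequality to pull $\E_{A\sim\rho}$ inside the $\kl$ yields $M\,\kl(\herS(\rho)\,|\,\erS(\rho))\le C(\rho)+c_2$, \ie $\erS(\rho)\le \kl^{-1}_{M}(\herS(\rho)\,|\,C(\rho)+c_2)$.

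The second step is an environment-level single-task $\kl$-bound in which the $n$ training tasks play the role of $n$ \iid samples. The crucial choice is the per-task loss: taking $\tfrac{m}{m_{max}}\er(A(S))$, which lies in $[0,1]$ because $m\le m_{max}$ and $\ell\in[0,1]$, makes its environment expectation exactly $\er_M^S(\rho)$ while its empirical average over the training tasks equals $\tfrac1n\sum_i \tfrac{m_i}{m_{max}}\er_i(A(S_i))=\tfrac{M}{n m_{max}}\erS(\rho)$. This is precisely where the factor $\tfrac{M}{n m_{max}}$ comes from: it converts the $\tfrac{m_i}{M}$ normalization used inside the training tasks into the $\tfrac{m}{m_{max}}$ weighting demanded by $\er_M^S$. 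Applying the single-task bound at confidence $1-\tfrac{\delta}{2}$ then gives $\er_M^S(\rho)\le \kl^{-1}_{n}\big(\tfrac{M}{n m_{max}}\erS(\rho)\,\big|\,\KL(\rho\|\pi)+\log\tfrac{4\sqrt{n}}{\delta}\big)$.

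Finally I would combine the two displays. Because $\kl^{-1}_{n}(\cdot\,|\,b)$ is monotonically nondecreasing in its first argument and $\tfrac{M}{n m_{max}}>0$, I may substitute the step-one upper bound on $\erS(\rho)$ into the step-two expression; a union bound over the two $1-\tfrac{\delta}{2}$ events then produces the claimed inequality. The main obstacle I anticipate is bookkeeping rather than analysis: I must verify that the per-task loss stays in $[0,1]$ (so the outer $\kl$-bound is legitimate), that $\tfrac{M}{n m_{max}}\erS(\rho)\le 1$ (which follows from $M\le n m_{max}$), and that the monotonicity of $\kl^{-1}$ genuinely justifies the substitution — these are exactly the points where the sample-centric weighting departs from the task-centric template.
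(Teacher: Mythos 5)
Your proposal is correct and follows essentially the same two-step route as the paper's proof: rerunning Theorem~\ref{thm:sample_kl_app} with the coupled meta-level distributions to get $\erS(\rho)\le\kl^{-1}_{M}(\herS(\rho)\,|\,C(\rho)+c_2)$, then applying the environment-level single-task $\kl$-bound at confidence $1-\tfrac{\delta}{2}$ and combining via monotonicity of $\kl^{-1}_n$ and a union bound. You additionally make explicit two points the paper leaves implicit --- the choice of per-task loss $\tfrac{m}{m_{max}}\er(A(S))\in[0,1]$ that produces the $\tfrac{M}{nm_{max}}$ factor, and the monotonicity justifying the substitution --- and your bookkeeping of where $\log\tfrac{2}{\delta}$ versus $c_2$ enters is consistent with the theorem statement.
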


\begin{proof}
    Proof of this theorem consists of two steps. 1) upper-bounding $\erS(\rho)$, based on $\herS(\rho)$, and upper-bounding $\erS_M(\rho)$ based on $\erS(\rho)$.
    Consider the same distributions as the proof of \cref{thm:meta-sample-kl-app}. With the same steps as the proof of \Cref{thm:sample_kl_app} using these distributions we get:
    \begin{align}
    \kl (\herS(\rho)  | \erS(\rho) ) \le \frac{C(\rho) + \log \frac{2 \sqrt{M}}{\delta}}{M}
    \end{align}
    Therefore,
    \begin{align}
     \frac{M}{n m_{max}} \erS(\rho) &\le \frac{M}{n m_{max}} \kl^{-1}_{M}\Big(\herS(\rho) \Big|C(\rho) + c_2 \Big)
\end{align}
    and from Equations \eqref{eq:maurer} we get
    \begin{align}
     \er_M^S(\rho) &\le \kl^{-1}_{n}\left(\frac{M}{n m_{max}} \erS(\rho) | \KL(\rho\|\pi)+\log \frac{4\sqrt{n}}{\delta}\right) 
\end{align}
Combination of these inequality with a union bound proves the theorem.
\end{proof}

Analogously to above, using Catoni-style sample-centric multi-task and single-task bounds we get the following theorem:

\begin{theorem}[sample-centric meta-learning, Catoni-style bound] \label{thm:meta-sample-catoni-app}
For any fixed meta-prior $\pi$, and fixed set of $\prior(A)$, any $\delta>0$, any $\lambda > 0$, and any $\lambda_M > 0$ it holds with probability at least $1 - \delta$ over the 
sampling of the training datasets that for all meta-posteriors, 
and hyper-posteriors $\qa$:
\begin{align}
     \er_M^S(\rho) &\le \Phi^{-1}_{n}\left(\frac{M}{n m_{max}} \Phi^{-1}_{M}\Big(\herS(\rho) \Big| C(\rho), \lambda \Big) \Bigg| \KL(\rho\|\pi)+\log \frac{2}{\delta}, \lambda_M \right)
\end{align}
where $C(\rho) = \KL(\rho\|\pi) + \E_{A\sim\rho} [\KL(\posterior(A)\|\prior(A))] + \log \frac{2}{\delta}$.
\end{theorem}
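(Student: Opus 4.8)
The plan is to follow exactly the two-level template used for the $\kl$-style version in \cref{thm:meta-sample-kl-app}, replacing every $\kl$-style ingredient by its Catoni-style counterpart. As in that proof, I would introduce the two hierarchical distributions over tuples $(A,P,f_1,\dots,f_n)$: one generated by sampling $A\sim\rho$ and then $(P,f_1,\dots,f_n)\sim\posterior(A)$, the other by sampling $A\sim\pi$ and then $(P,f_1,\dots,f_n)\sim\prior(A)$. A single change-of-measure between these two distributions produces the complexity term $\KL(\rho\|\pi)+\E_{A\sim\rho}[\KL(\posterior(A)\|\prior(A))]$, i.e. the quantity $C(\rho)$ up to the $\log\frac{2}{\delta}$ confidence term. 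The argument then splits into an inner step that bounds the task-level risk $\erS(\rho)$ by the observable $\herS(\rho)$, and an outer step that bounds the environment-level risk $\er_M^S(\rho)$ by $\erS(\rho)$, each on a $1-\delta/2$ event, chained together by monotonicity.

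For the inner step I would rerun the proof of \cref{thm:sample_catoni_app} with the two hierarchical distributions above in place of $\posterior,\prior$. The moment-generating-function control is unchanged, since it relies only on \cref{lemma:bionomial} and \cref{lemma:MGF_catoni}, which act on the sample-level Bernoulli variables and are blind to the extra meta layer. With probability at least $1-\delta/2$ this yields, for the fixed $\lambda$,
\begin{align}
\frac{-M}{\lambda}\log\bigl(1-\erS(\rho)+\erS(\rho)\,e^{-\lambda/M}\bigr)\le \herS(\rho)+\frac{C(\rho)}{\lambda}.
\end{align}
Multiplying through by $\lambda$ turns this into precisely the defining constraint of the $n=1$ instance $\Phi^{-1}_{M}(\cdot\mid C(\rho),\lambda)$, so that $\erS(\rho)\le\Phi^{-1}_{M}\bigl(\herS(\rho)\mid C(\rho),\lambda\bigr)$. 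Note that the Catoni bound on the MGF is $\le 1$, so—unlike the $\kl$-style version—there is no additive $\log(2\sqrt{M})$ term here.

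For the outer step I would treat each of the $n$ training tasks as one i.i.d.\ sample from $\tau$ and apply the single-task Catoni bound with ``model'' $A$, meta-posterior $\rho$, meta-prior $\pi$, and per-sample loss $\frac{m}{m_{max}}\,\er(A(S))$. The normalization by $m_{max}$ is exactly what keeps this loss in $[0,1]$, so \cref{lemma:bionomial} and \cref{lemma:MGF_catoni} (with a single group) apply verbatim; because the Catoni MGF is bounded by $1$ rather than $2\sqrt{n}$, the confidence term here is only $\log\frac{2}{\delta}$, in contrast to the $\log\frac{4\sqrt{n}}{\delta}$ of the $\kl$-style version. Since the empirical mean of this loss over the training tasks equals $\frac1n\sum_i\frac{m_i}{m_{max}}\er_i(A(S_i))=\frac{M}{n\,m_{max}}\erS$, on a second $1-\delta/2$ event we obtain
\begin{align}
\er_M^S(\rho)\le\Phi^{-1}_{n}\!\Bigl(\tfrac{M}{n\,m_{max}}\erS(\rho)\,\Big|\,\KL(\rho\|\pi)+\log\tfrac{2}{\delta},\,\lambda_M\Bigr).
\end{align}

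Finally I would combine the two displays. Since $\Phi^{-1}_{n}(q\mid b,\lambda_M)$ is non-decreasing in its first argument $q$ (increasing $q$ only relaxes the defining constraint and enlarges the feasible set over which the supremum is taken), the inner bound $\erS(\rho)\le\Phi^{-1}_{M}(\herS(\rho)\mid C(\rho),\lambda)$ may be substituted inside the outer $\Phi^{-1}_n$, and a union bound over the two $1-\delta/2$ events gives the claimed inequality with total confidence $1-\delta$. The step I expect to require the most care is the constant bookkeeping across the two levels—tracking how the two $\log\frac{2}{\delta}$ contributions, one absorbed into $C(\rho)$ and one appearing in the outer confidence term, assemble into overall confidence $\delta$, and confirming that the Catoni MGF bound of $1$ cleanly removes the $\sqrt{M}$ and $\sqrt{n}$ logarithmic factors present in the $\kl$-style bound. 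The monotonicity needed for the plug-in and the boundedness supplied by the $m_{max}$ normalization are routine but must be stated explicitly.
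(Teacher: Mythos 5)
Your proposal is correct and matches the paper's intended argument: the paper proves this theorem only by remarking that it follows ``analogously'' to the $\kl$-style sample-centric case (\cref{thm:meta-sample-kl-app}), substituting the Catoni-style multi-task bound (\cref{thm:sample_catoni_app}) for the inner step and a Catoni-style single-task environment-level bound for the outer step, which is exactly the two-level scheme you carry out with the same hierarchical change of measure over $(A,P,f_1,\dots,f_n)$. Your constant bookkeeping also agrees with the stated theorem --- one $\log\frac{2}{\delta}$ absorbed into $C(\rho)$ and one in the outer term $\KL(\rho\|\pi)+\log\frac{2}{\delta}$, the absence of the $2\sqrt{M}$ and $2\sqrt{n}$ factors because the Catoni MGF bound of \cref{lemma:MGF_catoni} is $1$, the $m_{\text{max}}$ normalization keeping the task-level loss in $[0,1]$, and the monotonicity of $\Phi^{-1}_{n}$ in its first argument justifying the plug-in.
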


\section{Experiments} \label{app:experiments}

\subsection{Simulation}

In this section, we provide more information and intuition on the (geometric) behavior of the bounds in \cref{thm:fast-Task}. 
In Figure \ref{fig:numeric_inverse}, we illustrate the values of the bounds for a toy example for $n=2, m_1 = 250, m_2=150$, $\her_1 = 0.2, \her_2 = 0.2, \delta=0.05$, and $\KL = 10$. The shaded areas are indicators of the areas that each constraint holds, and the goal is to maximize $\frac{1}{2} (\er_1 + \er_2)$, \ie find a feasible point as far as possible in the diagonal direction with slope 45 degrees.

Similar to single-task learning, the $\kl$-bound is a two-sided bound, which limits the deviation between $\her$ to $\er$ and can also give a lower-bound to the true risk. It is parameter-free and can therefore be evaluated as is.
In contrast, Catoni bounds only provide an upper-bounds on the risk, which emerge as convex curves in the $(R_1,R_2)$
planes, parametrized by $n$ parameters (here: $\lambda_1,\lambda_2)$, which have to be chosen a priori (in a data-independent way) 
for the bound to hold. The purple, orange and yellow regions in the figure illustrate different choices.
For example, with $\lambda_1=\lambda_2=700$ (purple line), the guarantees from the Catoni-bound are worse than the 
ones provided by the $\kl$-bound. With $\lambda_1=\lambda_2=200$ (red curve), the resulting guarantees are better, 
though.
Since a priori good values for the $\lambda$s are not clear, this observation suggests that in practice, it is beneficial
to combine the constraints from both bounds by a union bound, such that the resulting guarantees are always at least 
as good as the better one of them. 

An interesting phenomenon emerges when doing so for $\lambda_1=300, \lambda_2=400$. 
The resulting bound value (yellow square) is in fact smaller than the minimum of the $\kl$-bound 
or the corresponding Catoni-bound individually, because the different geometric shapes of the 
constraint sets. While in this example the difference is quite small, we find this an interesting
observation that cannot occur in single-task learning (where the optimization problem is one-dimensional), 
and might be of independent interest.

\begin{figure}[h]%
    \centering
    \includegraphics[width=0.8\textwidth]{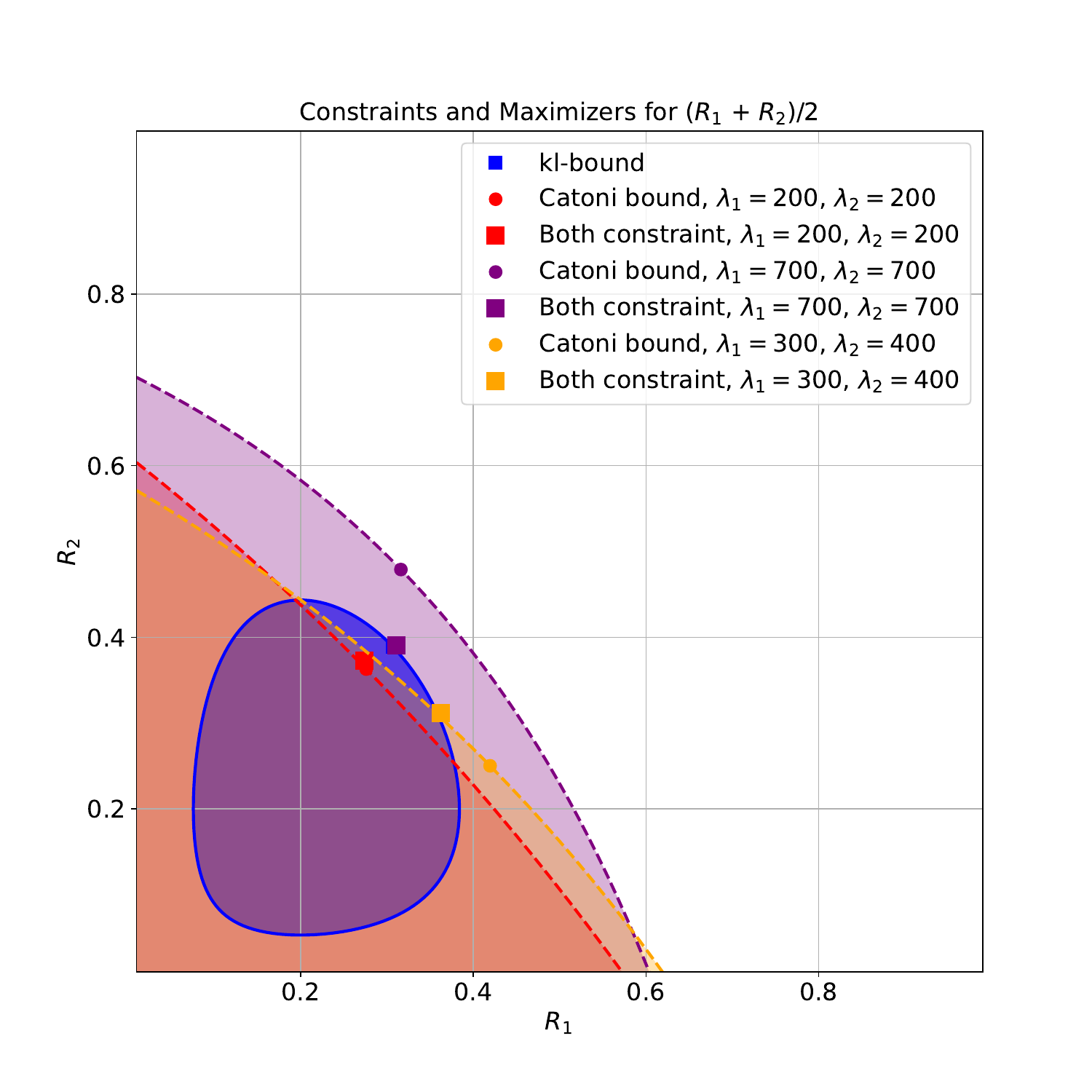}
    \caption{Illustration of the constraint and optimal value of bounds}
    \label{fig:numeric_inverse} 
\end{figure}

\subsection{MTL bound for linear models}
In this section we provide the details for our numeric experiments on 
linear multi-task classification.

We use the MDPR dataset~\citep{pentina2017unlabeled}, which consists of 953 
tasks. Each task's data is a binary classification task of 25-dimensional 
feature vectors, which we augment with an additional constant feature to 
simulate a bias term.
We set aside a random subset of 500 examples from each task as the test set and 
use the rest for training. The resulting training set sizes range between 102 
and 22530 samples (average 2450, harmonic average 859.7), making the setting 
clearly unbalanced. 
In order to influence the simplicity of the task, we create variants of the 
dataset in which for each task, $i$, an offset $\eta Y_i$ for $\eta\in[0,.1]$
is added to features $X_i$, where $Y_i$ is the vector of labels. 
Larger values of $\eta$ result in an easier classification task and consequently 
smaller empirical risk (and ultimately also expected risk).

As model class, we use (stochastic) linear classifiers, $f_i(x)=\sign(\langle x,w_i\rangle+b)$ with a Gaussian distribution 
of fixed variance over the weight vectors: $w_i\sim Q_i:=\mathcal{N}(\mu_i,\sigma\text{I}_{d\times d})$ with $\sigma=0.1$.
For training we perform logistic regression with biased Frobenius regularization~\citep{kienzle2006personalized}:
\begin{align}
    \min_{\mu_1,\dots,\mu_n,\psi}\quad \sum_{i=1}^n \frac{1}{|S_i|}\sum_{(x,y)\in S_i} 
    \E_{w\sim \mathcal{N}(\mu_i,\sigma\text{I})}
    \log(1 + \exp(-y w^\top x))
    + \lambda \|\mu_i - \psi\|^2
\end{align}
We implement this problem in \texttt{jax}. We determine the mean vectors $\mu_1,\dots,\mu_n$ by 25 steps of L-BFGS optimization with $\lambda=0.001$ 
using the \texttt{optax} package. To evaluate the stochastic 
classifiers we always sample a weight vector from the corresponding distribution. 
After each step, we update the regularization bias to its closed-form optimum, $\psi\gets \frac{1}{n} \sum_{i=1}^{n} \mu_i$.

\begin{table}[t]\centering
\caption{Task-centric MTL: numeric results (in \%, lower is better) for linear models on MDPR}\label{tab:MDPRsub-task} 
\begin{tabular}{ccccccc}
\toprule
simplicity & empirical risk & standard rate & \multicolumn{4}{c}{fast-rate bounds} \\
    $\eta$ & $\herT$        &      bound         & $\kl$-style & Catoni-style & with $\mmin$ & oracle \\
\midrule
{\nprounddigits{2}\np{0.000000}} & \res{0.213627}{0.000929}  & \res{0.387810}{0.000799}  & \res{0.380392}{0.000833}  & \res{0.385575}{0.001339}  & \res{0.709941}{0.000600}  & \res{0.374226}{0.000833} \\
{\nprounddigits{2}\np{0.020000}} & \res{0.199201}{0.001757}  & \res{0.376404}{0.001696}  & \res{0.368568}{0.001735}  & \res{0.372516}{0.002052}  & \res{0.705035}{0.001455}  & \res{0.362436}{0.001742} \\
{\nprounddigits{2}\np{0.040000}} & \res{0.136764}{0.000904}  & \res{0.324588}{0.001158}  & \res{0.310136}{0.001263}  & \res{0.307424}{0.001256}  & \res{0.672726}{0.001728}  & \res{0.304177}{0.001268} \\
{\nprounddigits{2}\np{0.060000}} & \res{0.064470}{0.001601}  & \res{0.258703}{0.001141}  & \res{0.227287}{0.001321}  & \res{0.226445}{0.001279}  & \res{0.604936}{0.002531}  & \res{0.221546}{0.001308} \\
{\nprounddigits{2}\np{0.080000}} & \res{0.023445}{0.001131}  & \res{0.209958}{0.002532}  & \res{0.163931}{0.002791}  & \res{0.166693}{0.002893}  & \res{0.515474}{0.007280}  & \res{0.158144}{0.002832} \\
{\nprounddigits{2}\np{0.100000}} & \res{0.010787}{0.002058}  & \res{0.187294}{0.002827}  & \res{0.135720}{0.002980}  & \res{0.135685}{0.003465}  & \res{0.454537}{0.006191}  & \res{0.129761}{0.003003} \\
\bottomrule
\end{tabular}
\end{table}

To evaluate the bounds, we use a hyper-prior $\p=\mathcal{N}(0,\text{I}_{d\times d})$ and a 
hyper-posterior, $\q=\mathcal{N}(\psi,\text{I})$, such that the KL-terms become 
\begin{align}
    \KL(\q\|\p) &= \frac{d}{2} \|\psi\|^2 
    \\
    \E_{P\sim \q}\KL(Q_i\|P) &= 
    \frac{d}{2}(\sigma^2 - 1 - \log\sigma^2) +  \E_{\psi'\sim \mathcal{N}(\psi,\text{I}_{d\times d})}\frac12\|\mu_i - \psi'\|^2
   \\ &= \frac{d}{2}(\sigma^2 - \log\sigma^2) + \frac12 \|\mu_i - \psi\|^2.
\end{align}
To numerically compute the non-explicit bounds, we fix $\delta=0.05$ and first compute the empirical risks, $\her_1\,\dots,\her_n$.
In the task-centric setting, we then treat the unknown risks $\er_1,\dots,\er_n$ as free optimization variables 
and numerically maximize $\erT = \frac{1}{n}\sum_i \er_i$ subject to the inequality constraint specified to the 
bound of interest. 
In the sample-centric setting, we compute the empirical sample risk, $\herS=\sum_i m_i \her_i$ and solve for 
the scalar quantity $\mathcal{R}$, again constrained by the respective bound. 

For the Catoni-style bounds, the optimal $\lambda$ value(s) are not clear a prior. 
Therefore, we instantiate these bounds with 11 values for $\lambda$ (where $\lambda_1=\dots=\lambda_n=\lambda$
in the task-centric case), exponentially spaced in $[10^{-2}\cdot nm_h, 10^{2}\cdot nm_h]$ and with confidence 
values $\delta=\frac{0.05}{11}$. We then combine the resulting inequalities by a union bound and solve the 
optimization problem subject to all of the constraints.
We found the \texttt{SLSQP} optimizer, which is available in \texttt{scipy.optimize}'s \texttt{minimize}'s 
routine to work reliably and efficiently for all of these setups.

Table~\ref{tab:MDPRsub-task} and \ref{tab:MDPRsub-sample} report the numerical results, 
and Figures~\ref{fig:MDPRsub-task} and \ref{fig:MDPRsub-sample} show the results graphically, 
which also include a plot of the 
differences between the new fast-rate bounds and the original standard-rate ones.
Additionally included is the value of the fast-rate \emph{oracle} bound, \ie the Catoni-bound 
if the optimal $\lambda$-value(s) were known.
In the task-centric case, one can see that $\kl$-style and Catoni-style bounds yield comparable 
values, slightly (for $\eta\leq 0.2$) to clearly (for $\eta\geq 0.4$) better than the 
standard-rate bounds. A small difference to the oracle bound remains, indicating that a 
more involved procedure for choosing the $\lambda$-values might be beneficial.
In the sample-centric case, the $\kl$-style bound achieves almost identical results to the 
oracle one. The Catoni-style bound is slightly looser for $\eta\leq 0.4$, but then catches
up to the other ones. 
Here, there is a clear improvement over the standard rate bound for all values of $\eta$.

\begin{table}[t]\centering
\caption{Sample-centric MTL: numeric results (in \%, lower is better) for linear models on MDPR}\label{tab:MDPRsub-sample} 
\begin{tabular}{cccccc}
\toprule
simplicity & empirical risk & standard rate & \multicolumn{3}{c}{fast-rate bounds} \\
    $\eta$ & $\herS$        &      bound         & $\kl$-style & Catoni-style & oracle \\
\midrule
{\nprounddigits{2}\np{0.000000}} & \res{0.152011}{0.000697} & \res{0.255189}{0.000622} & \res{0.235382}{0.000744} & \res{0.239180}{0.000692} & \res{0.235378}{0.000742}  \\
{\nprounddigits{2}\np{0.020000}} & \res{0.145755}{0.000759} & \res{0.250722}{0.000740} & \res{0.229642}{0.000873} & \res{0.234277}{0.000828} & \res{0.229637}{0.000874}  \\
{\nprounddigits{2}\np{0.040000}} & \res{0.111287}{0.001405} & \res{0.222546}{0.001458} & \res{0.193576}{0.001775} & \res{0.201217}{0.001966} & \res{0.193586}{0.001774}  \\
{\nprounddigits{2}\np{0.060000}} & \res{0.063446}{0.002532} & \res{0.178502}{0.002468} & \res{0.134907}{0.003361} & \res{0.136040}{0.003578} & \res{0.134902}{0.003362}  \\
{\nprounddigits{2}\np{0.080000}} & \res{0.027753}{0.001303} & \res{0.138236}{0.001792} & \res{0.079992}{0.002237} & \res{0.080303}{0.002617} & \res{0.080032}{0.002705}  \\
{\nprounddigits{2}\np{0.100000}} & \res{0.017270}{0.006484} & \res{0.121824}{0.006771} & \res{0.059197}{0.010701} & \res{0.060681}{0.009966} & \res{0.059271}{0.010633}  \\
\bottomrule
\end{tabular}
\end{table}

\subsection{MTL bound for neural networks}\label{app:bounds_nn} 
In this section, we provide the details of the numerical experiments on multi-task
learning with low-rank parametrized neural networks.~\footnote{Code: \href{https://github.com/hzakerinia/MTL}
{\url{https://github.com/hzakerinia/MTL}}} 
We use the multi-task framework of \citep{zakerinia2025deep}, but generalize it to the unbalanced case. For learning $n$ models $f_1, \dots, f_n \in \R^{d}$, they use $k$ random expansion matrices $G_1, \dots, G_k \in \R^{d \times l}$, to form matrix $G = [G_1 v_1, G_2 v_2, \cdots, G_k v_k] \in \R^{D\times k}$, to represent a subspace, and individual models are learned in the subspace $f_j = f_0 + G \alpha_j, \alpha_j \in \R^{k}$. They follow by quantizing the model and encoding all trainable parameters using arithmetic coding. 

For computing the bounds, for a subspace $G$, we would define a prior $P$ for a discrete set of models in $G$ as $P(f_j) = \frac1Z 2^{-\len(\alpha_j)}$, and the hyper-prior over priors (or equivalently subspaces) is $\p(G) = \frac1{\bar Z} 2^{-\len(v_1, \dots, v_k)} $. For the choice of  $Q_i = \delta(f_i)$ and $\q = \delta(G)$, we get $KL(\posterior \| \prior) = \len(G, f_1, \dots, f_n) \log 2$, and we can compute our new bounds. For experiments, we use two multi-task benchmarks: \emph{split-CIFAR10} and \emph{split-CIFAR100} \citep{cifar}, which samples are distributed between tasks such that each task has data for 3 out of 10 and 10 out of 100, respectively. 
We use a Vision Transformer model \citep{Vit} ($\sim 5.5$ million parameters) pretrained on ImageNet. 
For computing the numerical upper-bound we use Sequential Least Squares Programming (SLSQP), a gradient-based optimization algorithm that solves constrained nonlinear problems by iteratively approximating them with quadratic programming subproblems. 
For computing the Catoni bound, we choose equal $\lambda_i = \lambda$ for all tasks. For task-centric bound we choose $\lambda = c n m_h$, and for sample-centric we choose $\lambda = c M$ for $c \in \{0.5, 0.6, \dots, 1.5 \}$ by a union bound for values of $c$. The results are given in Table \ref{tab:MTL}. For both views, the improvements from slow-rates to our fast-rates are visible. Specifically, the fast-rate bounds of previous works \citep{guan2022fast, zakerinia2025deep} for balanced MTL would give at best the rate with $m_{min}$, which translates to even a worse result compared to slow-rate bounds. Additionally, the fast-rate behavior of more improvement when training error is small is visible between the easier task (split-CIFAR10) compared to the more challenging task (split-CIFAR100).

\end{document}